\numberwithin{equation}{section}
\theoremstyle{definition} \newtheorem{idealmmgan}{Definition}[section]
\theoremstyle{definition} \newtheorem{idealremark}[idealmmgan]{Remark}
\theoremstyle{definition} \newtheorem{idealremark2}[idealmmgan]{Remark}
\theoremstyle{plain} \newtheorem{optdim}[idealmmgan]{Theorem}
\theoremstyle{plain} \newtheorem{optdimcor}[idealmmgan]{Corollary}
\theoremstyle{definition} \newtheorem{kljsdef}[idealmmgan]{Definition}
\theoremstyle{plain} \newtheorem{kljsprop}[idealmmgan]{Proposition}
\theoremstyle{plain} \newtheorem{kljscor}[idealmmgan]{Corollary}
\theoremstyle{plain} \newtheorem{immsol}[idealmmgan]{Theorem}
\theoremstyle{plain} \newtheorem{immconv}[idealmmgan]{Theorem}
\theoremstyle{definition} \newtheorem{mmgan}[idealmmgan]{Definition}
\theoremstyle{plain} \newtheorem{optgen}[idealmmgan]{Proposition}
\theoremstyle{plain} \newtheorem{optgencor}[idealmmgan]{Corollary}
\theoremstyle{definition} \newtheorem{advdiv}{Definition}[section]
\theoremstyle{definition} \newtheorem{immadvdiv}[advdiv]{Example}
\theoremstyle{definition} \newtheorem{ipmadvdiv}[advdiv]{Example}
\theoremstyle{plain} \newtheorem{ipmprop}[advdiv]{Proposition}
\theoremstyle{definition} \newtheorem{w1}[advdiv]{Definition}
\theoremstyle{plain} \newtheorem{krduality}[advdiv]{Theorem}
\theoremstyle{plain} \newtheorem{kripmcor}[advdiv]{Corollary}
\theoremstyle{plain} \newtheorem{krdistcor}[advdiv]{Corollary}
\theoremstyle{definition} \newtheorem{tvdist}[advdiv]{Definition}
\theoremstyle{plain} \newtheorem{tvipmprop}[advdiv]{Proposition}
\theoremstyle{plain} \newtheorem{tvdistcor}[advdiv]{Corollary}
\theoremstyle{definition} \newtheorem{weaktopology}[advdiv]{Definition}
\theoremstyle{plain} \newtheorem{simplefailure}[advdiv]{Example}
\theoremstyle{plain} \newtheorem{wcts}[advdiv]{Theorem}
\theoremstyle{plain} \newtheorem{minemdist}[advdiv]{Corollary}
\theoremstyle{plain} \newtheorem{strengths}[advdiv]{Theorem}
\theoremstyle{plain} \newtheorem{princproc}[advdiv]{Theorem}
\theoremstyle{definition} 
\theoremstyle{plain} 
\theoremstyle{plain} 
\theoremstyle{plain} 
\theoremstyle{plain} 
\theoremstyle{plain} 
\theoremstyle{plain} 
\theoremstyle{definition} \newtheorem{nsgan}{Definition}[section]
\theoremstyle{definition} \newtheorem{game}[nsgan]{Definition}
\theoremstyle{definition} \newtheorem{nedef}[nsgan]{Definition}
\theoremstyle{definition} \newtheorem{zerosum}[nsgan]{Definition}
\theoremstyle{definition} \newtheorem{mixgame}[nsgan]{Definition}
\theoremstyle{definition} \newtheorem{mixnash}[nsgan]{Definition}
\theoremstyle{plain} 
\theoremstyle{definition} \newtheorem{fanmmrem}[nsgan]{Remark}
\theoremstyle{plain} \newtheorem{glicksberg}[nsgan]{Theorem}
\theoremstyle{definition} \newtheorem{nndist}[nsgan]{Definition}
\theoremstyle{definition} \newtheorem{nnrem}[nsgan]{Remark}
\theoremstyle{definition} \newtheorem{epsiloneq}[nsgan]{Definition}
\theoremstyle{plain} \newtheorem{arorapure}[nsgan]{Theorem}
\theoremstyle{definition} \newtheorem{cycles}[nsgan]{Example}
\theoremstyle{definition} \newtheorem{ctsgame}{Definition}[section]
\theoremstyle{plain} \newtheorem{glickprop1}[ctsgame]{Lemma}
\theoremstyle{definition} \newtheorem{glickdef1}[ctsgame]{Definition}
\theoremstyle{plain} \newtheorem{glickprop2}[ctsgame]{Lemma}
\theoremstyle{plain} \newtheorem{glickprop3}[ctsgame]{Lemma}
\theoremstyle{plain} \newtheorem{aroralemma1}[ctsgame]{Lemma}
\theoremstyle{plain} \newtheorem{aroralemma2}[ctsgame]{Lemma}
\theoremstyle{plain} \newtheorem{aroralemma3}[ctsgame]{Lemma}
\theoremstyle{plain} \newtheorem{arorathmrestated}[ctsgame]{Theorem}
\newcommand{\ev}[2]{\mathbb{E}_{#1\sim #2}}
\newcommand{\gdist}{p_G}
\newcommand{\rdist}{p_r}
\newcommand{\ndist}{p_z}
\newcommand{\eucl}[1]{\mathbb{R}^{#1}}
\newcommand{\val}[1]{V_{\text{{\normalfont #1}}}}
\newcommand{\ganobj}[3]{\ev{x}{\rdist}[#1(#2(x))]+\ev{z}{\ndist}[#1(1-#2(#3(z)))]}
\newcommand{\intoverofwrt}[3]{\int_{#1} #2\ \mathrm{d}#3}
\newcommand{\dive}[3]{d_{\text{{\normalfont #1}}}\left(#2 \| #3\right)}
\newcommand{\probspace}[1]{\text{{\normalfont Prob}}(#1)}
\newcommand{\ctsbdd}[1]{\mathcal{C}_b(#1)}
\newcommand{\ipm}[3]{\sup_{f\in #1}\biggl( \ev{x}{#2}[f(x)]-\ev{y}{#3}[f(y)] \biggr)}
\newcommand{\loss}[2]{J^{#1}_{\text{{\normalfont #2}}}}
\newcommand{\tvnorm}[1]{\lVert #1 \rVert_{\text{{\normalfont TV}}}}
\title{
	{Convergence Problems with Generative Adversarial Networks}\\
	{\large University of Oxford}\\
	{\includegraphics{oxfordlogo.png}}
}
\author{Candidate Number: 563967}
\date{}
\begin{document}
\begin{titlepage}
    \begin{center}
        \vspace*{1cm}
        
        \Huge
        \textbf{Convergence Problems with Generative Adversarial Networks (GANs)}
        
        \vspace{0.5cm}
        \LARGE
        
        \vspace{1.5cm}
        
        \textbf{S. A. Barnett} \\
        \textbf{MMathPhil Mathematics and Philosophy}
        
        \vfill
        
        A dissertation presented for\\
        CCD Dissertations on a Mathematical Topic
        
        \vspace{0.8cm}
        
        \includegraphics[width=0.4\textwidth]{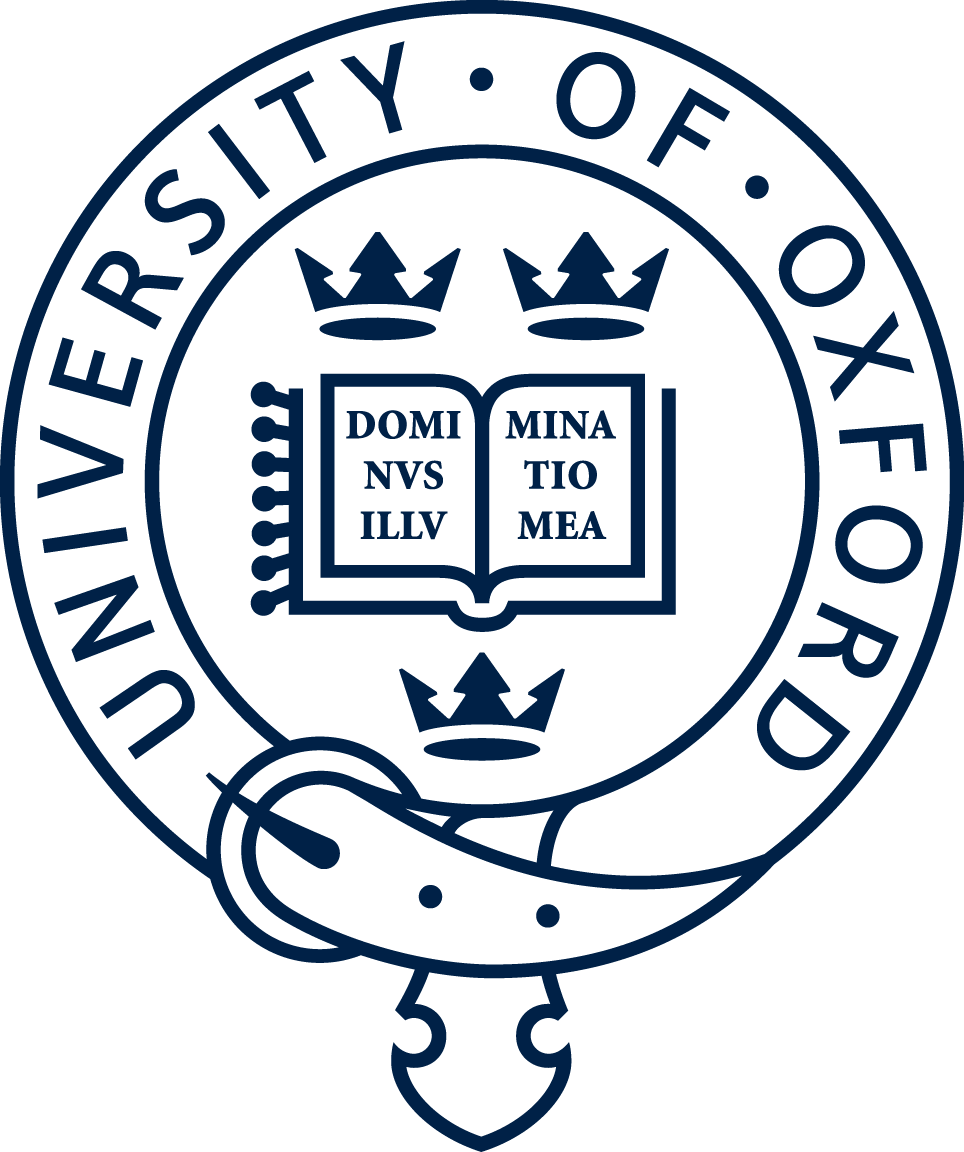}
        
        \Large
        Mathematical Institute\\
        University of Oxford\\
        Hilary Term 2018
        
    \end{center}
\end{titlepage}

\begin{abstract}
Generative adversarial networks (GANs) are a novel approach to generative modelling, a task whose goal it is to learn a distribution of real data points. They have often proved difficult to train: GANs are unlike many techniques in machine learning, in that they are best described as a two-player game between a discriminator and generator. This has yielded both unreliability in the training process, and a general lack of understanding as to \textit{how} GANs converge, and if so, to \textit{what}. The purpose of this dissertation is to provide an account of the theory of GANs suitable for the mathematician, highlighting both positive and negative results. This involves identifying the problems when training GANs, and how topological and game-theoretic perspectives of GANs have contributed to our understanding and improved our techniques in recent years.
\end{abstract}

\renewcommand{\abstractname}{Acknowledgements}
\begin{abstract}
This work was originally presented for the Part C Dissertations on a Mathematical Topic in the MMathPhil Mathematics and Philosophy course at the University of Oxford. As such, I would like to first and foremost thank my supervisor Varun Kanade for his constructive feedback and support in writing this. I would also like to thank my tutor at Worcester College, Richard Earl, for the support during this project and throughout my time at Oxford.
\end{abstract}

\tableofcontents

\chapter{Introduction}
Generative adversarial networks (GANs) were proposed by \citet{goodfellow2014generative} as a novel approach to generative modelling, a task whose goal it is to learn a distribution of real data points.

The term \textit{adversarial} refers to the use of two opposing neural networks in GANs: a \textit{discriminator} trained to tell real data samples apart from GAN-produced samples, and a \textit{generator} that seeks to fool the discriminator. As can be seen in Figure~\ref{fig:prog_samples}, GANs are capable of producing stunningly realistic samples.

\begin{figure}
	\includegraphics[width=\linewidth]{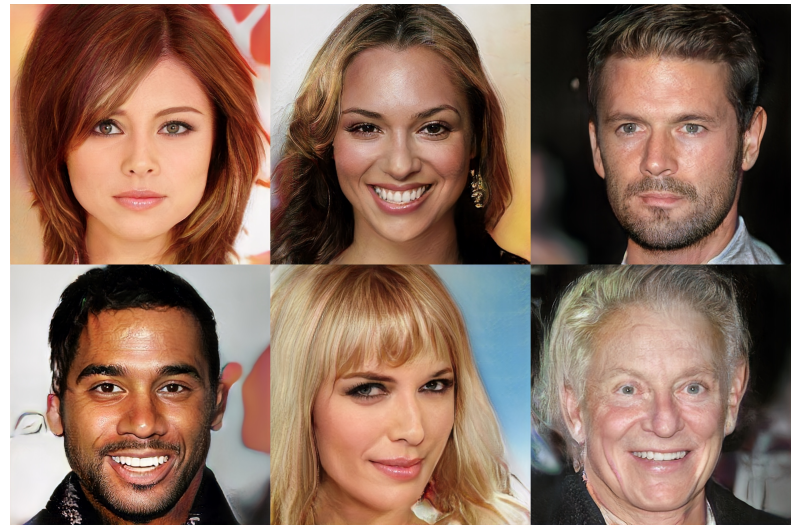}
	\caption{$1024\times1024$ images generated on the CelebA-HQ dataset. Image taken from \citet[Figure 5]{karras2017progressive}.}
	\label{fig:prog_samples}
\end{figure}

However, they have also proved difficult to train: GANs are unlike many techniques in machine learning, in that they are best described as a two-player game between a discriminator and generator. This has yielded both unreliability in the training process, and a general lack of understanding as to \textit{how} GANs converge, and if so, to \textit{what}.

The purpose of this dissertation is to provide an account of the theory of GANs suitable for the mathematician, highlighting both positive and negative results. Chapter 2 introduces GANs in their original formulation, in addition to some of the main problems encountered during the training process. The results of this chapter are largely due to \citet{goodfellow2014generative}, though I provide for the first time an explicit proof\footnote{The result has been claimed, though not proven, by a number of authors \citep{goodfellow2014generative, goodfellow2016nips, metz2016unrolled, arjovsky2017wasserstein}.} of Proposition~\ref{optgenfact}, and give a novel example (Corollary~\ref{cor:optgen}) of its negative consequences for the training of GANs. Chapter 3 gives a perspective of GANs as minimising some divergence between the generator distribution and the target distribution, arguing that certain variants of GANs may induce a more practically useful notion of divergence than that induced by the original GAN formulation. The results of this chapter are predominately due to \citet{arjovsky2017wasserstein}. Chapter 4 discusses GANs from the perspective of game theory, which allows for a broader modelling of GAN training dynamics than that of the previous chapter. The change in emphasis is inspired by \citet{fedus2017many}, with the main result for GANs coming from \citet{arora2017generalization}. Chapter 5 concludes this work.

\section{Notation}

\begin{itemize}
\item $\mu\otimes\nu$ - the product measure, for measures $\mu$ and $\nu$.
\item $\mathcal{B}(\mathcal{X})$ - the space of Borel-measurable subsets of $\mathcal{X}$.
\item $\mathrm{Prob}(\mathcal{X})$ - the space of probability measures on $\mathcal{X}$.
\item $\ctsbdd{\mathcal{X}}$ - the space of bounded, continuous functions from $\mathcal{X}$ to $\mathbb{R}$.
\item $[N]$ - the set of integers $\{1, ..., N\}$, where $N\in\mathbb{N}$.
\item $\mathcal{N}(\mu, \sigma^2)$ - the Gaussian distribution on $\mathbb{R}^n$ with mean $\mu$ and variance $\sigma^2$.
\end{itemize}

\chapter{GANs: Initial Results}
\section{Motivation}

The goal of generative modelling is to learn a particular distribution $\rdist$ of real data. The distribution $\rdist$ may be represented explicitly\footnote{For a taxonomy of such methods, see \citet[Sections 2.2-2.4]{goodfellow2016nips}.}, or \textit{implicitly} by providing a means to produce samples from the distribution.

The latter approach is taken by \textbf{generative adversarial networks (GANs)}.
We may view a GAN as a game between two players: a \textbf{generator} and a \textbf{discriminator}. The former is represented by a function $G$ inducing a distribution $\gdist$ (see below), and the latter by a function $D$. 

Consider a GAN trained on images of people. Given a fixed generator, the discriminator is trained to distinguish between images produced by the real dataset (labelled 1) and images produced by the generator (labelled 0). It does so by mapping each data point $x$ to a value in $[0,1]$. In some sense, $D(x)$ represents the probability that $x$ was a real sample rather than a generated one. 

The goal of the generator, consequently is to produce images that the discriminator will classify as being real, while the goal of the discriminator is to classify these same images as in fact being produced by the generator. The generator induces a distribution $\gdist$ by taking a sample $z$ from a (typically Gaussian) prior on input noise variables, $\ndist$, and mapping it to a synthetic data point $G(z)$. Therefore, if $z\sim\ndist$, then $\gdist$ is the distribution such that $G(z)\sim\gdist$. 

The generator aims to produce points such that $D(G(z))$ is closer to 1. In other words, the generator is trying to fool the discriminator. As we shall see, it will succeed at doing so when the images it produces arise from the same probability distribution as that of the real images of people.

\section{Minimax-GANs}

To formally specify a GAN, we need to give to the generator and the discriminator an objective that each seeks to optimise. Though we may give the discriminator and generator distinct objectives, it is common and often useful for there to be one objective that the generator seeks to \textbf{mini}mise, and that the discriminator seeks to \textbf{max}imise. In this case, which I shall refer to as the \textbf{minimax}\footnote{This minimax perspective will be elaborated in the context of game theory in Chapter 4.} case, we can represent the objective by a single \textbf{value function} $V(D, G)$.

\begin{idealmmgan}[Idealised MM-GANs \citep{goodfellow2014generative}]
Let $\mathcal{Z}\subseteq\eucl{\ell}$, $\mathcal{X}\subseteq\eucl{d}$ be ambient data spaces, let $\ndist$ be a prior distribution over $\mathcal{Z}$, and let $\rdist$ be the distribution of real data points over $\mathcal{X}$. The \textbf{idealised minimax GAN} (IMM-GAN) is the game specified by the objective
\begin{equation}\label{immobj}
\min_G\max_D\val{IMM}(D, G),
\end{equation}
where $G\colon\mathcal{Z}\to\mathcal{X}$, $D\colon\mathcal{X}\to[0,1]$, and
\begin{equation}
\val{IMM}(D, G) = \ganobj{\log}{D}{G}.
\end{equation}
\end{idealmmgan}
\begin{idealremark}
We observe that $\val{IMM}$ is the sum of two expected-value terms. The first of these captures the idea that the discriminator wants to mark with high probability points from the real data set. The second of these terms captures that the discriminator also wants to mark with low probability points from the synthetic data set. It also captures the contrasting goal of the generator, which is to fool the discriminator into marking synthetic points with high probability. 
\end{idealremark}
\begin{idealremark2}
This set-up is \textbf{idealised}, in that it searches for the optimal $D$ and $G$ over the space of \textit{all} functions with the correct domain and co-domain. 
\end{idealremark2}

Using this specification of the GAN game, we wish to show that the objective function is met precisely when $\rdist=\gdist$. If this is the case, then a GAN is successfully trained if and only if the generator distribution matches the target distribution: this is precisely what we require of GANs. To show this, we first give a result specifying the optimal discriminator, given a fixed generator.

\begin{optdim}[\citet{goodfellow2014generative}, Proposition 1]
Fix $G$ in the IMM-GAN game. The optimal discriminator $D$ as required by the maximisation term in~\eqref{immobj} is given by
\begin{equation}\label{optdisc}
D^*(x) = \frac{\rdist(x)}{\rdist(x) + \gdist(x)}.
\end{equation}
\end{optdim}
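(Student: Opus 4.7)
The plan is to reduce the optimisation over the functional $D$ to a pointwise optimisation of the integrand, and then solve a one-variable calculus problem. First I would rewrite the second expectation in $\val{IMM}(D, G)$ as an expectation over $\gdist$ rather than $\ndist$: since $G(z) \sim \gdist$ when $z \sim \ndist$, the law of the unconscious statistician gives
\begin{equation*}
\ev{z}{\ndist}[\log(1 - D(G(z)))] = \ev{x}{\gdist}[\log(1 - D(x))].
\end{equation*}
Assuming $\rdist$ and $\gdist$ admit densities with respect to a common dominating measure (e.g.\ Lebesgue measure on $\mathcal{X}$), the value function can then be written as a single integral
\begin{equation*}
\val{IMM}(D, G) = \intoverofwrt{\mathcal{X}}{\bigl[\rdist(x)\log(D(x)) + \gdist(x)\log(1 - D(x))\bigr]}{x}.
\end{equation*}

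Next I would observe that since the maximisation in~\eqref{immobj} ranges over \emph{all} measurable functions $D\colon\mathcal{X}\to[0,1]$, we are free to pick $D(x)$ independently at each point $x\in\mathcal{X}$. Hence the supremum of the integral is attained (up to a measure-zero set) by maximising the integrand pointwise; that is, by solving, for each fixed $x$, the problem $\max_{y\in[0,1]} f_x(y)$ where $f_x(y) = a\log y + b\log(1 - y)$ with $a = \rdist(x)$ and $b = \gdist(x)$.

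The pointwise problem is a routine calculus exercise: differentiating gives $f_x'(y) = a/y - b/(1-y)$, whose unique zero in $(0,1)$ is $y^* = a/(a+b)$, and $f_x''(y^*) < 0$ confirms this is the maximum. Substituting back yields $D^*(x) = \rdist(x)/(\rdist(x) + \gdist(x))$ as claimed.

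The main obstacle is not the calculus but the subtle measure-theoretic points: I should address the case where both $\rdist(x) = 0$ and $\gdist(x) = 0$, in which the pointwise objective is identically $-\infty$ (or undefined) and $D^*(x)$ may be chosen arbitrarily, and I should note that on the set where exactly one of $\rdist(x), \gdist(x)$ vanishes the formula still gives the correct boundary optimiser ($D^*(x)=0$ or $1$). I would also remark that the assumption of densities with respect to a common dominating measure is standard in this setting and is implicit in the original statement via the notation $\rdist(x), \gdist(x)$; without it the discriminator would need to be characterised in terms of the Radon--Nikodym derivative of $\rdist$ with respect to $\rdist + \gdist$, but the structure of the argument is unchanged.
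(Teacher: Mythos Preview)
Your proposal is correct and follows essentially the same approach as the paper: rewrite $\val{IMM}(D,G)$ as a single integral over $\mathcal{X}$ and then maximise the integrand $a\log y + b\log(1-y)$ pointwise to obtain $y^* = a/(a+b)$. Your treatment is in fact more careful than the paper's, which simply asserts the pointwise maximum and dismisses the $(a,b)=(0,0)$ case in one line, whereas you spell out the calculus and the measure-theoretic caveats explicitly.
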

\begin{proof}
Maximising $\val{IMM}$ with respect to $D$ is equivalent to maximising 
\begin{align*}
\val{IMM}(D, G) &= \ganobj{\log}{D}{G} \\
&= \intoverofwrt{\mathcal{X}}{\rdist(x)\log(D(x))}{x} +\intoverofwrt{\mathcal{Z}}{\ndist(z)\log(1-D(G(z)))}{z} \\
&= \intoverofwrt{\mathcal{X}}{\bigl[\rdist(x)\log(D(x))+\gdist(x)\log(1-D(x))\bigr]}{x}.
\end{align*}
For any $(a,b)\in\mathbb{R}^2\setminus\{(0,0)\}$, the function $y\mapsto a\log y + b\log(1-y)$ achieves its maximum in $[0,1]$ at $\frac{a}{a+b}$. Since the discriminator does not need to be defined outside of the values of $x$ for which $\rdist$ and $\gdist$ are non-zero, this concludes the proof.
\end{proof}
\begin{optdimcor}[\citet{goodfellow2014generative}]\label{optdisccor}
The IMM-GAN game is equivalent to finding
\begin{equation}
\min_G C(G),
\end{equation}
where $G\colon\mathcal{Z}\to\mathcal{X}$, and
\begin{equation}
C(G) = \ev{x}{\rdist}\left[\log\frac{\rdist(x)}{\rdist(x)+\gdist(x)}\right] +\ev{x}{\gdist}\left[\log\frac{\gdist(x)}{\rdist(x)+\gdist(x)}\right].
\end{equation}
\end{optdimcor}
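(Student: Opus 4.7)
The plan is to substitute the optimal discriminator $D^*$ from Theorem~\ref{optdim} into $\val{IMM}(D,G)$ and simplify, then appeal to a change of variables to rewrite the noise-side expectation as an expectation over $\gdist$. Since $D^*$ attains the inner maximum for any fixed $G$, the outer minimisation $\min_G \max_D \val{IMM}(D,G)$ reduces to $\min_G \val{IMM}(D^*, G)$, and it suffices to identify this latter quantity with $C(G)$.

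First I would write out
\[
\val{IMM}(D^*, G) = \ev{x}{\rdist}[\log D^*(x)] + \ev{z}{\ndist}[\log(1 - D^*(G(z)))],
\]
and use the explicit formula \eqref{optdisc} together with the observation that $1 - D^*(x) = \gdist(x)/(\rdist(x) + \gdist(x))$ to turn the first term directly into $\ev{x}{\rdist}\left[\log\frac{\rdist(x)}{\rdist(x)+\gdist(x)}\right]$.

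For the second term, the key step is to convert the expectation under the noise distribution $\ndist$ into an expectation under the induced pushforward distribution $\gdist$. Since $G_\#\ndist = \gdist$ by the definition of the generator distribution, the law of the unconscious statistician gives
\[
\ev{z}{\ndist}[\log(1 - D^*(G(z)))] = \ev{x}{\gdist}[\log(1 - D^*(x))] = \ev{x}{\gdist}\left[\log\frac{\gdist(x)}{\rdist(x)+\gdist(x)}\right].
\]
Summing the two terms then yields $C(G)$, so that $\max_D \val{IMM}(D,G) = C(G)$ and the minimax problem \eqref{immobj} collapses to $\min_G C(G)$, as claimed.

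The calculation is essentially routine, so the main point requiring care is the change-of-variables step: one must note that the expression $\log(1 - D^*(x))$ is well-defined and integrable on the support of $\gdist$ (which follows from the formula for $D^*$ and the remark in the proof of Theorem~\ref{optdim} that $D^*$ need only be specified where $\rdist + \gdist > 0$), so that the pushforward identity applies without issue. No further obstacle arises.
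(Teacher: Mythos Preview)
Your proposal is correct and matches the paper's approach: the paper does not give a separate proof of this corollary, treating it as immediate from the preceding theorem, since the change of variables from $\ndist$ to $\gdist$ has already been carried out in the proof of Theorem~\ref{optdim} (the line rewriting $\val{IMM}(D,G)$ as a single integral over $\mathcal{X}$), and substituting $D^*$ into that expression yields $C(G)$ directly. Your write-up simply spells out these implicit steps.
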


\section{Divergences for GANs}

To show that the IMM-GAN objective function makes sense, we need to introduce one last important element: the notion of a \textit{divergence} between two probability distributions. This is akin to a measure of distance between two distributions: if we minimise the divergence, we also hope that the two distributions are in fact equal. Divergences come up again in the next chapter, but for now it suffices to consider two possible definitions of divergence.

\begin{kljsdef}
Let $\mu$ and $\nu$ be two probability measures, and suppose $\mu$ is absolutely continuous with respect to $\nu$. The \textbf{Kullback-Leibler (KL) divergence} from $\nu$ to $\mu$ is defined as
\begin{equation}
\dive{KL}{\mu}{\nu} = \ev{x}{\mu}\left[\log\frac{\mu(x)}{\nu(x)}\right] = \ev{x}{\mu}[\log \mu(x) - \log \nu(x)].
\end{equation}
The \textbf{Jensen-Shannon (JS) divergence} is defined as
\begin{equation}
\dive{JS}{\mu}{\nu} = \frac{1}{2}\dive{KL}{\mu}{M} + \frac{1}{2}\dive{KL}{\nu}{M},
\end{equation}
where $M = (\mu+\nu)/2$.
\end{kljsdef}

Two important properties of these divergences make them useful as a notion of the difference between two distributions.\footnote{The KL divergence is not a distance function as it is not symmetric - it is possible that $\dive{KL}{\mu}{\nu}\neq\dive{KL}{\nu}{\mu}.$ For an example of this, see \citet[Figure 14]{goodfellow2016nips}. The JS divergence is a symmetrised version of the KL divergence.}

\begin{kljsprop}[\citet{kullback1951information}, Lemma 3.1]
Let $\mu, \nu$ be two distributions for which the KL divergence is defined. Then $\dive{KL}{\mu}{\nu}$ is non-negative, and equal to 0 if and only if $\mu$ and $\nu$ are equal almost everywhere. If $\mu$ and $\nu$ are discrete probability distributions, this is equivalent to $\mu$ being equal to $\nu$.
\end{kljsprop}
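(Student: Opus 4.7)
The plan is to prove both parts by reducing them to a single application of Jensen's inequality to the strictly convex function $-\log$, exploiting the fact that $\mu$ being absolutely continuous with respect to $\nu$ is exactly what is needed to make the change-of-measure argument valid.

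First I would rewrite the divergence as
\[
\dive{KL}{\mu}{\nu} = \ev{x}{\mu}\left[\log\frac{\mu(x)}{\nu(x)}\right] = -\ev{x}{\mu}\left[\log\frac{\nu(x)}{\mu(x)}\right],
\]
where the ratio $\nu/\mu$ is understood as the Radon-Nikodym derivative. Since $-\log$ is convex, Jensen's inequality gives
\[
-\ev{x}{\mu}\left[\log\frac{\nu(x)}{\mu(x)}\right] \geq -\log\ev{x}{\mu}\left[\frac{\nu(x)}{\mu(x)}\right].
\]
The expectation inside the outer logarithm is $\int (\nu/\mu)\,\mathrm{d}\mu = \int \mathrm{d}\nu = 1$, so the right-hand side equals $-\log 1 = 0$, establishing non-negativity.

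For the equality characterisation, I would invoke the strict convexity of $-\log$: equality in Jensen's holds if and only if the integrand $\nu/\mu$ is $\mu$-almost everywhere equal to a constant $c$. Integrating against $\mu$ forces $c = 1$, so $\mu = \nu$ $\mu$-almost everywhere; since $\mu \ll \nu$ means the two measures share a null-ideal on the support in question, this upgrades to equality $\mu$- and $\nu$-almost everywhere. The converse direction is immediate by substitution: if $\mu = \nu$ a.e., then $\log(\mu/\nu) = 0$ a.e., hence the integral vanishes. Finally, the discrete-case statement follows because for a probability mass function, almost everywhere (with respect to counting measure on the countable support) coincides with pointwise equality.

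The main obstacle, though a mild one, is being careful about the measure-theoretic conventions: the Radon-Nikodym derivative $\nu/\mu$ is only defined $\mu$-almost everywhere, and one must justify that the random variable $\log(\nu/\mu)$ is $\mu$-integrable (or at worst has expectation $+\infty$), and that the equality case carries over from "$\mu$-a.e." to the symmetric statement in the proposition. This is handled by noting $\mu \ll \nu$ lets us move between null sets of the two measures on the relevant support, and by invoking the strict-convexity equality condition for Jensen's inequality explicitly rather than passing through a soft limit argument.
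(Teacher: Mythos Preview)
Your proposal is correct and follows essentially the same approach as the paper: both rewrite $\dive{KL}{\mu}{\nu}$ as $-\ev{x}{\mu}[\log(\nu/\mu)]$, apply Jensen's inequality to the convex function $-\log$, evaluate the inner expectation to $1$, and then invoke strict convexity for the equality case. Your version is somewhat more explicit about the measure-theoretic bookkeeping (Radon--Nikodym derivatives, upgrading from $\mu$-a.e.\ to the symmetric statement), but the argument is the same.
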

\begin{proof}
This proof relies on $\log$ being concave, and $-\log$ thus being convex. Consider the case in which $\mu$ and $\nu$ are continuous probability distributions (the proof works for discrete distributions \textit{mutatis mutandis}). Then, by Jensen's inequality:
\begin{align*}
\dive{KL}{\mu}{\nu} &= \ev{x}{\mu}\left[\log\frac{\mu(x)}{\nu(x)}\right] \\
&= \int -\mu(x)\log\frac{\nu(x)}{\mu(x)} \mathrm{d}x \\
&\ge -\log\left(\int \mu(x)\cdot\frac{\nu(x)}{\mu(x)} \mathrm{d}x\right) \\
&= -\log(1) = 0.
\end{align*}
Hence, $\dive{KL}{\mu}{\nu}\ge0$. If $\mu=\nu$ almost everywhere, then it is clear from the definition that $\dive{KL}{\mu}{\nu} = 0$. Moreover, since $\log$ is a \textit{strictly} convex function, then the weak inequality is an equality only if $\mu=\nu$ almost everywhere.
\end{proof}

\begin{kljscor}
Let $\mu, \nu$ be two distributions for which the JS divergence is defined. Then $\dive{JS}{\mu}{\nu}$ is non-negative, and equal to 0 if and only if $\mu$ and $\nu$ are equal almost everywhere. If $\mu$ and $\nu$ are discrete probability distributions, this is equivalent to $\mu$ being equal to $\nu$.
\end{kljscor}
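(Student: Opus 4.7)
The plan is to deduce this corollary directly from the preceding proposition on the KL divergence, exploiting the fact that the JS divergence is defined as a convex combination of two KL divergences, namely $\dive{JS}{\mu}{\nu} = \tfrac12\dive{KL}{\mu}{M} + \tfrac12\dive{KL}{\nu}{M}$ where $M = (\mu+\nu)/2$. Since both summands on the right are non-negative by the previous proposition, non-negativity of the JS divergence is immediate.

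For the equality characterisation, I would first check that the KL divergences in the definition are indeed well-defined: both $\mu$ and $\nu$ are absolutely continuous with respect to $M$, because if $M(A) = 0$ then $\mu(A) + \nu(A) = 0$, forcing $\mu(A) = \nu(A) = 0$. Next, I would observe that a sum of non-negative terms vanishes if and only if each term vanishes, so $\dive{JS}{\mu}{\nu} = 0$ is equivalent to $\dive{KL}{\mu}{M} = \dive{KL}{\nu}{M} = 0$. Applying the preceding proposition to each, this is in turn equivalent to $\mu = M$ and $\nu = M$ almost everywhere, which together give $\mu = \nu$ almost everywhere.

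Conversely, if $\mu = \nu$ almost everywhere, then $M = \mu = \nu$ almost everywhere, and both KL terms vanish by the proposition, so the JS divergence is zero. The discrete case follows by the same remark as in the proposition: equality almost everywhere reduces to pointwise equality when the distributions are discrete. No step here is genuinely difficult; the only mild subtlety is verifying the absolute continuity needed for the KL divergences from $\mu$ and $\nu$ to $M$ to be defined, which is what makes the JS divergence well-defined in full generality (unlike the raw KL divergence).
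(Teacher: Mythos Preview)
Your proposal is correct and follows essentially the same approach as the paper, which simply remarks that the result follows from the preceding proposition together with the fact that the JS divergence is a sum of two (non-negative) KL divergences. Your version is more detailed---in particular, you explicitly verify the absolute continuity of $\mu$ and $\nu$ with respect to $M$ and spell out the ``if and only if'' direction---but the underlying argument is identical.
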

\begin{proof}
This follows from the above proposition, and noting that the JS divergence is defined as the sum of two (non-negative) KL divergences.
\end{proof}

\section{Appropriateness of Objective Function}

The following theorem shows that the choice of objective function for the IMM-GAN is well-motivated.

\begin{immsol}[\citet{goodfellow2014generative}, Theorem 1]\label{thm:immsol}
The global minimum of the training criterion $C(G)$ is achieved if and only if $\rdist=\gdist$. At that point, $C(G)$ achieves the value $-\log4$.
\end{immsol}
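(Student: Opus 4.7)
The plan is to express $C(G)$ as an affine transformation of the Jensen--Shannon divergence $\dive{JS}{\rdist}{\gdist}$, and then invoke the corollary that $\dive{JS}{\rdist}{\gdist}\ge 0$ with equality iff $\rdist=\gdist$ almost everywhere.

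First I would handle the ``if'' direction by direct substitution: if $\rdist=\gdist$, then each log term in the expression for $C(G)$ from Corollary~\ref{optdisccor} equals $\log\tfrac{1}{2}$, so $C(G) = -\log 2 - \log 2 = -\log 4$. This simultaneously confirms the claimed value of the minimum, provided we subsequently show it is indeed the minimum.

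The key step is to rewrite the denominator $\rdist(x)+\gdist(x)$ in terms of the mixture $M(x) = (\rdist(x)+\gdist(x))/2$. For each expectation I would use
\begin{equation*}
\log\frac{\rdist(x)}{\rdist(x)+\gdist(x)} = \log\frac{\rdist(x)}{2M(x)} = \log\frac{\rdist(x)}{M(x)} - \log 2,
\end{equation*}
and the analogous identity for $\gdist$. Substituting into the expression for $C(G)$ and collecting constants yields
\begin{equation*}
C(G) = \dive{KL}{\rdist}{M} + \dive{KL}{\gdist}{M} - 2\log 2 = 2\,\dive{JS}{\rdist}{\gdist} - \log 4,
\end{equation*}
using the definition of the Jensen--Shannon divergence.

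From here the conclusion is immediate. By the corollary on JS divergence, $\dive{JS}{\rdist}{\gdist}\ge 0$ with equality iff $\rdist=\gdist$ almost everywhere, so $C(G)\ge -\log 4$ with equality iff $\rdist=\gdist$. I do not anticipate a real obstacle here: the only delicate point is the absolute continuity needed for the KL terms to be defined, but this is automatic because $\rdist$ and $\gdist$ are both absolutely continuous with respect to their average $M$ (any set of $M$-measure zero must have measure zero under both $\rdist$ and $\gdist$), so the rewriting above is valid throughout.
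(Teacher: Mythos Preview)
Your proposal is correct and follows essentially the same route as the paper: both rewrite $C(G)$ as $-\log 4 + 2\,\dive{JS}{\rdist}{\gdist}$ by pulling out the factor of $2$ in the denominator, and then invoke the non-negativity of the JS divergence with equality iff $\rdist=\gdist$. Your additional remark on absolute continuity with respect to the mixture $M$ is a nice point of rigor that the paper leaves implicit.
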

\begin{proof}
For $\rdist=\gdist$, \eqref{optdisc} gives us that $D^*(x) = \frac{1}{2}$, so that $C(G) = \log\frac{1}{2} + \log\frac{1}{2} = -\log4$. To see that this is the minimal value of $C(G)$, reached only for $\rdist=\gdist$, observe that 
\[\ev{x}{\rdist}[-\log2] + \ev{x}{\gdist}[-\log2] = -\log4.\]
By subtracting this expression from $C(G) = V(D^*, G)$, we obtain:
\begin{align*}
C(G) &= -\log4 + \dive{KL}{\rdist}{\frac{p_{data}+p_G}{2}} + \dive{KL}{\gdist}{\frac{\rdist+\gdist}{2}} \\
&= -\log4 + 2\cdot \dive{JS}{\rdist}{\gdist}.
\end{align*}

Since the JS divergence between two distributions is always non-negative and zero only when the distributions are equal, we have that $C^* = -\log4$ is the global minimum of $C(G)$ whose only solution is $\rdist=\gdist$.
\end{proof}

Using this proof, we may also establish that in the ideal case in which we may make updates within the function space, a broad class of convex optimisation algorithms may find this unique solution.

\begin{immconv}[Adapted from \citet{goodfellow2014generative}, Proposition 2]\label{thm:immconv}
The function \[
U(\gdist, D) = \ev{x}{\rdist}[\log D(x)]+\ev{x}{\gdist}[\log(1-D(x))].
\] is convex in $\gdist$.
\end{immconv}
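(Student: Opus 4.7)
The plan is to observe that $U(p_G, D)$ is actually affine (and thus convex) in $p_G$, so convexity reduces to linearity of the integral. Specifically, viewing $p_G$ as ranging over the convex set $\probspace{\mathcal{X}}$, I would split $U$ into the two summands and argue that the first summand is constant in $p_G$ while the second is linear.

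First I would note that the term $\ev{x}{\rdist}[\log D(x)]$ does not involve $p_G$ at all, so it is trivially a convex (constant) function of $p_G$. Then I would rewrite the second term as a linear functional of $p_G$, namely
\begin{equation*}
\ev{x}{\gdist}[\log(1-D(x))] = \intoverofwrt{\mathcal{X}}{\log(1-D(x))\,p_G(x)}{x},
\end{equation*}
which, for fixed $D$, is of the form $p_G\mapsto \int f\,p_G$ with $f(x)=\log(1-D(x))$. This is manifestly linear in $p_G$, since integration against a fixed measurable function is a linear operation.

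To verify the definition explicitly, I would take $\lambda\in[0,1]$ and two densities $p_1, p_2\in\probspace{\mathcal{X}}$, set $p_\lambda = \lambda p_1 + (1-\lambda)p_2$ (which is again a probability density, since $\probspace{\mathcal{X}}$ is convex under mixtures), and compute
\begin{align*}
U(p_\lambda, D) &= \ev{x}{\rdist}[\log D(x)] + \intoverofwrt{\mathcal{X}}{\log(1-D(x))\bigl(\lambda p_1(x)+(1-\lambda)p_2(x)\bigr)}{x} \\
&= \lambda U(p_1, D) + (1-\lambda) U(p_2, D),
\end{align*}
using linearity of the integral and the fact that the $p_r$-expectation appears with coefficient $\lambda+(1-\lambda)=1$. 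This gives equality in the convexity inequality, confirming that $U(\cdot, D)$ is in fact affine and therefore convex.

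There is no real obstacle here — the only subtlety is making the underlying vector-space picture precise, i.e.\ recognising that $p_G$ ranges over the convex subset $\probspace{\mathcal{X}}$ of the linear space of signed measures on $\mathcal{X}$, so that ``convexity in $p_G$'' is well-defined. Once that is in place, the argument is essentially a one-line consequence of the linearity of expectation with respect to the measure being integrated against.
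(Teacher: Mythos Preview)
Your proposal is correct and follows exactly the paper's approach: the paper's proof simply notes that only the second term depends on $\gdist$ and then invokes linearity of expectation, which is precisely what you do (in more detail). Your explicit verification of affineness via mixtures is a harmless elaboration of the same idea.
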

\begin{proof}
We observe that only the second term depends on $\gdist$. The proof then follows from the linearity of expectation.
\end{proof}

\section{Practical Implementation of GANs}

\begin{figure}
	\includegraphics[width=\linewidth]{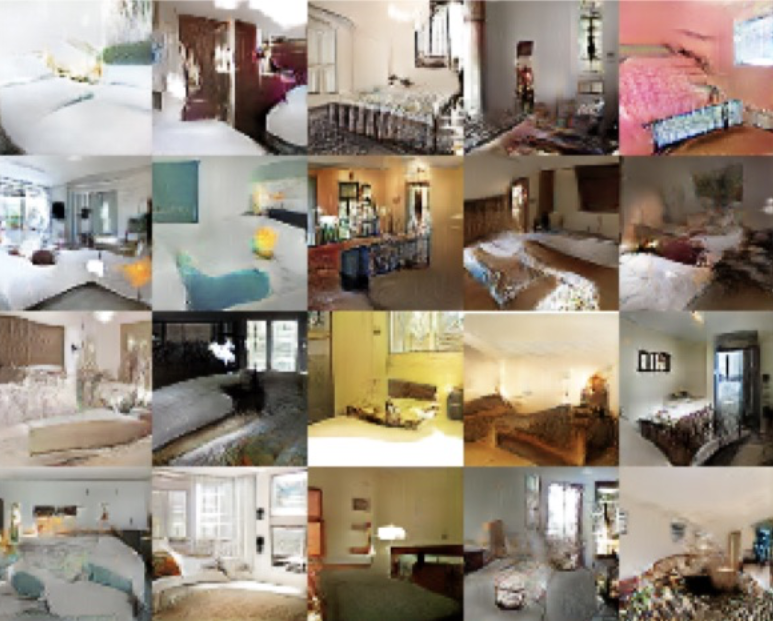}
	\caption{Samples of images of bedrooms generated by a GAN trained on the LSUN dataset, taken from \citet{goodfellow2016nips}.}
	\label{fig:DCGAN_samples}
\end{figure}

In practice we do not search over all possible functions $G$ and $D$ for our optima. Instead, we consider a family of parametrised functions $G(z; \theta_G)$ and $D(x; \theta_D)$ and optimise parameters $\theta_G\in\Theta_G$ and $\theta_D\in\Theta_D$. The typical class of parametrised functions we consider are \textbf{neural networks}, often abbreviated to \textbf{neural nets}. 

A precise formalisation of neural nets is beyond the scope of this dissertation: the interested reader is referred to \citet[Chapter 6]{goodfellow2016deep}. It suffices to know that neural nets have the power to approximate a broad class of functions, and are differentiable with respect to their defining parameters. The latter fact means that an objective function defined in terms of a neural net may be maximised (resp. minimised) by taking steps in the parameter space proportional to the \textit{negative} (resp. the \textit{positive}) of the gradient, in a process referred to as \textit{gradient descent}.\footnote{The objective-maximising equivalent is also referred to as \textit{gradient ascent}.}

Restricting our generator and discriminator to be neural nets allows for the GAN to be implemented and trained in practice.

\begin{algorithm}
\caption{Minibatch stochastic gradient descent training of MM-GANs \citep{goodfellow2014generative}. The gradient-based updates can use any standard gradient-based learning rule.}
\begin{algorithmic}[1]
\FOR{number of training iterations}
	\FOR{$k$ steps}
		\STATE Sample minibatch of $m$ noise samples $\bigl\{z^{(1)},...,z^{(m)}\bigr\}$ from noise prior $\ndist(z)$.
		\STATE Sample minibatch of $m$ examples $\bigl\{x^{(1)},...,x^{(m)}\bigr\}$ from real data distribution $\rdist$.
		\STATE Update the discriminator by ascending its stochastic gradient: \[
		\nabla_{\theta_D}\frac{1}{m}\sum_{i=1}^m\bigl[\log D\bigl(x^{(i)}\bigr)+\log\bigl(1-D\bigl(G\bigl(z^{(i)}\bigr)\bigr)\bigr)\bigr].
		\]
	\ENDFOR
	\STATE Sample minibatch of $m$ noise samples $\bigl\{z^{(1)},...,z^{(m)}\bigr\}$ from noise prior $\ndist(z)$.
	\STATE Update the generator by descending its stochastic gradient: \[
		\nabla_{\theta_G}\frac{1}{m}\sum_{i=1}^m\log\bigl(1-D\bigl(G\bigl(z^{(i)}\bigr)\bigr)\bigr).
		\]
\ENDFOR
\end{algorithmic}
\end{algorithm}

We formalise the new objective as follows:
\begin{mmgan}[MM-GAN]
Let $\mathcal{Z}\subseteq\eucl{\ell}$, $\mathcal{X}\subseteq\eucl{d}$ be ambient data spaces, let $\ndist(z)$ be a prior distribution over $\mathcal{Z}$, and let $\rdist$ be the distribution of real data points over $\mathcal{X}$. Let $\Theta_D$ and $\Theta_G$ be the spaces of possible parameters for the discriminator and generator, respectively.\footnote{Typically, $\Theta_D$ and $\Theta_G$ are subsets of the unit ball.} The \textbf{minimax GAN} (MM-GAN) is the game specified by the objective
\begin{equation}\label{mmobj}
\min_{\theta_G\in\Theta_G}\max_{\theta_D\in\Theta_D}\val{MM}(D_{\theta_D}, G_{\theta_G}),
\end{equation}
where $G_{\theta_G}\colon\mathcal{Z}\to\mathcal{X}$, $D_{\theta_D}\colon\mathcal{X}\to[0,1]$ belong to classes of neural nets
\begin{align*}
\mathcal{F} &= \{G_{\theta_G}\mid \theta_G\in\Theta_G\}, \\
\mathcal{G} &= \{D_{\theta_D}\mid \theta_D\in\Theta_D\},
\end{align*} and
\begin{equation}
\val{MM}(D_{\theta_D}, G_{\theta_G}) = \ganobj{\log}{D_{\theta_D}}{G_{\theta_D}}.
\end{equation}
\end{mmgan}

\section{Convergence Problems}

Unlike its idealised counterpart, the MM-GAN objective function lacks counterparts to Theorems~\ref{thm:immsol} and~\ref{thm:immconv} that guarantee convergence to a unique solution such that $\rdist = \gdist$. This section reviews two particular problems with convergence observed when implementing this GANs in practice, and considers the theoretical explanations of their origins that have been offered. The remainder of this dissertation will focus on theoretically-motivated modifications of GANs that seek to ameliorate these problems.

\subsection{Failure to Improve}
There are two ways in which our generator may fail to improve, where the \textit{improvement} is taken with respect to the quality of samples it produces. In the first case, though a solution may exist, the dynamics of the gradient descent training algorithm prevents the neural nets from reaching their optimal parameter values. Example~\ref{ex:cycles} demonstrates this.

In the second case, which is a special case of the first failure, the gradient along which the generator must train is diminished to the point that the generator cannot usefully learn from it. This is known as the \textbf{vanishing gradient problem}, or the \textbf{saturation problem}. \citet{goodfellow2014generative} claims that this problem is caused by the discriminator successfully rejecting generator samples with high confidence, so that the generator's gradient vanishes. This suggests that we ought to avoid over-training the discriminator, and instead carefully interplay discriminator and generator improvements.

\subsection{Mode Collapse}

Mode collapse is a problem that occurs when the generator learns to produce only a limited range of samples from the real data distribution. It does so by mapping several different input values $z\sim\ndist$ to the same output point $G(z)$. 

The name `mode collapse' comes from the fact that, when trying to learn a multi-modal distribution, the generator only outputs samples from a select number of these modes. \citet{metz2016unrolled} demonstrates this by showing how a GAN may fail to learn a toy data distribution consisting of a mixture of 2D Gaussian distributions (as seen in Figure~\ref{fig:Metz_modecollapse}).

\begin{figure}
	\includegraphics[width=\linewidth]{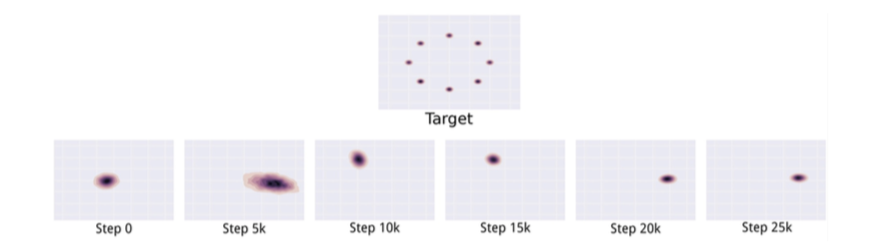}
	\caption{An illustration of mode collapse on a toy dataset consisting of a mixture of Gaussians in two-dimensional space. In the bottom row, we see how as the GAN is trained over time, the generator only produces a single mode at a time, cycling between different modes as the discriminator learns to reject each one. Image taken from \citet{metz2016unrolled}.}
	\label{fig:Metz_modecollapse}
\end{figure}

\citet{goodfellow2014generative} and \citet{metz2016unrolled} postulate that mode collapse arises from the following fact, which I shall state and prove rigorously:

\begin{optgen}\label{optgenfact}
Fix a continuous $D$ in the IMM-GAN game, and let $\mathcal{X}$ be compact.\footnote{The assumption of compactness for our data space makes sense in a practical context. By the Heine-Borel Theorem, a subset of Euclidean space is compact if and only if it is closed and bounded. Representations of real data often take this form: for example, a grayscale image can be given by a finite-dimensional vector of values in $[0,1]$, and so the space of all such grayscale images will be compact. Hence, compactness is assumed here and elsewhere without loss of practical generality.} The optimal generator $G$ as required by the outer loop of~\eqref{immobj} is given by
\begin{equation}\label{optgene}
G^*(z) = s_z \quad \forall z\in\mathcal{Z},
\end{equation}
where $s_z\in\arg\max_{x\in\mathcal{X}}D(x).$ In other words, the optimal generator for a fixed discriminator maps every value $z\in\mathcal{Z}$ to some $x\in\mathcal{X}$ that the discriminator believes is most likely to be real rather than fake.
\end{optgen}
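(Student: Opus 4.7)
The plan is to observe that, since only the second term of $\val{IMM}$ depends on $G$, minimising $\val{IMM}(D,G)$ over $G$ reduces to minimising
\[
\ev{z}{\ndist}[\log(1-D(G(z)))].
\]
Because $t\mapsto\log(1-t)$ is strictly decreasing on $[0,1)$, pointwise minimisation of the integrand with respect to $G(z)$ is equivalent to pointwise maximisation of $D(G(z))$. So the proof reduces to showing two things: (i) the set $\arg\max_{x\in\mathcal{X}}D(x)$ is nonempty, so that we can actually pick the $s_z$, and (ii) pointwise maximisation of $D(G(z))$ genuinely yields a global minimiser of the expectation.

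First I would invoke the Extreme Value Theorem: since $D$ is continuous and $\mathcal{X}$ is compact, $D$ attains its maximum on $\mathcal{X}$, so $\arg\max_{x\in\mathcal{X}}D(x)$ is nonempty and $M := \max_{x\in\mathcal{X}}D(x)$ is well-defined. Then for any candidate generator $G$, and for every $z\in\mathcal{Z}$, we have $D(G(z))\le M$, hence $\log(1-D(G(z)))\ge\log(1-M)$ (interpreting $\log 0$ as $-\infty$ if $M=1$, in which case the minimum is trivially attained). Taking expectations,
\[
\ev{z}{\ndist}[\log(1-D(G(z)))]\ge \log(1-M),
\]
with equality precisely when $D(G(z))=M$ for $\ndist$-almost every $z$, i.e.\ when $G(z)\in\arg\max_{x\in\mathcal{X}}D(x)$ almost surely. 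The candidate $G^*$ in the statement achieves this lower bound, confirming optimality.

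To finish I would verify that the construction $G^*(z)=s_z$ actually makes sense as a function for which the expectation is defined. The cleanest route is to pick a single $s\in\arg\max_{x\in\mathcal{X}}D(x)$ and take the constant generator $G^*(z)\equiv s$; this is trivially measurable and attains the bound, which suffices to exhibit \emph{an} optimal generator. A brief remark can then note that any other $G$ with $G(z)\in\arg\max_{x\in\mathcal{X}}D(x)$ for $\ndist$-a.e.\ $z$ is equally optimal, justifying the more general form stated in the proposition.

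The main obstacle is really an expository one rather than a technical one: the argument is short, but one must be careful to distinguish \emph{existence} of an optimal generator (which only requires the compactness-plus-continuity yielding a maximiser of $D$) from the \emph{characterisation} of all optimal generators up to null sets, and to avoid sweeping under the rug the possibility $D(s)=1$, where the expectation equals $-\infty$ and the minimum is attained trivially by any $G$ mapping into $D^{-1}(\{1\})$ almost surely.
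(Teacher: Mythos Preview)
Your proposal is correct and follows essentially the same approach as the paper: observe that only the second term depends on $G$, invoke compactness and continuity (the Extreme Value Theorem) to ensure $\arg\max_{x\in\mathcal{X}}D(x)$ is nonempty, and use the monotonicity of $\log$ to reduce pointwise minimisation of $\log(1-D(G(z)))$ to pointwise maximisation of $D(G(z))$. Your write-up is in fact more careful than the paper's own proof, which omits the explicit expectation bound, the measurability remark, and the $M=1$ edge case that you address.
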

\begin{proof}
Since the discriminator is continuous with a compact domain, the set \hfill \break $\arg\max_{x\in\mathcal{X}}D(x)$ is non-empty. We observe that only the second term in the value function \[
\ganobj{\log}{D}{G}
\] depends on $G$. Since $\log$ is a monotone function, we wish to choose $G$ so as to minimise $(1-D(G(z)))$, or equivalently, so as to maximise $D(G(z))$. From this the statement follows.
\end{proof}

\begin{optgencor}\label{cor:optgen}
Define $\tilde{C}(D) := \min_G\val{IMM}(D, G)$. Let $\rdist$ be any real data distribution with support on $\mathcal{X}$ compact. Then there exists a discriminator $D$ such that $\tilde{C}(D) = \min_G C(G)$, but $\rdist\neq\gdist$.
\end{optgencor}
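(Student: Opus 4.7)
The plan is to construct a specific pair $(D, G)$ showing that the minimum value $-\log 4 = \min_G C(G)$ (from Theorem~\ref{thm:immsol}) can be attained for some $D$ at a $G$ with $\gdist \neq \rdist$. The guiding observation is that for every $D$ and every $G$, $\val{IMM}(D, G) \le \max_{D'}\val{IMM}(D', G)$, so taking the infimum over $G$ on both sides yields the general bound $\tilde{C}(D) \le \min_G C(G) = -\log 4$. It therefore suffices to exhibit a single $D$ together with a $G$ for which $\val{IMM}(D, G) = -\log 4$ and $\gdist \neq \rdist$.

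First I would take $D$ to be the constant function $D \equiv 1/2$ on $\mathcal{X}$, which is continuous and satisfies $\arg\max_{x \in \mathcal{X}} D(x) = \mathcal{X}$. Proposition~\ref{optgenfact} then asserts that \emph{every} generator is optimal for this $D$. Pick any point $s \in \mathcal{X}$ with $\rdist \neq \delta_s$---possible whenever $\mathcal{X}$ contains at least two points, since a single probability measure can equal at most one Dirac---and define $G(z) := s$ for all $z \in \mathcal{Z}$. Then $\gdist = \delta_s \neq \rdist$.

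A direct computation now gives
\[\val{IMM}(D, G) = \ev{x}{\rdist}[\log(1/2)] + \ev{z}{\ndist}[\log(1/2)] = -2\log 2 = -\log 4,\]
and combined with the general bound $\tilde{C}(D) \le -\log 4$ this forces $\tilde{C}(D) = -\log 4 = \min_G C(G)$, as required.

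No step here presents a serious obstacle: the entire construction is essentially a one-line consequence of Proposition~\ref{optgenfact} applied to the uninformative discriminator. The interest lies in the interpretation, which I would flag in a concluding remark: a constant discriminator provides no gradient information to distinguish generator distributions, and under such a discriminator even a point-mass $\gdist$ (the extreme form of mode collapse) is globally optimal. This is exactly the failure mode described in the preceding discussion of mode collapse, and it justifies citing Corollary~\ref{cor:optgen} as a rigorous illustration of that pathology.
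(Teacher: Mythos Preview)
Your proof is correct and follows essentially the same approach as the paper: both take the constant discriminator $D\equiv 1/2$, observe that a point-mass generator $G\equiv s$ attains $\val{IMM}(D,G)=-\log 4=\min_G C(G)$, and note that $\gdist=\delta_s$ can be chosen distinct from $\rdist$. Your version is slightly more explicit in justifying the upper bound $\tilde{C}(D)\le\min_G C(G)$ via the max--min inequality and in noting why such an $s$ exists, but the core argument is identical.
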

\begin{proof}
By the above result, 
\begin{equation}
\tilde{C}(D) = \ev{x}{\rdist}[\log D(x)] +\log(1-\max_{y\in\mathcal{X}}D(y)).
\end{equation}
Taking $D$ to be constantly 1/2, we get that $\tilde{C}(D) = -\log4 = \min_G C(G)$. However, this value for $\tilde{C}(D)$ can be attained for any $G$ whose value is constantly $\arg\max_{y\in\mathcal{X}}D(y)$. In particular, it can be attained for a $G$ such that $\rdist\neq\gdist$.
\end{proof}

Suppose we viewed the objective of MM-GAN as finding \[
\max_{\theta_D\in\Theta_D}\min_{\theta_G\in\Theta_G}\val{MM}(D_{\theta_D}, G_{\theta_G}).
\] By the above proposition, this approach seemingly encourages a scenario in which the generator favours producing only one output. The problem with the GAN training algorithm, according to \citet{goodfellow2016nips}, is that it does not demonstrate any preference over the \textbf{maximin} and \textbf{minimax} perspective. As emphasised by \citet{arjovsky2017wasserstein}, this suggests we should seek to train the discriminator to optimality before each step of generator training. 

However, this runs counter to the advice given to resolve the problem of convergence failure. As such, we need to modify our GAN design so that we may train the discriminator to optimality, avoiding the issue of mode collapse, while at the same time avoiding convergence failures.

\chapter{The Topology of GANs}
In the previous chapter, we saw that the original GAN \citep{goodfellow2014generative} formulation suffered from problems of \textbf{convergence failure} and \textbf{mode collapse} during the training procedure. In this chapter, I review a generalisation of the GAN objective function due to \citet{liu2017approximation} and \citet{zhang2017discrimination}. This generalisation gives us a deeper theoretical insight into the conditions that must be satisfied for a GAN to be able to successfully reproduce samples from a distribution.

Recall that, fixing an optimal discriminator and allowing updates to the function space, finding an optimal generator is equivalent to minimising the Jensen-Shannon (JS) divergence between the generator distribution $\gdist$ and the real data distribution, $\rdist$. The generalisation in this chapter, taking this as inspiration, shows how changing our choice of objective function makes finding the optima of that function equivalent to minimising some divergence between the two distributions.

Of course, convergence depends on our choice of distance or divergence $\rho(p_\theta, \rdist)$ between these distributions. This chapter develops the argument in \citet{arjovsky2017wasserstein} that GAN training demands a distance notion $d_{\text{W}}$ that induces a \textit{weaker topology} than $d_{\text{JS}}$, in that the set of convergent sequences under $d_{\text{W}}$ will be a superset of that under $d_{\text{JS}}$. I shall then show the positive theoretical results of the corresponding GAN procedure, \textbf{Wasserstein GAN} (WGAN).

\section{Adversarial Divergences}

In Chapter 2, we established that, given an optimal discriminator, we can view the IMM-GAN game as a minimisation problem for the generator. In particular, Corollary~\ref{optdisccor} showed that the IMM-GAN game was equivalent to finding
\begin{equation*}
\min_G\biggl( \ev{x}{\rdist}\left[\log\frac{\rdist(x)}{\rdist(x)+\gdist(x)}\right] +\ev{x}{\gdist}\left[\log\frac{\gdist(x)}{\rdist(x)+\gdist(x)}\right]\biggr).
\end{equation*}

\citet{liu2017approximation} generalises this approach, viewing a GAN as seeking to minimise the objective function
\begin{equation}
\gdist\mapsto\sup_{f\in\mathcal{F}}\mathbb{E}_{x\sim\rdist, y\sim\gdist}[f(x,y)]
\end{equation}
for some class $\mathcal{F}$ of functions.\footnote{That our objective function is defined on a \textit{distribution space} rather than a \textit{parameter space} shows that this approach is `idealised' in the sense given in the previous chapter.} This leads to the concept of \textit{adversarial divergence}.

\begin{advdiv}[Modified from \citet{liu2017approximation}, Definition 1]
Let $\mathcal{X}$ be an arbitrary topological space, $\mathcal{F}\subseteq [0,1]^{(\mathcal{X}^2)}$ our class of functions with domain $\mathcal{X}^2$. An \textbf{adversarial divergence} $d_\tau$ over $\mathcal{X}$ is a function
\begin{align*}
\probspace{\mathcal{X}}\times\probspace{\mathcal{X}} &\to \mathbb{R}\cup\{+\infty\} \\
(\mu,\nu) &\mapsto d_\tau(\mu \| \nu) =: \sup_{f\in\mathcal{F}}\mathbb{E}_{\mu\otimes\nu}[f].
\end{align*}
\end{advdiv}

\begin{immadvdiv}[IMM-GAN \citep{goodfellow2014generative}]
If we set
\begin{equation}
\mathcal{F} = \{x,y\mapsto\log (D(x))+\log (1-D(y))\mid D\in\mathcal{V}\},
\end{equation}
where $\mathcal{V} = [0,1]^\mathcal{X}$, we recover our IMM-GAN objective with optimal discriminator, $C(G)$.
\end{immadvdiv}

\begin{ipmadvdiv}[Integral Probability Metric \citep{muller1997integral}]
We derive a particularly important class of GANs when we assume that, in the definition of adversarial divergence, we can write our bivariate function $f$ as the difference of two univariate functions. 

In particular, given a choice\footnote{Refer to \citet{zhang2017discrimination} for an exploration of the consequences of our choice to $\mathcal{F}$ on how \textit{useful} the consequent metric is, as well as the extent to which the empirical error bounds will \textit{generalise} to true error bounds.} of $\mathcal{F}$, an \textbf{integral probability metric} (IPM) between two distributions is defined
\begin{equation}
\dive{IPM}{\mu}{\nu} := \ipm{\mathcal{F}}{\mu}{\nu}.
\end{equation}
\end{ipmadvdiv}

\begin{ipmprop}\label{ipmpseudometric}
Suppose that our function class $\mathcal{F}$ is such that, if $f\in\mathcal{F}$, then $-f\in\mathcal{F}$. Then $\dive{IPM}{\mu}{\nu}$ is non-negative, satisfies the triangle inequality, and is symmetric.
\end{ipmprop}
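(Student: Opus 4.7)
The plan is to verify each of the three properties directly from the definition
$$\dive{IPM}{\mu}{\nu} = \sup_{f\in\mathcal{F}}\bigl( \ev{x}{\mu}[f(x)]-\ev{y}{\nu}[f(y)] \bigr),$$
using only the hypothesis that $\mathcal{F}$ is closed under negation together with the linearity of expectation. None of the three steps is hard; the only subtlety is making the hypothesis $f \in \mathcal{F} \Rightarrow -f \in \mathcal{F}$ do its work in exactly the right place.

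For \textbf{symmetry}, I would observe that the map $f \mapsto -f$ is an involution on $\mathcal{F}$, so it permutes the function class. Then I would write
$$\dive{IPM}{\mu}{\nu} = \sup_{f\in\mathcal{F}}\bigl(\ev{x}{\mu}[f(x)] - \ev{y}{\nu}[f(y)]\bigr) = \sup_{g\in\mathcal{F}}\bigl(\ev{y}{\nu}[g(y)] - \ev{x}{\mu}[g(x)]\bigr) = \dive{IPM}{\nu}{\mu},$$
where the substitution $g = -f$ is justified precisely by the closure hypothesis.

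For \textbf{non-negativity}, I would pick any $f \in \mathcal{F}$ (assuming $\mathcal{F}$ is non-empty; otherwise the sup is $-\infty$ and the statement is vacuous or requires a separate convention). Setting $a = \ev{x}{\mu}[f(x)] - \ev{y}{\nu}[f(y)]$, the closure property gives $-f \in \mathcal{F}$ with corresponding value $-a$, and hence $\dive{IPM}{\mu}{\nu} \geq \max(a,-a) = |a| \geq 0$. Equivalently, one can note that non-negativity follows immediately from symmetry combined with the triangle inequality by taking $\lambda = \mu$, so this step is essentially a free corollary once the other two are in place.

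For the \textbf{triangle inequality}, the key observation is that expectation is linear, so for any third measure $\lambda$ and any $f \in \mathcal{F}$,
$$\ev{x}{\mu}[f(x)] - \ev{y}{\nu}[f(y)] = \bigl(\ev{x}{\mu}[f(x)] - \ev{z}{\lambda}[f(z)]\bigr) + \bigl(\ev{z}{\lambda}[f(z)] - \ev{y}{\nu}[f(y)]\bigr) \leq \dive{IPM}{\mu}{\lambda} + \dive{IPM}{\lambda}{\nu}.$$
Taking the supremum of the left-hand side over $f \in \mathcal{F}$ then yields $\dive{IPM}{\mu}{\nu} \leq \dive{IPM}{\mu}{\lambda} + \dive{IPM}{\lambda}{\nu}$. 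The main (and only) thing to be careful about is that the right-hand side does not depend on $f$, so the suprema on the two summands may be taken independently; no interchange-of-suprema argument is required.
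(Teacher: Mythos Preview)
Your proof is correct and follows exactly the approach the paper has in mind: the paper's own proof is the single line ``The proof follows easily from the properties of the supremum,'' and your argument spells out precisely those properties (closure under $f\mapsto -f$ for symmetry and non-negativity, and the additive splitting through an intermediate $\lambda$ for the triangle inequality). There is nothing to add or correct.
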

\begin{proof}
The proof follows easily from the properties of the supremum.
\end{proof}

\section{Wasserstein GAN}

This section defines the \textbf{Wasserstein GAN} (WGAN), which can be shown to arise from a particular choice of IPM. WGAN was originally developed by \citet{arjovsky2017wasserstein}, after being theoretically motivated in \citet{arjovsky2017towards}. The theory has been developed further by \citet{bousquet2017optimal} and \citet{lei2017geometric}.

\subsection{Earth-Mover Distance and Total Variation Distance}

We first define two notions of distance between probability distributions, both of which can be shown to be examples of IPMs.

\begin{w1}
Let $\mu$, $\nu$ be probability measures on a compact metric space $(\mathcal{X}, d)$. The \textbf{Earth-Mover} (EM) or \textbf{Wasserstein-1} distance is given by
\begin{equation}\label{emdist}
\dive{W}{\mu}{\nu} := \inf_{\gamma\in\Pi(\mu,\nu)}\mathbb{E}_{(x,y)\sim\gamma}[\|x-y\|],
\end{equation}
where $\Pi(\mu,\nu)$ denotes the set of all joint distributions $\gamma(x,y)$ such that, for all $A\in\mathcal{B}(\mathcal{X})$,
\begin{align*}
\gamma(A, \mathcal{X}) &= \mu(A), \\
\gamma(\mathcal{X}, A) &= \nu(A).
\end{align*}
Intuitively, $\gamma(x,y)$ indicates how much `mass' must be transported from $x$ to $y$ in order to transform the distribution $\mu$ into the distribution $\nu$.
\end{w1}

The following equivalent formula is more tractable when finding minima with respect to the Wasserstein distance.

\begin{krduality}[The Kantorovich-Rubinstein Duality]\label{krdualthm}
Let $(\mathcal{X}, d)$ be a compact metric space, and let $\text{{\normalfont Lip}}_1(\mathcal{X})$ be the set of functions $f\colon\mathcal{X}\to\mathbb{R}$ such that \[
\|f\|_{\text{{\normalfont L}}} := \sup\left\{\frac{|f(x)-f(y)|}{d(x,y)}\ \bigg\vert\  x,y\in\mathcal{X}, x\neq y\right\} \le 1.
\]
Then $f\in\text{{\normalfont Lip}}_1(\mathcal{X})$ is Lebesgue integrable with respect to any probability measure on $\mathcal{X}$, and
\begin{equation}\label{krd}
\dive{W}{\mu}{\nu} = \sup_{f\in\text{{\normalfont Lip}}_1(\mathcal{X})} (\mathbb{E}_{x\sim\mu}[f(x)] - \mathbb{E}_{x\sim\nu}[f(x)]).
\end{equation}
\end{krduality}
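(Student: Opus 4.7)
The plan is to split the proof into three stages. First, the integrability claim is immediate: since $\mathcal{X}$ is compact and every $f\in\text{Lip}_1(\mathcal{X})$ is continuous, fixing any $x_0\in\mathcal{X}$ gives $|f(x)|\le|f(x_0)|+\mathrm{diam}(\mathcal{X})$, so $f$ is bounded and Borel measurable, hence integrable against any Borel probability measure on $\mathcal{X}$. Next, I would prove the easy half of~\eqref{krd}, namely that the right-hand side is at most $\dive{W}{\mu}{\nu}$: for any $\gamma\in\Pi(\mu,\nu)$ and any $f\in\text{Lip}_1(\mathcal{X})$, the marginal conditions give
\begin{equation*}
\ev{x}{\mu}[f(x)] - \ev{x}{\nu}[f(x)] = \int_{\mathcal{X}^2}(f(x)-f(y))\,\mathrm{d}\gamma(x,y) \le \int_{\mathcal{X}^2}d(x,y)\,\mathrm{d}\gamma(x,y),
\end{equation*}
so taking the supremum over $f$ on the left and the infimum over $\gamma$ on the right yields the claim.

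For the reverse inequality, I would invoke the Kantorovich duality theorem from optimal transport. The strategy is to view~\eqref{emdist} as an infinite-dimensional linear program over finite Borel measures on $\mathcal{X}^2$: minimise $\int d(x,y)\,\mathrm{d}\gamma$ subject to the affine constraints fixing the marginals. Encoding the constraints through appropriate convex indicator functions and applying the Fenchel--Rockafellar theorem on $\ctsbdd{\mathcal{X}^2}$, whose topological dual is identified by Riesz representation with the finite signed Borel measures on $\mathcal{X}^2$, produces the dual problem
\begin{equation*}
\sup\left\{ \int_{\mathcal{X}}\phi\,\mathrm{d}\mu + \int_{\mathcal{X}}\psi\,\mathrm{d}\nu : \phi,\psi\in\ctsbdd{\mathcal{X}},\ \phi(x)+\psi(y)\le d(x,y)\right\}.
\end{equation*}
Strong duality is justified by the compactness of $\mathcal{X}$: the feasible set $\Pi(\mu,\nu)$ is nonempty (as $\mu\otimes\nu$ is admissible) and weak-$*$ compact, and the cost $\gamma\mapsto\int d\,\mathrm{d}\gamma$ is weak-$*$ continuous since $d$ is continuous and bounded on $\mathcal{X}^2$.

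The final step is to reduce this dual problem to a supremum over $\text{Lip}_1(\mathcal{X})$. Given any admissible $(\phi,\psi)$, replacing $\psi$ by the $d$-conjugate $\psi^d(y):=\inf_{x\in\mathcal{X}}(d(x,y)-\phi(x))$ yields a 1-Lipschitz function (by the triangle inequality for $d$) satisfying $\phi(x)+\psi^d(y)\le d(x,y)$ and $\psi^d\ge\psi$ pointwise, so the dual objective only increases (as $\nu$ is a positive measure). Iterating with $(\psi^d)^d$ and using the identity $g^d=-g$ for any 1-Lipschitz $g$ (which follows from Lipschitz-ness and evaluating at $y=x$) produces the admissible pair $(-\psi^d,\psi^d)$ of the form $(f,-f)$ with $f=-\psi^d\in\text{Lip}_1(\mathcal{X})$, identifying the dual optimum with the right-hand side of~\eqref{krd}.

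The principal obstacle is establishing \emph{strong} (as opposed to merely weak) duality for the infinite-dimensional LP above. The qualification hypotheses of Fenchel--Rockafellar, or equivalently the conditions under which Hahn--Banach separation applies to disjoint convex subsets of the space of signed measures, require genuine care in this topological setting; a fully self-contained treatment would either quote the Kantorovich duality theorem (e.g.\ from Villani's \emph{Optimal Transport}) or develop the necessary convex-analytic machinery, which goes substantially beyond routine calculation and in this dissertation would likely be cited rather than proved.
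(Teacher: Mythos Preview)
Your proposal is correct and follows essentially the same route as the paper. The paper establishes integrability by the same boundedness argument, cites an external strong-duality result (Corollary~3.2 of Edwards rather than Villani or Fenchel--Rockafellar, and stated over bounded Borel rather than bounded continuous test pairs) to obtain $\dive{W}{\mu}{\nu}=\sup\{\int f\,\mathrm{d}\mu+\int g\,\mathrm{d}\nu:f\oplus g\le d\}$, and then performs exactly your $d$-conjugate reduction---defining $k(x)=\inf_y(d(x,y)-g(y))$, checking $k\in\text{Lip}_1(\mathcal{X})$, $f\le k$, $g\le-k$---to collapse the pair to a single Lipschitz function; the easy direction you isolate explicitly appears in the paper as the final two lines of one chain of inequalities.
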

\begin{proof}
The result is a standard one in optimal transport theory. See, e.g., \citet[Theorem 5.10]{villani2008optimal} for a proof. An alternate proof can be found in \citet[Theorem 4.1]{edwards2011kantorovich}. The presentation here is an adaptation of the latter approach, as given by \citet[Theorem 1.3]{basso2015hitchhiker}.

Since $f$ is 1-Lipschitz, we have for some $x_0\in\mathcal{X}$ that 
\begin{equation}\label{lebbound}
\lvert f(x)\rvert \le \lvert f(x_0)\rvert + d(x,x_0).
\end{equation}

Since $\mathcal{X}$ is compact, for any probability measure $\mathbb{P}$ on $\mathcal{X}$ we can integrate both sides of \eqref{lebbound} with respect to $\mathbb{P}$ to obtain \[
\int_\mathcal{X} \lvert f(x)\rvert\ \mathrm{d}\mathbb{P} \le \int_\mathcal{X}\left(\lvert f(x_0)\rvert+d(x,x_0) \right)\ \mathrm{d}\mathbb{P} \le +\infty.
\]

Now let $\mathcal{B}^\infty (\mathcal{X})$ be the set of all bounded Borel-measurable functions $f\colon\mathcal{X}\to\mathbb{R}$. For $f, g\colon\mathcal{X}\to\mathbb{R}$, we define $(f\oplus g)\colon\mathcal{X}^2\to\mathbb{R}$ by \[
(f\oplus g)(x,y) := f(x) + g(y).
\]

Observe that since $\mu, \nu$ are probability measures on a compact metric space, they are (bounded non-negative) Radon measures. Further, $d\colon\mathcal{X}^2\to\mathbb{R}$ is continuous, and hence lower semicontinuous, as a distance metric. By Corollary 3.2 of \citet{edwards2011kantorovich}, we have that
\begin{equation*}
\dive{W}{\mu}{\nu} = \sup\left\{ \int_\mathcal{X} f\ \mathrm{d}\mu + \int_\mathcal{X} g\ \mathrm{d}\nu \mid f,g\in\mathcal{B}^\infty(\mathcal{X}), (f\oplus g)\le d \right\}.
\end{equation*}

Fix $\varepsilon>0$. By the Approximation Lemma, there exists $f,g\in\mathcal{B}^\infty(\mathcal{X})$ with $(f\oplus g)\le d$ such that \[
\dive{W}{\mu}{\nu} - \varepsilon\le \int_\mathcal{X} f\ \mathrm{d}\mu + \int_\mathcal{X} g\ \mathrm{d}\nu.
\]

Now define $k\colon\mathcal{X}\to\mathbb{R}$ by $k(x):= \inf_{y\in\mathcal{X}}\left(d(x,y)-g(y)\right)$. Since $g$ is bounded, $k$ is well-defined. Then, for $x, x'\in\mathcal{X}$, we have that
\begin{align*}
\lvert k(x)-k(x')\rvert &= \lvert \inf_{y\in\mathcal{X}}\left(d(x,y)-g(y)\right) - \inf_{y\in\mathcal{X}}\left(d(x',y)-g(y)\right) \rvert \\
&\le \sup_{y\in\mathcal{X}}\lvert d(x,y)-d(x',y) \rvert \\
&\le d(x,x').
\end{align*}
Hence $k\in\text{Lip}_1(\mathcal{X})$. Note further that, for all $x\in\mathcal{X}$, \[
f(x)\le k(x)\le d(x,x)-g(x)=-g(x),
\] so $f\le k$ and $g\le -k$. Now let $\gamma\in\Pi(\mu, \nu)$. We then get
\begin{align*}
\dive{W}{\mu}{\nu} - \varepsilon &\le \int_\mathcal{X} f\ \mathrm{d}\mu + \int_\mathcal{X} g\ \mathrm{d}\nu \\
&\le \int_\mathcal{X} k\ \mathrm{d}\mu - \int_\mathcal{X} k\ \mathrm{d}\nu \\
&\le \sup\left\{ \int_\mathcal{X} f\ \mathrm{d}\mu - \int_\mathcal{X} f\ \mathrm{d}\nu \mid f\in\text{Lip}_1(\mathcal{X}) \right\} \\
&\le \sup\left\{ \int_{\mathcal{X}\times\mathcal{X}} (f\oplus -f)\ \mathrm{d}\gamma \mid f\in\text{Lip}_1(\mathcal{X}) \right\} \\
&\le \int_{\mathcal{X}\times\mathcal{X}} d(x,y)\ \mathrm{d}\gamma(\mathrm{d}x, \mathrm{d}y).
\end{align*}
Letting $\varepsilon\to0$, we get the desired equality.
\end{proof}

\begin{kripmcor}
The EM distance is an IPM, so long as the domain $\mathcal{X}$ of our function class $\mathcal{F}$ is a compact metric space.
\end{kripmcor}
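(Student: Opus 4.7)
The plan is to recognise this as an essentially immediate consequence of the Kantorovich-Rubinstein duality (Theorem~\ref{krdualthm}), which has already done all the work. I would start by recalling the definition of an IPM, namely
\[
\dive{IPM}{\mu}{\nu} = \ipm{\mathcal{F}}{\mu}{\nu},
\]
and observing that it is parametrised by the choice of function class $\mathcal{F}$. The task therefore reduces to exhibiting a class $\mathcal{F}$ such that the IPM it induces coincides with $\dive{W}{\mu}{\nu}$.

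The natural candidate, read off directly from the right-hand side of \eqref{krd}, is $\mathcal{F} = \text{Lip}_1(\mathcal{X})$. Taking this choice, the IPM definition specialises to exactly the supremum appearing in the Kantorovich-Rubinstein duality, so the two are equal by Theorem~\ref{krdualthm}. The compactness hypothesis on $\mathcal{X}$ is what legitimises this step: it is precisely the hypothesis under which Theorem~\ref{krdualthm} was proved, and in particular it is what guarantees (via the bound \eqref{lebbound} integrated against a probability measure) that every $f\in\text{Lip}_1(\mathcal{X})$ is Lebesgue integrable against $\mu$ and $\nu$, so that the expectations in the IPM are well-defined and finite.

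There is essentially no obstacle here; the only subtlety worth flagging is that one must verify that the chosen $\mathcal{F}$ genuinely sits inside the ambient space of bounded measurable functions on which the IPM is defined, and this follows from the same integrability remark. I would therefore present the corollary in one or two lines: set $\mathcal{F} = \text{Lip}_1(\mathcal{X})$, invoke Theorem~\ref{krdualthm} for the equality $\dive{IPM}{\mu}{\nu} = \dive{W}{\mu}{\nu}$, and note that Proposition~\ref{ipmpseudometric} then applies automatically since $\text{Lip}_1(\mathcal{X})$ is closed under negation, giving as a free bonus that $\dive{W}{\cdot}{\cdot}$ is non-negative, symmetric and satisfies the triangle inequality.
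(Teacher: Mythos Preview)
Your proposal is correct and is exactly the approach the paper intends: the corollary is stated immediately after Theorem~\ref{krdualthm} with no separate proof, because setting $\mathcal{F}=\text{Lip}_1(\mathcal{X})$ and reading off \eqref{krd} is all there is to it. Your additional remark about Proposition~\ref{ipmpseudometric} is a nice observation but not needed for the corollary as stated.
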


The following corollary tells us that it makes sense to describe the EM distance as a \textit{distance}, and to talk of it inducing a \textit{topology}.

\begin{krdistcor}[\citet{basso2015hitchhiker}, Corollary 1.4]
Let $(\mathcal{X}, d)$ be a compact metric space. Then $d_{\text{W}}$ defines a metric on $\probspace{\mathcal{X}}$.
\end{krdistcor}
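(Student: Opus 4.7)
The plan is to leverage the Kantorovich--Rubinstein duality (Theorem~\ref{krdualthm}) that was just established in order to reduce this corollary to verifying the metric axioms for an IPM together with a separation argument. By that theorem, for any $\mu,\nu\in\probspace{\mathcal{X}}$ we have $\dive{W}{\mu}{\nu} = \dive{IPM}{\mu}{\nu}$ with function class $\mathcal{F}=\text{Lip}_1(\mathcal{X})$. Observe that $\text{Lip}_1(\mathcal{X})$ is closed under negation (the Lipschitz seminorm satisfies $\lVert -f\rVert_{\text{L}}=\lVert f\rVert_{\text{L}}$), so Proposition~\ref{ipmpseudometric} immediately yields non-negativity, symmetry, and the triangle inequality. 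This disposes of three of the four metric axioms essentially for free.

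The remaining task is the identity of indiscernibles. One direction is trivial: if $\mu=\nu$ then the integrands agree and $\dive{W}{\mu}{\nu}=0$. For the converse, I would argue as follows. Suppose $\dive{W}{\mu}{\nu}=0$. Using the dual representation \eqref{krd} and the symmetry $f\mapsto -f$ of $\text{Lip}_1(\mathcal{X})$, one obtains $\mathbb{E}_{x\sim\mu}[f(x)]=\mathbb{E}_{x\sim\nu}[f(x)]$ for every $f\in\text{Lip}_1(\mathcal{X})$, and hence, by rescaling, for every Lipschitz function $f\colon\mathcal{X}\to\mathbb{R}$. I would then invoke the fact that the space of Lipschitz functions is uniformly dense in $\ctsbdd{\mathcal{X}}$ on a compact metric space: this follows from the Stone--Weierstrass theorem, since Lipschitz functions form a subalgebra of $\ctsbdd{\mathcal{X}}$ containing the constants and separating points (the family $\{d(\cdot,x_0):x_0\in\mathcal{X}\}$ already separates points). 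A uniform-approximation argument then upgrades the equality of expectations from Lipschitz functions to all of $\ctsbdd{\mathcal{X}}$.

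Finally, to conclude $\mu=\nu$, I would apply a standard consequence of the Riesz--Markov representation theorem (or equivalently, the Portmanteau theorem on a compact metric space): two Borel probability measures that agree on all bounded continuous functions must coincide. Putting these pieces together gives $\dive{W}{\mu}{\nu}=0\Rightarrow\mu=\nu$, completing the verification that $d_{\text{W}}$ is a metric.

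The main obstacle is really only the identity of indiscernibles, since the other three axioms come essentially for free from the IPM machinery. Within that step, the subtlety is not the density argument per se but the need to be careful that the Lipschitz functions used in \eqref{krd} are normalised to have Lipschitz constant at most $1$: to recover equality of integrals against arbitrary Lipschitz $f$ one must first rescale by $\lVert f\rVert_{\text{L}}$, which is permissible precisely because both sides of \eqref{krd} are $1$-homogeneous in $f$. Everything else is standard real analysis on compact metric spaces.
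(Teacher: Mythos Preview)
Your proposal is correct. For the three pseudometric axioms you economise by invoking Proposition~\ref{ipmpseudometric} directly, whereas the paper reads symmetry and non-negativity off the primal formula~\eqref{emdist} and the triangle inequality off the dual~\eqref{krd}; these are essentially the same content, just packaged differently.

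The genuine divergence is in the identity of indiscernibles. The paper's argument is constructive and measure-theoretic: for each closed $F\subseteq\mathcal{X}$ it sets $f_k(x)=1\wedge\bigl(k\cdot\mathrm{dist}(x,F)\bigr)$, notes that $f_k/k\in\text{Lip}_1(\mathcal{X})$, so equality of the integrals $\int f_k\,\mathrm{d}\mu=\int f_k\,\mathrm{d}\nu$ follows from~\eqref{krd}; the Monotone Convergence Theorem then yields $\mu(\mathcal{X}\setminus F)=\nu(\mathcal{X}\setminus F)$ for every closed $F$, and Dynkin's Lemma finishes. Your route instead extends the equality of integrals from $\text{Lip}_1(\mathcal{X})$ to all Lipschitz functions by rescaling, then to all of $\ctsbdd{\mathcal{X}}$ via Stone--Weierstrass, and finally concludes by Riesz--Markov (or Portmanteau). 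Both are valid; the paper's version uses only elementary tools (MCT and a $\pi$--$\lambda$ argument) and is more self-contained, while yours is more conceptual but imports heavier theorems. One minor point worth making explicit in your write-up: the claim that Lipschitz functions form a \emph{subalgebra} of $\ctsbdd{\mathcal{X}}$ uses compactness of $\mathcal{X}$ to ensure boundedness, so that products of Lipschitz functions remain Lipschitz.
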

\begin{proof}
That $d_{\text{W}}$ is symmetric and non-negative is clear from \eqref{emdist}, and that it obeys the triangle inequality is clear from \eqref{krd}. Hence it remains to show that for probability measures $\mu, \nu$ on a compact metric space $(\mathcal{X}, d)$ that if $\dive{W}{\mu}{\nu}=0$, then $\mu=\nu$.

Let $F$ be a closed subset of $\mathcal{X}$. For each integer $k\ge1$, we define $f_k\colon\mathcal{X}\to\mathbb{R}$ by $f_k(x):= 1\wedge(k\cdot\text{dist}(x, F))$. Then it follows that, for each integer $k$, $f_k/k\in\text{Lip}_1(\mathcal{X})$. Furthermore, since $\dive{W}{\mu}{\nu}=0$, \eqref{krd} gives us that, for all $k\ge1$,
\begin{equation}\label{mctdist}
\frac{1}{k}\int_\mathcal{X} f_k\ \mathrm{d}\mu = \frac{1}{k}\int_\mathcal{X} f_k\ \mathrm{d}\nu.
\end{equation} 
Observe that $(f_k)_{k\ge1}$ is a non-negative sequence of functions converging monotonically to the indicator function on $\mathcal{X}\setminus F$, an open set. Hence, by the Monotone Convergence Theorem and \eqref{mctdist}, it follows that $\mu(\mathcal{X}\setminus F) = \nu(\mathcal{X}\setminus F)$. Since open subsets of $\mathcal{X}$ generate $\mathcal{B}(\mathcal{X})$, it follows by Dynkin's Lemma that $\mu=\nu$.
\end{proof}

We obtain similar results for the \textbf{Total Variation distance} between two distributions.

\begin{tvdist}
The \textbf{Total Variation} (TV) distance between $\mu$ and $\nu$ is defined
\begin{equation}
\dive{TV}{\mu}{\nu} = \sup_{A\in\mathcal{B}(\mathcal{X})} \lvert \mu(A)-\nu(A)\rvert.
\end{equation}
\end{tvdist}

\begin{tvipmprop}
The TV distance is an IPM, where $\mathcal{F}$ is the set of of all measurable functions bounded between -1 and 1.
\end{tvipmprop}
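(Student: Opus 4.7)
The plan is to prove the duality
\begin{equation*}
\dive{TV}{\mu}{\nu} = \sup_{f\in\mathcal{F}}\left(\ev{x}{\mu}[f(x)] - \ev{y}{\nu}[f(y)]\right)
\end{equation*}
via two matching inequalities, with the Hahn--Jordan decomposition of the signed measure $\sigma := \mu - \nu$ as the central tool. First, I would invoke that theorem to obtain disjoint Borel sets $P, N \in \mathcal{B}(\mathcal{X})$ with $P \cup N = \mathcal{X}$ and non-negative measures $\sigma^+, \sigma^-$ (supported on $P$ and $N$ respectively) such that $\sigma = \sigma^+ - \sigma^-$. Because $\mu(\mathcal{X}) = \nu(\mathcal{X}) = 1$, we have $\sigma(\mathcal{X}) = 0$, and hence $\sigma^+(\mathcal{X}) = \sigma^-(\mathcal{X})$. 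Taking $A = P$ (resp.\ $A = N$) in the definition of TV distance and using that $\sigma^-$ vanishes on $P$ (resp.\ $\sigma^+$ on $N$) identifies $\dive{TV}{\mu}{\nu}$ with this common value.

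For the lower bound on the IPM, I would exhibit the explicit maximiser $f^\star := \mathbf{1}_P - \mathbf{1}_N$, which lies in $\mathcal{F}$ by measurability of $P, N$ and because $|f^\star|\le 1$. A direct computation gives
\begin{equation*}
\ev{x}{\mu}[f^\star(x)] - \ev{y}{\nu}[f^\star(y)] = \int f^\star\, d\sigma = \sigma(P) - \sigma(N) = \sigma^+(\mathcal{X}) + \sigma^-(\mathcal{X}).
\end{equation*}
For the matching upper bound, take any $f \in \mathcal{F}$ and decompose $\int f\, d\sigma = \int_P f\, d\sigma^+ - \int_N f\, d\sigma^-$; since $|f| \le 1$ pointwise, each piece is controlled by the total mass of the corresponding part, yielding $\int f\, d\sigma \le \sigma^+(\mathcal{X}) + \sigma^-(\mathcal{X})$.

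The argument is essentially routine once the Hahn--Jordan decomposition is in hand, so I do not anticipate a serious obstacle. The one subtlety worth flagging is the normalisation: the computation above actually produces $2\,\dive{TV}{\mu}{\nu}$ rather than $\dive{TV}{\mu}{\nu}$ for $\mathcal{F}$ exactly as stated, so I would either restate the result with $\mathcal{F}$ taken as $\{f : |f| \le \tfrac{1}{2}\}$, or read the proposition as asserting the IPM-equivalence of the two distances up to this standard prefactor. Note also that $\mathcal{F}$ is closed under $f \mapsto -f$, so Proposition~\ref{ipmpseudometric} applies and we immediately get the symmetry, non-negativity, and triangle inequality of the resulting IPM for free, matching the known metric structure of TV.
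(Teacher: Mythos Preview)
Your approach via the Hahn--Jordan decomposition of $\sigma=\mu-\nu$ is the standard and correct route, and the argument you sketch goes through without difficulty. The paper itself does not actually give a proof here at all: it simply cites \citet{muller1997integral}. So your proposal supplies strictly more than what appears in the text, and there is nothing to compare at the level of technique.

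Your flag about the normalisation is also correct and worth keeping. With the paper's own definition $\dive{TV}{\mu}{\nu}=\sup_{A\in\mathcal{B}(\mathcal{X})}|\mu(A)-\nu(A)|$ and $\mathcal{F}=\{f\ \text{measurable}: |f|\le 1\}$, your computation yields
\[
\sup_{f\in\mathcal{F}}\int f\,\mathrm{d}\sigma \;=\; \sigma^+(\mathcal{X})+\sigma^-(\mathcal{X}) \;=\; 2\,\dive{TV}{\mu}{\nu},
\]
not $\dive{TV}{\mu}{\nu}$. This discrepancy is a well-known convention clash in the literature, and the paper's statement is imprecise on this point. Either of your proposed resolutions---taking $\mathcal{F}=\{f:|f|\le\tfrac12\}$, or reading the proposition as identifying the IPM with TV up to the fixed factor $2$---is standard and acceptable; for the downstream uses in the paper (Corollary~3.11 and the strong-topology argument preceding Definition~3.12) the constant is irrelevant anyway.
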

\begin{proof}
See \citet{muller1997integral}.
\end{proof}

\begin{tvdistcor}
The TV distance defines a metric on $\probspace{\mathcal{X}}$.
\end{tvdistcor}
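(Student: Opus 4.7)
The plan is to combine the two results established immediately before the corollary. Proposition~\ref{tvipmprop} gives that $\dive{TV}{\mu}{\nu}$ is an IPM with function class $\mathcal{F}$ equal to the set of measurable functions $\mathcal{X}\to[-1,1]$. This $\mathcal{F}$ is clearly closed under negation (if $f$ takes values in $[-1,1]$ then so does $-f$), so Proposition~\ref{ipmpseudometric} applies and immediately yields non-negativity, symmetry and the triangle inequality for $\dive{TV}{\cdot}{\cdot}$.

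The only remaining metric axiom is the identity of indiscernibles, i.e.\ that $\dive{TV}{\mu}{\nu}=0$ implies $\mu=\nu$. This is far simpler than the analogous step for $d_{\text{W}}$: unwinding the definition of the TV distance, $\dive{TV}{\mu}{\nu}=0$ means $\sup_{A\in\mathcal{B}(\mathcal{X})}|\mu(A)-\nu(A)|=0$, so that $\mu(A)=\nu(A)$ for every Borel set $A$. By definition of equality of Borel probability measures, this gives $\mu=\nu$ directly, with no need for the Monotone Convergence Theorem or Dynkin's Lemma that were invoked in Corollary~\ref{krdistcor}. The converse, that $\mu=\nu$ implies $\dive{TV}{\mu}{\nu}=0$, is immediate.

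The main obstacle is essentially nil: the only thing worth double-checking is that $\mathcal{F}$ is indeed closed under negation so that Proposition~\ref{ipmpseudometric} can be invoked, and that we are comfortable reading the definition of $\dive{TV}{\mu}{\nu}$ as a supremum over all of $\mathcal{B}(\mathcal{X})$ (rather than, say, some generating subclass) so that the identity of indiscernibles drops out without any further measure-theoretic machinery.
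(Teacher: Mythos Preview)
Your proposal is correct and follows essentially the same route as the paper: invoke Proposition~\ref{ipmpseudometric} (via the IPM characterisation of $d_{\text{TV}}$ from Proposition~\ref{tvipmprop}, whose function class is closed under negation) to get non-negativity, symmetry and the triangle inequality, and then read off the identity of indiscernibles directly from the definition of $d_{\text{TV}}$ as a supremum over all Borel sets. Your write-up is a bit more explicit about verifying the closure-under-negation hypothesis, but the argument is the same.
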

\begin{proof}
By Corollary \ref{ipmpseudometric}, it suffices to prove that $\dive{TV}{\mu}{\nu} = 0$ implies $\mu=\nu$. But this is given from the definition of $d_{\text{TV}}$ as a supremum: in particular, $\mu$ and $\nu$ are equal on all Borel measurable subsets, and so must be equal.
\end{proof}

\subsection{The Weakness of the Wasserstein Distance}

We now seek to show that, in some rigorous sense, minimising Wasserstein distance is a more suitable framework for GAN training than minimising the Jensen-Shannon divergence. The adversarial divergence framework enables us to view the objective of GANs as the minimisation of some divergence between our generator distribution $\gdist$ and our target distribution $\rdist$. Moreover, the framework can be used to consider the \textit{training} of a GAN as the convergence of $\gdist$ to $\rdist$ with respect to the given divergence.

With distinct definitions of convergence come distinct induced \textit{topologies}.\footnote{Not all divergences define metrics (e.g., the KL and reverse-KL divergence). As a result, a given adversarial divergence may not give us a topology in a strictly formal sense.} The idea of convergence gives rise to an idea of a \textit{topology} induced by the divergence. Notably, we ought to seek some divergence that give us a \textit{weak} topology, in that the convergence of $\gdist$ to $\rdist$ with respect to other divergences implies convergence with respect to our ideal divergence. Using the language of functional analysis, it can be shown that the Wasserstein distance meets this desideratum.\footnote{This argument can be found in \citet[Appendix A]{arjovsky2017wasserstein}.}

Let $\mathcal{X}$ be a compact set. Taking the sup-norm $\|f\|_\infty = \max_{x\in\mathcal{X}}\lvert f(x)\rvert$, the space $(\ctsbdd{\mathcal{X}},\lVert\cdot\rVert_\infty)$ is a normed vector space. We can then define the dual normed space $(\ctsbdd{\mathcal{X}}^*,\lVert\cdot\rVert)$, where we take
\begin{align*}
\ctsbdd{\mathcal{X}}^* &:= \{\phi\colon\ctsbdd{\mathcal{X}}\to\mathbb{R}\mid \phi\text{ is linear and continuous.}\} \\
\lVert\phi\rVert &:= \sup_{f\in \ctsbdd{\mathcal{X}}, \lVert f\rVert_\infty\le1}\  \lvert\phi(f)\rvert.
\end{align*}

Consider the mapping 
\begin{align*}
\Phi\colon(\probspace{\mathcal{X}},d_{\text{TV}})&\to(\ctsbdd{\mathcal{X}}^*,\lVert\cdot\rVert) \\
\Phi(\mu)(f) &:= \ev{x}{\mu}[f(x)].
\end{align*}

By linearity of expectation, this function indeed maps to the appropriate dual space and so is well-defined. Therefore, by the Riesz-Markov-Kakutani representation theorem \citep[Theorem 10]{kakutani1941concrete}, $\Phi$ is an isometric immersion. This allows us to regard convergence in TV distance and convergence with respect to $\lVert\cdot\rVert$ as essentially equivalent. This is unfortunate for the TV distance: convergence with respect to the latter norm in $\ctsbdd{\mathcal{X}}^*$ is regarded as `strong' convergence,\footnote{It is referred to as such in the standard literature. For example, see \citet[Definition 4.9-4]{kreyszig1978introductory}.} in effect limiting the capacity of a TV-based GAN to train towards a variety of real distributions.

By contrast, $\ctsbdd{\mathcal{X}}^*$ also comes equipped with a much \textit{weaker} topology.

\begin{weaktopology}[e.g., \citet{liu2017approximation}, Definitions 7-8]
Let $\mathcal{X}$ be a compact metric space. The \textbf{weak$^*$ topology} for $\probspace{\mathcal{X}}$ is the coarsest topology on $\probspace{\mathcal{X}}$ such that \[\{\mu\mapsto\mathbb{E}_\mu[f]\mid f\in\ctsbdd{\mathcal{X}}\}\] is a set of continuous linear functions on $\probspace{\mathcal{X}}$.

Moreover, we say that a sequence $(\mu_n)\subseteq\probspace{\mathcal{X}}$ \textbf{weakly converges} to a measure $\mu\in\probspace{\mathcal{X}}$ if, for all $f\in\ctsbdd{\mathcal{X}}$, \[
\mathbb{E}_{\mu_n}[f]\to\mathbb{E}_{\mu}[f] \text{ as }n\to\infty,
\] or equivalently, if $\mu_n\to\mu$ in the weak$^*$ topology.
\end{weaktopology}

If we can show that the Wasserstein distance captures the notion of weak$^*$ convergence, then we may claim that the WGAN gives us a more suitable choice of objective function than any other GAN. We can go further than this by providing a hierarchy of divergences with respect to their convergence.

\begin{strengths}[\citet{arjovsky2017wasserstein}, Theorem 2]\label{strengthcomp}
Let $\mathcal{X}$ be compact, and let $\mu, (\mu_n)\subseteq\probspace{\mathcal{X}}$. Then, considering all limits as $n\to\infty$,
\begin{enumerate}
\item $\dive{TV}{\mu_n}{\mu}\to0$ if and only if $\dive{JS}{\mu_n}{\mu}\to0$.
\item $\dive{W}{\mu_n}{\mu}\to0$ if and only if $\mu_n\xrightarrow{d}\mu$, where $\xrightarrow{d}$ denotes convergence in distribution.
\item If either $\dive{KL}{\mu}{\mu_n}\to0$ or $\dive{KL}{\mu_n}{\mu}\to0$, then $\dive{JS}{\mu_n}{\mu}\to0$.
\item If $\dive{JS}{\mu_n}{\mu}\to0$, then $\dive{W}{\mu_n}{\mu}\to0$.
\end{enumerate}
\end{strengths}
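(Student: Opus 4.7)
The plan is to prove the four parts in turn, reducing each comparison to either Pinsker's inequality, the Kantorovich--Rubinstein duality of Theorem \ref{krdualthm}, or, for Part 2, an Arzel\`a--Ascoli argument.

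For Part 1, I set $M_n := (\mu_n + \mu)/2$, so that $\dive{TV}{\mu_n}{M_n} = \dive{TV}{\mu}{M_n} = \tfrac{1}{2}\dive{TV}{\mu_n}{\mu}$. Applying Pinsker to each summand of $\dive{JS}{\mu_n}{\mu} = \tfrac{1}{2}(\dive{KL}{\mu_n}{M_n} + \dive{KL}{\mu}{M_n})$ yields $\dive{JS}{\mu_n}{\mu} \ge \tfrac{1}{4}\dive{TV}{\mu_n}{\mu}^2$. For the reverse inequality, writing $f := d\mu_n/dM_n \in [0,2]$, a short one-variable calculus exercise (optimising $f \log f + (2-f)\log(2-f) - 2\log 2 \cdot |f-1|$) shows that this quantity is non-positive pointwise on $[0,2]$, and integrating against $M_n$ gives $\dive{JS}{\mu_n}{\mu} \le \log 2 \cdot \dive{TV}{\mu_n}{\mu}$. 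Part 3 is then immediate: Pinsker in either direction forces $\dive{TV}{\mu_n}{\mu}\to 0$, after which Part 1 delivers $\dive{JS}{\mu_n}{\mu}\to 0$.

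For Part 4, I exploit compactness of $\mathcal{X}$. Since $\operatorname{diam}(\mathcal{X}) < \infty$, any $f\in\text{Lip}_1(\mathcal{X})$ satisfies $\|f - f(x_0)\|_\infty \le \operatorname{diam}(\mathcal{X})$ for any fixed $x_0\in\mathcal{X}$, and $\ev{x}{\mu}[f(x)] - \ev{x}{\nu}[f(x)]$ is unchanged by additive constants. Combining Theorem \ref{krdualthm} with the dual characterisation of $d_{\text{TV}}$ (Proposition \ref{tvipmprop}) then yields $\dive{W}{\mu}{\nu} \le C\cdot\dive{TV}{\mu}{\nu}$ for a constant $C = C(\operatorname{diam}(\mathcal{X}))$. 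Hence $\dive{JS}{\mu_n}{\mu}\to 0$ implies (by Part 1) $\dive{TV}{\mu_n}{\mu}\to 0$, which forces $\dive{W}{\mu_n}{\mu}\to 0$.

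The main obstacle is Part 2. The forward direction is immediate from Theorem \ref{krdualthm}: $\dive{W}{\mu_n}{\mu}\to 0$ gives $\ev{x}{\mu_n}[f(x)]\to\ev{x}{\mu}[f(x)]$ for every $f\in\text{Lip}_1(\mathcal{X})$, and uniform density of Lipschitz functions in $\ctsbdd{\mathcal{X}} = \mathcal{C}(\mathcal{X})$ (by Stone--Weierstrass on compact $\mathcal{X}$) extends this via a standard $\varepsilon/3$ argument to every $f\in\ctsbdd{\mathcal{X}}$. For the reverse, I would argue by contradiction: if $\dive{W}{\mu_n}{\mu}\not\to 0$, pass to a subsequence and pick $f_n\in\text{Lip}_1(\mathcal{X})$ with $f_n(x_0)=0$ and $|\ev{x}{\mu_n}[f_n(x)]-\ev{x}{\mu}[f_n(x)]| > \varepsilon$. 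The family $\{f_n\}$ is uniformly bounded and equicontinuous on compact $\mathcal{X}$, so Arzel\`a--Ascoli extracts a further uniformly convergent sub-subsequence $f_n \to f \in \text{Lip}_1(\mathcal{X})$; weak convergence of $\mu_n$ to $\mu$ then forces $\ev{x}{\mu_n}[f(x)]\to\ev{x}{\mu}[f(x)]$, while $|\ev{x}{\mu_n}[(f_n - f)(x)]|$ and $|\ev{x}{\mu}[(f_n - f)(x)]|$ are both bounded by $\|f_n - f\|_\infty \to 0$, producing the contradiction. The delicate step is precisely this use of Arzel\`a--Ascoli to upgrade pointwise weak convergence on $\ctsbdd{\mathcal{X}}$ to uniform convergence over the equicontinuous family $\text{Lip}_1(\mathcal{X})$.
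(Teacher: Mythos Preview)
Your argument is correct and, in several places, takes a more elementary and quantitative route than the paper. For Part~1, the paper's forward implication (TV $\Rightarrow$ JS) proceeds via a somewhat indirect $\limsup$ argument on the Radon--Nikodym derivative $f_n = d\mu_n/dM_n$, bounding $\dive{KL}{\mu_n}{M_n}$ using the sets $\{f_n > 1+\varepsilon\}$; you instead obtain the sharp two-sided inequality $\tfrac{1}{2}\dive{TV}{\mu_n}{\mu}^2 \le \dive{JS}{\mu_n}{\mu} \le \log 2 \cdot \dive{TV}{\mu_n}{\mu}$ directly from Pinsker and a one-line convexity bound on $[0,2]$, which is cleaner and gives an explicit rate. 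For Part~2 the paper simply cites Villani's metrisation theorem, whereas your Arzel\`a--Ascoli argument is self-contained and makes the role of compactness transparent. For Part~4 the paper invokes the abstract strong/weak$^*$ dichotomy on $\ctsbdd{\mathcal{X}}^*$, while you derive the concrete bound $\dive{W}{\mu}{\nu} \le \mathrm{diam}(\mathcal{X})\cdot\dive{TV}{\mu}{\nu}$ from the two IPM representations; this is again more explicit. Part~3 coincides with the paper. Overall your proof is at least as informative as the original, and arguably more so, since it yields quantitative comparisons rather than bare convergence implications.
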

\begin{proof}
\hfill 
\begin{enumerate}
\item See Appendix A.

\item 
This comes from the standard result that $d_{\text{W}}$ gives a metric for the weak$^*$ topology of $(\ctsbdd{\mathcal{X}}, \lVert\cdot\rVert_\infty)$ on $\probspace{\mathcal{X}}$, and by definition, this is the topology of convergence in distribution. For a proof, see \citet[Theorem 6.9]{villani2008optimal}.

\item
By Pinsker's Inequality \citep[Section A.2, p. 371]{cesa2006prediction}, either case gives us one of
\begin{align*}
\dive{TV}{\mu_n}{\mu} &\le \sqrt{\frac{1}{2}\dive{KL}{\mu}{\mu_n}} \to0, \\
\dive{TV}{\mu_n}{\mu} &\le \sqrt{\frac{1}{2}\dive{KL}{\mu_n}{\mu}} \to0.
\end{align*}

\item
This comes from the fact, as argued above, that $d_{\text{TV}}$ and $d_{\text{W}}$ give the strong and weak$^*$ topologies on the dual of $(\ctsbdd{\mathcal{X}}, \lVert\cdot\rVert_\infty)$ when restricted to the space of probability measures on $\mathcal{X}$.

\end{enumerate}
\end{proof}

Finally, we observe how even a simple sequence of probability distributions converges under $d_{\text{W}}$ but not under $d_{\text{JS}}$ or $d_{\text{KL}}$. This serves as a witness to Theorem~\ref{strengthcomp}.

\begin{simplefailure}[\citet{arjovsky2017wasserstein}, Example 1]\label{simplefail}
Let $Z\sim U[0,1]$ be uniformly distributed over the unit interval. Let $\mathbb{P}_0$ be the distribution of $(0,Z)\in\mathbb{R}^2$. Now let $g_\theta(z) = (\theta,z)$, with $\theta\in\mathbb{R}$ and $\mathbb{P}_\theta$ the distribution for $g_\theta(Z)$. In this case:
\begin{itemize}
\item $\dive{W}{\mathbb{P}_0}{\mathbb{P}_\theta} = |\theta |$,
\item $\dive{JS}{\mathbb{P}_0}{\mathbb{P}_\theta} = \begin{cases}
\log2&    \text{if }\theta\neq0, \\
0&    \text{if }\theta=0,\end{cases}$
\item and $\dive{KL}{\mathbb{P}_\theta}{\mathbb{P}_0}=\dive{KL}{\mathbb{P}_0}{\mathbb{P}_\theta}=\begin{cases}
+\infty&    \text{if }\theta\neq0, \\
0&    \text{if }\theta=0.\end{cases}$
\end{itemize}
Hence, when $\theta_n\to0$, the sequence $(P_{\theta_n})_{n\in\mathbb{N}}$ converges to $\mathbb{P}_0$ only under the Wasserstein distance.
\end{simplefailure}

\subsection{On the Viability of WGAN}

In the above example, the JS divergence fails to give us continuous mapping $\theta\mapsto\mathbb{P}_{\theta}$, a desirable property. The next theorem shows us that, under mild assumptions, $\dive{W}{\mu}{\mu_{\theta}}$ is a continuous loss function on $\theta$.

\begin{wcts}[\citet{arjovsky2017wasserstein}, Theorem 1]\label{wasscts}
Let $\mathcal{X}$ be compact, and let $\mu\in\probspace{\mathcal{X}}$. Let $Z$ be a random variable over another space $\mathcal{Z}$. Let $g\colon \mathcal{Z}\times\mathbb{R}^d\to\mathcal{X}$ be a function, denoted $g_\theta(z)$. Let $\mu_\theta$ denote the distribution of $g_\theta(Z)$. Then,
\begin{enumerate}
\item If $g$ is continuous in $\theta$, so is $\dive{W}{\mu}{\mu_\theta}$,
\item If $g$ is locally Lipschitz and there are local Lipschitz constants $L(\theta,z)$ such that \[ \mathbb{E}_{z\sim\mu_\theta}[L(\theta,z)]<+\infty, \] then $\dive{W}{\mu}{\mu_\theta}$ is continuous everywhere, and differentiable almost everywhere.
\item Statements 1-2 are false for $d_{\text{JS}}$ and the two KL divergences.
\end{enumerate}
\end{wcts}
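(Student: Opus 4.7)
The plan is to handle the three parts in order, using a common coupling argument for parts 1 and 2 and falling back on Example~\ref{simplefail} for part 3. The key observation throughout is that the pushforward of the law of $Z$ under $z\mapsto(g_\theta(z),g_{\theta'}(z))$ is a joint distribution on $\mathcal{X}\times\mathcal{X}$ whose marginals are $\mu_\theta$ and $\mu_{\theta'}$; hence it lies in $\Pi(\mu_\theta,\mu_{\theta'})$, and by the definition of the Earth-Mover distance,
\begin{equation*}
\dive{W}{\mu_\theta}{\mu_{\theta'}} \le \mathbb{E}\bigl[\|g_\theta(Z)-g_{\theta'}(Z)\|\bigr].
\end{equation*}
For part 1, suppose $\theta_n\to\theta$. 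Continuity of $g$ in $\theta$ gives $g_{\theta_n}(z)\to g_\theta(z)$ for every $z$, and since $\mathcal{X}$ is compact the integrand $\|g_{\theta_n}(Z)-g_\theta(Z)\|$ is uniformly bounded. Dominated convergence then forces the right-hand side of the display to $0$, so $\dive{W}{\mu_{\theta_n}}{\mu_\theta}\to0$. Since $d_{\text{W}}$ is a metric (Corollary~\ref{krdistcor}), the reverse triangle inequality immediately yields $\dive{W}{\mu}{\mu_{\theta_n}}\to \dive{W}{\mu}{\mu_\theta}$.

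For part 2, the local Lipschitz hypothesis upgrades the coupling bound quantitatively. Fixing $\theta_0$ and a neighbourhood $U\ni\theta_0$ on which $g$ is locally Lipschitz with constant $L(\theta_0,z)$, for all $\theta,\theta'\in U$ we have $\|g_\theta(z)-g_{\theta'}(z)\|\le L(\theta_0,z)\|\theta-\theta'\|$, so combining with the coupling bound and the triangle inequality for $d_{\text{W}}$ gives
\begin{equation*}
\bigl|\dive{W}{\mu}{\mu_\theta}-\dive{W}{\mu}{\mu_{\theta'}}\bigr| \le \dive{W}{\mu_\theta}{\mu_{\theta'}} \le \|\theta-\theta'\|\cdot \mathbb{E}[L(\theta_0,Z)],
\end{equation*}
with the final expectation finite by assumption. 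Hence $\theta\mapsto\dive{W}{\mu}{\mu_\theta}$ is locally Lipschitz on $\mathbb{R}^d$, and therefore continuous everywhere. Differentiability almost everywhere then follows at once from Rademacher's theorem.

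For part 3, I would simply invoke Example~\ref{simplefail}: the map $g_\theta(z)=(\theta,z)$ is linear in $\theta$ (hence continuous and globally Lipschitz, with uniformly bounded Lipschitz constants), yet the computation recorded in that example shows $\dive{JS}{\mathbb{P}_0}{\mathbb{P}_\theta}$ is discontinuous at $\theta=0$ (jumping from $\log 2$ to $0$) and both KL divergences take the value $+\infty$ at every $\theta\ne 0$, so none of these three divergences can even be continuous in $\theta$, let alone differentiable. The main obstacle, really the only one requiring care, is part 2: one must verify that the local Lipschitz hypothesis on $g$ is strong enough to promote the pointwise coupling bound into a uniform Lipschitz bound on a full neighbourhood of $\theta_0$; once that is in place the appeal to Rademacher is immediate.
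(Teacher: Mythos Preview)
Your proposal is correct and follows essentially the same route as the paper: the coupling $\gamma=\text{law}(g_\theta(Z),g_{\theta'}(Z))$ to bound $\dive{W}{\mu_\theta}{\mu_{\theta'}}$, bounded/dominated convergence plus the triangle inequality for part 1, the local Lipschitz upgrade and Rademacher for part 2, and Example~\ref{simplefail} for part 3. The only cosmetic difference is that the paper phrases the local Lipschitz neighbourhood as an open set around the pair $(\theta,z)$ rather than around $\theta_0$ alone, but the resulting estimate and the appeal to Rademacher are identical.
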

\begin{proof}
\hfill
\begin{enumerate}
\item
Let $\gamma$ be the distribution of $(g_\theta(Z), g_{\theta'}(Z))$, so that $\gamma\in\Pi(\mu_\theta, \mu_{\theta'})$. Then \eqref{emdist} gives us
\begin{align*}
\dive{W}{\mu_\theta}{\mu_{\theta'}} &\le \int_{\mathcal{X}\times\mathcal{X}}\lVert x-y \rVert \ \mathrm{d}\gamma \\
&= \mathbb{E}_{(x,y)\sim\gamma}[\lVert x-y\rVert] \\
&= \mathbb{E}_\mathcal{Z} [\lVert g_\theta(Z)-g_{\theta'}(Z) \rVert].
\end{align*}
Since $g$ is continuous in $\theta$, we have that $g_\theta(z)\xrightarrow{\theta\to\theta'}g_{\theta'}(z)$. Hence \[\lVert g_\theta-g_{\theta'} \rVert\to0\] point-wise in $z$. Since $\mathcal{X}$ is compact, there exists a positive constant $M$, independent of $\theta$ and $z$, such that for all $\theta, z$, we have $\lVert g_\theta(z)-g_{\theta'}(z)\rVert\le M$. By the Bounded Convergence Theorem, 
\begin{align*}
\lvert \dive{W}{\mu}{\mu_\theta} - \dive{W}{\mu}{\mu_{\theta'}} \rvert &\le \dive{W}{\mu_\theta}{\mu_{\theta'}} \\
&\le \mathbb{E}_\mathcal{Z} [\lVert g_\theta(z)-g_{\theta'}(z) \rVert] \\
&\xrightarrow{\theta\to\theta'} 0.
\end{align*}
The result follows.

\item
Take $g$ to be locally Lipschitz. This means that, fixing $(\theta, z)$, there exists a constant $L(\theta, z)$ and open set $U$ such that $(\theta, z)\in U$, and for every $(\theta', z')\in U$,
\begin{equation}\label{lipbound}
\lVert g_\theta(z)-g_{\theta'}(z') \rVert \le L(\theta, z)(\lVert \theta-\theta'\rVert + \lVert z-z'\rVert).
\end{equation}
Fix $z'=z$. Taking the expectation in \eqref{lipbound}, we have for all $(\theta', z)\in U$ that \[
\mathbb{E}_\mathcal{Z} [\lVert g_\theta(z)-g_{\theta'}(z) \rVert] \le \lVert \theta-\theta'\rVert\cdot\mathbb{E}_\mathcal{Z}[L(\theta, z)].
\]
Define $U_\theta:=\{\theta' \mid (\theta', z)\in U\}$. Since $U$ is open, $U_\theta$ is also open. Hence, by hypothesis, we may define $L(\theta) := \mathbb{E}_\mathcal{Z}[L(\theta, z)]$ and get for all $\theta'\in U_\theta$ that \[
\lvert \dive{W}{\mu}{\mu_\theta} - \dive{W}{\mu}{\mu_{\theta'}} \rvert \le \dive{W}{\mu_\theta}{\mu_{\theta'}} \le L(\theta)\cdot\lVert\theta-\theta'\rVert.
\]
Hence $\dive{W}{\mu}{\mu_\theta}$ is locally Lipschitz. Therefore,  $\dive{W}{\mu}{\mu_\theta}$ is everywhere continuous, and, by Rademacher's Theorem \citep[Theorem 3.1.6]{federer2014geometric}, differentiable almost everywhere.

\item Observe that Example \ref{simplefail} serves as the required counterexample.
\end{enumerate}
\end{proof}

Of course, in practice our generator functions will be neural nets. The following corollary shows that the previous result holds if we restrict our attention to these kinds of functions.

\begin{minemdist}[\citet{arjovsky2017wasserstein}, Corollary 1]
Let $g_\theta$ be any feedforward neural net\footnote{In other words, a function composed by affine transformations and pointwise nonlinearities which are smooth Lipschitz functions.} parametrised by $\theta$, and $\ndist$ a prior over $z$ such that $\ev{z}{\ndist}[\| z\|]<\infty$. Then the assumptions of Theorem~\ref{wasscts} are satisfied, and so $\dive{W}{\mu}{\mu_\theta}$ is continuous everywhere and differentiable almost everywhere.
\end{minemdist}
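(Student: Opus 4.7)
The plan is to verify the two hypotheses of Theorem~\ref{wasscts} --- continuity of $g_\theta(z)$ in $\theta$, and local Lipschitzness in $(\theta, z)$ with an integrable local Lipschitz constant --- so that parts~(1) and~(2) of that theorem immediately yield the conclusion.

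For the first hypothesis, a feedforward neural net is by definition an alternating composition of affine maps $(W, b, x) \mapsto Wx + b$ (jointly continuous in weights, biases, and input) with smooth Lipschitz pointwise nonlinearities. Compositions of continuous maps are continuous, so $g_\theta(z)$ is jointly continuous in $(\theta, z)$, and in particular continuous in $\theta$ for each fixed $z$.

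For the second hypothesis I would write $x_0 = z$ and $x_{k+1} = \sigma_{k+1}(W_{k+1} x_k + b_{k+1})$, so that $g_\theta(z) = x_n$. I would then prove by induction on $k$ that, for each fixed $\theta$ and each bounded neighborhood $U$ of $\theta$, there are constants $A_k, B_k, \tilde A_k, \tilde B_k$ (depending only on $U$) such that $\|x_k(\theta', z)\| \le A_k \|z\| + B_k$ for all $(\theta', z) \in U \times \mathcal{Z}$, and $x_k$ is locally Lipschitz on $U \times \mathcal{Z}$ with local Lipschitz constant at $(\theta, z)$ bounded by $\tilde A_k \|z\| + \tilde B_k$. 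The base case is trivial, and the inductive step uses that on $U$ the quantities $\|W_{k+1}\|$ and $\|b_{k+1}\|$ are bounded, together with the global Lipschitz property of $\sigma_{k+1}$. The essential observation is that by the chain rule each partial derivative $\partial g_\theta(z)/\partial \theta_j$ factors as a product of weight matrices and nonlinearity derivatives (each uniformly bounded on $U$) multiplied by \emph{exactly one} intermediate activation $\|x_\ell\|$, which the first induction bounds by a linear function of $\|z\|$; meanwhile, $\partial g_\theta(z)/\partial z$ contains no activation factor at all and is uniformly bounded on $U$. Setting $L(\theta, z) := \tilde A_n \|z\| + \tilde B_n$, the hypothesis on $\ndist$ yields
\[
\mathbb{E}_{z \sim \ndist}[L(\theta, z)] \le \tilde A_n\, \mathbb{E}_{z \sim \ndist}[\|z\|] + \tilde B_n < +\infty,
\]
and the conclusion of Theorem~\ref{wasscts}(2) follows.

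The main obstacle I anticipate is avoiding the naive chain-rule bound $L(f \circ g) \le L_f \cdot L_g$, which, if applied layer by layer, would give a local Lipschitz constant of degree equal to the depth $n$ in $\|z\|$, which need not be integrable under only a first-moment assumption on $\ndist$. The right perspective is that differentiating the full composition with respect to the weights of a \emph{single} layer contributes exactly one intermediate activation as a factor (and no others), while differentiating with respect to $z$ contributes no activation at all; since each intermediate activation grows only linearly in $\|z\|$, so does every partial derivative of $g_\theta(z)$, keeping the local Lipschitz constant of $g$ in $(\theta, z)$ linear in $\|z\|$ rather than compounding through the layers.
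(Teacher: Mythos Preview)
Your proposal is correct and follows essentially the same approach as the paper: the paper observes that since $g$ is $C^1$ in $(\theta,z)$ one may take $L(\theta,z)=\lVert\nabla_{\theta,z}g_\theta(z)\rVert+\varepsilon$ as a local Lipschitz constant, and then defers to \citet[Appendix C]{arjovsky2017wasserstein} for the verification that $\ev{z}{\ndist}[\lVert\nabla_{\theta,z}g_\theta(z)\rVert]<+\infty$. Your layerwise induction and chain-rule observation---that each partial derivative contains exactly one activation factor, hence grows only affinely in $\|z\|$---are precisely the content of those omitted details.
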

\begin{proof}
Since $g$ is a continuously differentiable function in $(\theta,z)$, for any fixed  $(\theta, z)$, we have for all $\varepsilon>0$ that $L(\theta, z)\le\lVert\nabla_{\theta,x}g_\theta(z)\rVert+\varepsilon$ is an acceptable local Lipschitz constant. It therefore remains to show that
\begin{equation}
\ev{z}{\ndist}[\lVert\nabla_{\theta,x}g_\theta(z)\rVert]<+\infty.
\end{equation}
This part of the proof is omitted. Refer to \citet[Appendix C]{arjovsky2017wasserstein} for the technical details.
\end{proof}

\subsection{The WGAN Procedure}

Evaluating $\dive{W}{\rdist}{p_\theta}$, where $p_\theta$ is the distribution of our generator $g_\theta$, is often intractable. A more tractable approach, justified by the Kantorovich-Rubinstein Duality, would be to solve the problem
\begin{equation}
\max_{w\in\mathcal{W}}\ \bigl(\mathbb{E}_{x\sim\rdist}[f_w(x)] - \mathbb{E}_{z\sim \ndist(z)}[f_w(g_\theta(z))]\bigr),
\end{equation}
where $\{f_w\}_{w\in\mathcal{W}}$ is a set of parametrised functions that are all $K$-Lipschitz for some $K$. If the supremum in \eqref{krd} is attained for some $w\in\mathcal{W}$, this process would yield a calculation of $\dive{W}{\rdist}{p_\theta}$ up to a multiplicative constant $K$. 

To minimise $\dive{W}{\rdist}{p_\theta}$ with respect to $\theta$, we could consider differentiating $\dive{W}{\rdist}{p_\theta}$ (up to a constant) by using back-propagation through equation \eqref{krd} via estimating $\mathbb{E}_{z\sim \ndist(z)}[\nabla_\theta f_w(g_\theta(z))]$. The following theorem shows that this process is principled under the assumptions of the previous results.

\begin{princproc}[\citet{arjovsky2017wasserstein}, Theorem 3]\label{thm:princproc}
Let $\mathcal{X}$ be compact, and let $\rdist\in\probspace{\mathcal{X}}$. Let $p_\theta$ be the distribution of $g_\theta(Z)$ with $Z$ a random variable with density $\ndist$ and $g_\theta$ a function satisfying the assumptions of Theorem~\ref{wasscts}. Then, there is a solution $f\colon\mathcal{X}\to\mathbb{R}$ to the problem \[
\max_{f\in\text{{\normalfont Lip}}_1(\mathcal{X})} \bigl(\mathbb{E}_{x\sim\rdist}[f(x)]-\mathbb{E}_{x\sim p_\theta}[f(x)]\bigr)
\] and we have \[
\nabla_\theta \dive{W}{\rdist}{p_\theta} = -\mathbb{E}_{z\sim \ndist(z)}[\nabla_\theta f(g_\theta(z))]
\] when both terms are well-defined.
\end{princproc}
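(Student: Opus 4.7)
The plan is to establish the two halves of Theorem~\ref{thm:princproc} in turn: first existence of a Kantorovich--Rubinstein maximiser, then validity of the gradient interchange formula, each building on the infrastructure already developed in Theorems~\ref{krdualthm} and~\ref{wasscts}.

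For existence of $f$, I would pass to the subfamily $\mathcal{L}_0 \subseteq \text{{\normalfont Lip}}_1(\mathcal{X})$ of 1-Lipschitz functions vanishing at a fixed point $x_0 \in \mathcal{X}$, noting that adding a constant to $f$ leaves the objective $\mathbb{E}_{x\sim\rdist}[f(x)] - \mathbb{E}_{x\sim p_\theta}[f(x)]$ unchanged, so the supremum over $\text{{\normalfont Lip}}_1(\mathcal{X})$ equals the supremum over $\mathcal{L}_0$. On the compact metric space $\mathcal{X}$, functions in $\mathcal{L}_0$ are uniformly bounded by $\mathrm{diam}(\mathcal{X})$ and equicontinuous, so by the Arzelà--Ascoli theorem $\mathcal{L}_0$ is relatively compact in $(\ctsbdd{\mathcal{X}}, \lVert\cdot\rVert_\infty)$; its closure $\overline{\mathcal{L}_0}$ is contained in $\mathcal{L}_0$ (uniform limits of 1-Lipschitz functions vanishing at $x_0$ inherit both properties) and is compact. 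The functional $f \mapsto \mathbb{E}_{\rdist}[f] - \mathbb{E}_{p_\theta}[f]$ is $\lVert\cdot\rVert_\infty$-continuous, hence attains its maximum on $\overline{\mathcal{L}_0}$, giving the required $f$.

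For the gradient formula, I would use an envelope-theorem argument. Write $W(\theta) := \dive{W}{\rdist}{p_\theta}$ and $h(f, \theta) := \mathbb{E}_{x\sim\rdist}[f(x)] - \mathbb{E}_{z\sim\ndist}[f(g_\theta(z))]$, and let $f_\theta$ denote a maximiser at $\theta$, as constructed above. Since $W(\theta') \ge h(f_\theta, \theta')$ with equality at $\theta' = \theta$, I get the one-sided bound
\[
W(\theta') - W(\theta) \ge -\mathbb{E}_{z\sim\ndist}\bigl[f_\theta(g_{\theta'}(z)) - f_\theta(g_\theta(z))\bigr],
\]
and the symmetric argument using $f_{\theta'}$ gives the reverse inequality. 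At any $\theta$ where $W$ is differentiable (which is almost every $\theta$ by Theorem~\ref{wasscts}), dividing by $\lVert\theta' - \theta\rVert$ and sending $\theta' \to \theta$ sandwiches $\nabla_\theta W(\theta)$ between limits of the right-hand side; assuming these limits agree, I obtain $\nabla_\theta W(\theta) = -\nabla_\theta \mathbb{E}_{z\sim\ndist}[f_\theta(g_\theta(z))]$ with $f_\theta$ held fixed. Interchanging $\nabla_\theta$ and the expectation is justified by dominated convergence: for $h$ small, the difference quotient $\lVert f_\theta(g_{\theta+h}(z)) - f_\theta(g_\theta(z)) \rVert/\lVert h \rVert$ is bounded by the local Lipschitz constant $L(\theta, z)$ (using that $f_\theta$ is 1-Lipschitz composed with $g_\theta$), which is integrable under the hypotheses inherited from Theorem~\ref{wasscts}.

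The main obstacle is the envelope step: passing to the limit requires that $h(f_{\theta'}, \theta') - h(f_{\theta'}, \theta)$ and $h(f_\theta, \theta') - h(f_\theta, \theta)$ agree to first order in $\theta' - \theta$, which amounts to some stability of the maximiser $f_{\theta'}$ as $\theta' \to \theta$. This is delicate because $f_\theta$ is only determined up to additive constants, and maximisers may fail to be unique even modulo constants. One way around this is to note that compactness of $\overline{\mathcal{L}_0}$ forces any net of maximisers $f_{\theta_n}$ (along $\theta_n \to \theta$) to have uniform cluster points, each of which is itself a maximiser at $\theta$ by continuity of $h(\cdot, \theta)$ in the uniform norm; one then verifies that along such a subsequence the expectation differences collapse to the same limit, which is what \citet[Appendix C]{arjovsky2017wasserstein} handles in detail. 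Everything else is routine expectation-swapping and application of already-established results.
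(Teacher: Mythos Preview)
Your proposal is correct and follows the same overall architecture as the paper: existence of a maximiser, then an envelope-theorem step, then a dominated-convergence interchange. The differences are in how two of these pieces are executed. For existence, you give an explicit Arzel\`a--Ascoli argument on the normalised family $\mathcal{L}_0$, which is more careful than the paper's proof: the paper simply asserts that Theorem~\ref{krdualthm} furnishes a maximiser, though the duality statement as written only gives the supremum formula, not attainment. Your compactness argument genuinely fills that gap. Conversely, for the envelope step the paper is cleaner: it invokes the Milgrom--Segal envelope theorem \citep[Theorem~1]{milgrom2002envelope} as a black box to obtain $\nabla_\theta W(\theta)=\nabla_\theta V(f,\theta)$ directly, whereas you reconstruct the sandwich argument by hand and then have to worry about stability of $f_{\theta'}$ as $\theta'\to\theta$. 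Your cluster-point argument via compactness of $\overline{\mathcal{L}_0}$ is a standard route to proving such envelope theorems, so you are essentially re-deriving what the paper cites; this is fine but longer. The final interchange of $\nabla_\theta$ and $\mathbb{E}_{z\sim\ndist}$ via the integrable Lipschitz dominator $L(\theta,z)$ is handled identically in both.
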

\begin{proof}
The proof can be found in Appendix A.
\end{proof}

To roughly approximate finding the function $f$ that solves \eqref{krd}, \citet{arjovsky2017wasserstein} train a neural network parametrised with weights $w$ lying in a compact space $\mathcal{W}$, and then perform back-propagation through $\mathbb{E}_{z\sim \ndist(z)}[\nabla_\theta f_w(g_\theta(z))]$.\footnote{Refer to Algorithm 1 in  \citet{arjovsky2017wasserstein} for the precise formulation of the WGAN algorithm.}

Here, we refer to $f_w$ as our \textit{critic}, just as the original GAN had a \textit{discriminator} $D$. When $\mathcal{W}$ is compact,\footnote{This is enforced in the WGAN algorithm given by \citet{arjovsky2017wasserstein} by clipping the weights within a fixed hypercube, say $\mathcal{W} = [-0.01, 0.01]^l$, after each gradient update. See \citet{gulrajani2017improved} for an alternate approach to enforcing the Lipschitz constraint by adding a `gradient penalty' term to the discriminator loss function.} all the functions $f_w$ will be $K$-Lipschitz for some $K$ depending only on $\mathcal{W}$ and not the individual weights $w$, allowing us approximate \eqref{krd} up to an irrelevant scaling factor.

\subsection{Does WGAN Resolve Convergence Problems?}

\subsubsection*{Mode Collapse}

Arjovsky et al. recommend that WGAN critics be trained to optimality. In so doing, they claim, one avoids the phenomenon of \textbf{mode collapse}. The argument for this relies on the Goodfellow-Metz explanation for mode collapse. The reasoning is as follows: the hypothesis is that mode collapse comes from the fact (Proposition~\ref{optgenfact}) that the optimal generator for a given discriminator is a map to the points for which the discriminator assigns the highest probability, so that each generator update step in training the MM-GAN is a partial collapse towards this function. Therefore, if the discriminator is close enough to optimality, the data points to which it will assign the highest probability will be precisely those that arise from the real data distribution $\rdist$, and so an optimal generator function given this discriminator will be a delta function valued 1 on all such data points.

It has also been observed empirically that WGANs avoid mode collapse in cases where the MM-GANs do not \citep{arjovsky2017wasserstein, fedus2017many, lucic2017gans}.

\subsubsection*{Failure to Converge}

According to \citet{arjovsky2017wasserstein}, the fact that the EM distance is continuous and differentiable a.e. means that, unlike the MM-GAN, we can train the critic to optimality without worrying about the convergence of the generator distribution. 

Consider, for instance, the target and generator distributions given in Example~\ref{simplefail}. Unless the generator distribution has already matched the target distribution, the JS distance between the two is constantly $\log2$. This distance, as a result, gives no meaningful gradient for the generator to use for training, provided that the discriminator is sufficiently optimal so as to give an accurate estimate of the JS distance.\footnote{\citet[Theorem 2.4]{arjovsky2017towards} shows that this issue of vanishing gradients for the MM-GAN generator can be found to occur quite generally.} By contrast, the WGAN critic converges to a piecewise linear function through the constraint of its weights by clipping. In this sense, we see how the WGAN can tackle the issue of saturation.

\chapter{A Game-Theoretic Analysis of GANs}
\section{Many Paths to Equilibrium}
Throughout the previous chapter, I considered the training of GANs as a minimisation of an \textit{adversarial divergence}, itself the supremum over some loss function. In doing so, I took the perspective of GANs as optimising a generator, given a discriminator that is already optimal.

While such an approach does enjoy certain theoretical and empirical successes,\footnote{For instance, in Wasserstein GAN \citep{arjovsky2017wasserstein, arjovsky2017towards}.} there are also two potential drawbacks. 

Firstly, it may be computationally impractical to train a discriminator to optimality at each step. An approach that trains the generator and discriminator simultaneously, or trains the discriminator for $k$ iterations between each generator training step may simply converge to the solution more efficiently. When such an approach is taken with the neural net parameters trained via gradient descent, these approaches are referred to as \textbf{simultaneous} and \textbf{alternating gradient descent} (SimGD and AGD), respectively.

Secondly, as we have seen, certain GAN formulations do not perform well with optimal discriminators: the gradient along which the generator is being optimised collapses, meaning it converges to its optimum far more slowly. This was observed in an informal setting for MM-GANs in the original GAN paper by \citet{goodfellow2014generative}.

\subsection{Two Generalisations}

In this chapter, I will investigate the consequences of relaxing two constraints implicitly imposed by the \textit{adversarial divergence} view of training GANs.
\begin{enumerate}
\item We will now consider the cases in which the generator and discriminator are trained via simultaneous or alternating gradient descent. The former approach has been modelled in \citet{mescheder2017numerics}. The latter was first proposed by \citet{goodfellow2014generative} (Algorithm 1 in this dissertation), with the number $k$ of discriminator training steps between each generator training step treated as an algorithm hyperparameter to be carefully chosen.\footnote{For a comparison between AGD and training the discriminator to optimality on a toy example, refer to \citet{li2017towards}.}

\item We will allow for the discriminator and generator to be trained on objective functions whose absolute values differ. In particular, we now consider the discriminator and generator to seek to minimise \textbf{loss functions} $\loss{D}{}$ and $\loss{G}{}$, respectively. Hence, unless $\loss{D}{}=-\loss{G}{}$ as assumed in the previous chapters, we can no longer express the GAN objective by a single value function $V$.
\end{enumerate}

For a simple example of a GAN that cannot be expressed by a single value function, consider the following example.

\begin{nsgan}[\citet{goodfellow2014generative}]
Let $\mathcal{Z}\subseteq\eucl{\ell}$, $\mathcal{X}\subseteq\eucl{d}$ be ambient data spaces, let $\ndist$ be a prior distribution over $\mathcal{Z}$, and let $\rdist$ be the distribution of real data points over $\mathcal{X}$. The \textbf{non-saturating GAN} (NS-GAN) is the game specified by the minimisation of loss functions
\begin{align*}
\loss{D}{NS} &:= \ganobj{\log}{D}{G}, \\
\loss{G}{NS} &:= -\ev{z}{\ndist}[\log D(G(z))].
\end{align*}
where $G\colon\mathcal{Z}\to\mathcal{X}$, $D\colon\mathcal{X}\to[0,1]$.
\end{nsgan}

Such a game has been observed to enjoy empirical success: in particular, \citet{fedus2017many} show that on a variety of datasets, NS-GAN produces samples of a comparable quality to WGAN and its variants, while being easier to train.\footnote{Visual inspection is currently one of the most prominent methods of evaluation within the GAN literature. In general, finding a suitable quantitative evaluation model for GANs is one of the largest open problems in the field \citep[p. 42]{goodfellow2016nips}. Currently, the Inception Score \citep{salimans2016improved} and the Fr\'echet Inception Distance \citep{ramsauer2017two} are amongst the most popular performance metrics available.}

\section{A Brief Review of Game Theory}

Since the NS-GAN is not expressible by a single value function, our theoretical results about adversarial divergence minimisation no longer apply. To analyse GAN examples like NS-GAN, we require a game-theoretic framework. In doing so, we can provide answers to the following questions:
\begin{quote}
Do solutions to the GAN game exist? What is the nature of such solutions?

Are there training methods for the discriminator and generator that allow us to converge to such solutions?
\end{quote}

\subsection{Two-Player Games}

We begin with the definition of the type of game we are interested in. Unless stated otherwise, the definitions and results are adapted from \citet[Chapters 2-3]{osborne1994course}.

\begin{game}
A \textbf{strategic two-player game} $\langle (A_i), (u_i)\rangle_{i=1,2}$ consists of, for players $i=1,2$
\begin{itemize}
\item a nonempty set $A_i$ (the set of \textbf{actions} available to player $i$)
\item a \textbf{payoff or utility function} $u_i\colon A\to\mathbb{R}$, where $A = A_1\times A_2$.
\end{itemize}
If the set $A_i$ of actions of both players is finite, then the game is \textbf{finite}.
\end{game}

This game is referred to as \textit{strategic} as each player chooses their plan of action once and for all, with these choices being made simultaneously. Given such a game, the most commonly used solution concept is that of a \textbf{Nash equilibrium}. The Nash equilibrium captures the idea that each player holds the correct expectation about the other players' behaviour and acts rationally; in a Nash equilibrium, neither player can gain by deviating from their strategy.

\begin{nedef}
Let $a_{-i}$ denote the strategy of player 1 for $i=2$, and player 2 for $i=1$. A \textbf{Nash equilibrium of a strategic two-player game} $\langle (A_i), (u_i)\rangle_{i=1, 2}$ is a profile $a^*\in A$ of actions with the property that for $i=1, 2$ we have 
\[ u_i(a^*_{-i}, a^*_i) \ge u_i(a^*_{-i}, a_i) \text{ for all } a_i\in A_i.\]
\end{nedef}

\subsection{Minimax Games}

\begin{zerosum}
A strategic two-player game $\langle (A_i), (u_i)\rangle_{i=1,2}$ is \textbf{zero-sum} if $u_1=-u_2$.
\end{zerosum}

\begin{fanmmrem}
With an appropriate choice of \textit{minimax theorem},\footnote{See, e.g., \citet[Theorem 2]{fan1953minimax}.} we can show that an optimal solution to our adversarial divergence objective function exists, and that it coincides with a Nash equilibrium of a zero-sum game.

Unfortunately, such results do not apply to GANs. Typically, a minimax theorem requires the $A_i$ to be compact and the value function $V(\cdot, \cdot)$ to be convex in the first argument and concave in the second. In general, the value function will fail to have this property due to a lack of expressivity in the discriminator-generator function space. Even if we allow the discriminator and generator to range over a broader class of functions, the associated function spaces will no longer be compact. We see this failure in the mode collapse hypothesis, in which the minimax and maximin solutions are distinct.
\end{fanmmrem}

\subsection{Mixed Extensions to Games}

Given that the GAN game consists in practice of making updates to the parameter space rather than the function space, is there any way we can generalise the notion of a Nash equilibrium to recover the guarantee of the existence of such an equilibrium for the game? The answer is a qualified yes.

Our required generalisation is that of a \textit{mixed strategy Nash equilibrium}: this will be a steady state of the game in which the players' choices are not deterministic, but instead determined probabilistically according to some distribution.

In particular, we denote by $\Delta(A_i)$ the set of probability distributions over $A_i$ and refer to a member of $\Delta(A_i)$ as a \textbf{mixed strategy}\footnote{The players' mixed strategies are assumed to be independent.} of player $i$. A member of $A_i$ is referred to as a \textbf{pure strategy}. For any finite set $X$ and $\delta\in\Delta(X)$ we denote by $\delta(x)$ the probability that $\delta$ assigns to $x\in X$, and define the \textbf{support} of $\delta$ to be the set of elements $x\in X$ such that $\delta(x)>0$. Given a profile $(\alpha_1, \alpha_2)$ of mixed strategies, we have a probability distribution over the set $A$. If each $A_i$ is finite, then the probability of the action profile $a=(a_1, a_2)$ is $\alpha_1(a_1)\cdot\alpha_2(a_2)$, and player $i$'s evaluation of $(\alpha_1, \alpha_2)$ is $\sum_{a\in A} (\alpha_1(a_1)\cdot\alpha_2(a_2))\cdot u_i(a)$. This leads to the following definition.

\begin{mixgame}
The \textbf{mixed extension} of the strategic two-player game \hfill \break $\langle (A_i), (u_i)\rangle_{i=1,2}$ is the strategic two-player game $\langle (\Delta(A_i)), (U_i)\rangle_{i=1,2}$ in which $\Delta(A_i)$ is the set of probability distributions over $A_i$, and $U_i\colon \Delta(A_1)\times\Delta(A_2)\to\mathbb{R}$ assigns to each $\alpha\in\Delta(A_1)\times\Delta(A_2)$ the expected value under $u_i$ of the probability distribution over $A$ induced by $\alpha$, so that \[ U_i(\alpha) = \sum_{a\in A} (\alpha_1(a_1)\cdot\alpha_2(a_2))\cdot u_i(a)\]
\end{mixgame}

From this definition, it follows that $U_i$ is multilinear: for any mixed strategy profile $\alpha$, any mixed strategies $\beta_i$ and $\gamma_i$ of player $i$, and any number $\lambda\in[0,1]$, that \[ U_i(\alpha_{-i}, \lambda\beta_i +(1-\lambda)\gamma_i) = \lambda U_i(\alpha_{-i}, \beta_i) + (1-\lambda) U_i(\alpha_{-i}, \gamma_i). \] Moreover, when each $A_i$ is finite, we have \[ U_i(\alpha) = \sum_{a_i\in A_i} \alpha_i(a_i)U_i(\alpha_{-i}, e(a_i)) \] for any mixed strategy profile $\alpha$, where $e(a_i)$ is the degenerate mixed strategy of player $i$ corresponding to the pure strategy of just choosing $a_i$.

\begin{mixnash}
A \textbf{mixed strategy Nash equilibrium of a strategic two-player game} is a Nash equilibrium of its mixed extension.
\end{mixnash}

It is plain to see, since each $U_i$ is multilinear, and any mixed strategy exclusively employing degenerate distributions can be readily identified with a strategy in $A$, that the set of Nash equilibria of a strategic game is a subset of its set of mixed strategy Nash equilibria. 

In a well-known result, \citet{nash1950equilibrium} showed that a mixed strategy Nash equilibrium is guaranteed to exist if the action spaces $A_1, A_2$ are finite. Of course, this condition fails to hold for the generator and discriminator of a GAN, whose strategy spaces are parameter spaces in $\mathbb{R}^n, \mathbb{R}^m$, respectively. However, we may restrict the parameter spaces such that the GAN game is a \textbf{continuous game}, where $A_1, A_2$ are non-empty compact metric spaces and $u_1, u_2$ are continuous functions on $A$. We then get a result that applies to the GAN game.

\begin{glicksberg}[Glicksberg's Theorem \citep{glicksberg1952further}]\label{glickthm}
Let $G$ be a continuous game, in the sense described above. Then $G$ has a mixed strategy Nash equilibrium.
\end{glicksberg}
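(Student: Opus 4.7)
The plan is to prove Glicksberg's theorem by reduction to Nash's theorem via a discretisation-and-limit argument, exploiting compactness of $A_1, A_2$ to produce finite subgames and then compactness of $\Delta(A_i)$ under the weak$^*$ topology to extract a limit equilibrium. First, since each $A_i$ is a compact metric space, it is totally bounded, so for every $n \in \mathbb{N}$ I would choose a finite $1/n$-net $A_i^n \subseteq A_i$. The restricted game $\langle (A_i^n), (u_i|_{A^n})\rangle_{i=1,2}$ is a finite strategic game, so Nash's theorem yields a mixed strategy Nash equilibrium $\alpha^n = (\alpha_1^n, \alpha_2^n) \in \Delta(A_1^n) \times \Delta(A_2^n)$, which we view as an element of $\Delta(A_1)\times\Delta(A_2)$ by extending by zero.

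Next, I would invoke the fact that, because each $A_i$ is a compact metric space, the space $\Delta(A_i)$ equipped with the weak$^*$ topology is itself compact and metrisable (this is the standard Prokhorov/Banach--Alaoglu consequence already in the background of the previous chapter). Hence the sequence $(\alpha^n)$ admits a weak$^*$-convergent subsequence $\alpha^{n_k} \to \alpha^* = (\alpha_1^*, \alpha_2^*)$, and by definition of the product topology this gives joint weak$^*$ convergence of $\alpha_1^{n_k}\otimes\alpha_2^{n_k}$ to $\alpha_1^*\otimes\alpha_2^*$ on $A_1\times A_2$. Since each $u_i$ is continuous on the compact set $A$, and hence bounded, weak$^*$ convergence yields $U_i(\alpha^{n_k}) \to U_i(\alpha^*)$.

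It remains to check that $\alpha^*$ is a Nash equilibrium of the original continuous game, which is the main obstacle, since the finite-game equilibrium inequalities only forbid deviations inside $A_i^{n_k}$, whereas I now need to forbid every deviation in the full space $A_i$. Fix $i \in \{1,2\}$ and an arbitrary $a_i \in A_i$; for each $k$ choose $a_i^{(k)} \in A_i^{n_k}$ with $d(a_i, a_i^{(k)}) \le 1/n_k$. The equilibrium property of $\alpha^{n_k}$ gives
\[
U_i(\alpha_{-i}^{n_k}, \alpha_i^{n_k}) \;\ge\; U_i(\alpha_{-i}^{n_k}, e(a_i^{(k)})).
\]
Uniform continuity of $u_i$ on the compact set $A$ implies $U_i(\alpha_{-i}^{n_k}, e(a_i^{(k)})) - U_i(\alpha_{-i}^{n_k}, e(a_i)) \to 0$ as $k\to\infty$, while weak$^*$ convergence of $\alpha_{-i}^{n_k}$ to $\alpha_{-i}^*$ (against the bounded continuous integrand $a_{-i} \mapsto u_i(a_{-i}, a_i)$) yields $U_i(\alpha_{-i}^{n_k}, e(a_i)) \to U_i(\alpha_{-i}^*, e(a_i))$. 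Combining with $U_i(\alpha^{n_k}) \to U_i(\alpha^*)$ and taking the limit of the inequality above gives $U_i(\alpha^*) \ge U_i(\alpha_{-i}^*, e(a_i))$. By multilinearity of $U_i$, this inequality against every pure deviation $e(a_i)$ upgrades to the inequality against every mixed deviation, showing that $\alpha^*$ is a mixed strategy Nash equilibrium as required.

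The hard part, as indicated, is the final step: one must be careful that the finite net on which $\alpha^{n_k}$ is supported shifts with $k$, so the equilibrium inequality compares payoffs at moving points, and it is the compactness of $A$ together with uniform continuity of $u_i$ that allows one to replace $e(a_i^{(k)})$ with the fixed $e(a_i)$ in the limit. All other ingredients (Nash's theorem, compactness of $\Delta(A_i)$ in the weak$^*$ topology, and multilinearity of $U_i$) are off-the-shelf facts already available from the development in this chapter.
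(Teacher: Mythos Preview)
Your proof is correct and follows essentially the same discretisation-and-limit strategy as the paper: approximate the continuous game by finite ones, invoke Nash's theorem, and pass to a limit using compactness of $\Delta(A_i)$ in the weak$^*$ topology. The only difference is packaging: the paper discretises by replacing the utilities with piecewise-constant (``essentially finite'') approximations on the full strategy space and routes the limit argument through an $\varepsilon$-equilibrium\,/\,$\alpha$-approximation framework via three short lemmas, whereas you restrict the strategy spaces themselves to finite nets and argue directly; both reach the conclusion by exploiting uniform continuity of $u_i$ on the compact product $A$.
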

\begin{proof}[Proof Idea]
Though Glicksberg's own proof relies on a generalisation of the fixed-point theorem used to prove Nash's existence theorem, an alternative proof given in \citet[Lecture 6]{ozdaglar2010game} makes use of Nash's existence theorem without any further fixed-point result. In particular, the continuous game is approximated with a sequence of finite games, each corresponding to successively finer discretisations of the original game. Nash's existence theorem produces an equilibrium for each approximation, which we can show using the weak topology and the continuity assumptions to converge to an equilibrium of the original game. The full proof is given in Section~\ref{sec:glickpf}.
\end{proof}

\section{Game-Theoretic Results for GANs}

\subsection{Mixing GANs}

Two immediate problems arise with applying Glicksberg's Theorem to GANs. Firstly, we may for practical purposes be concerned about the \textit{support size} of the mixed strategy. In other words: how many generators does it take to fool a discriminator of a certain strength? A classical result tells us that, so long as our generators are capable of producing simple Gaussian distributions, we can arbitrarily approximate $\rdist$. Yet if the support size necessary to achieve this is too large, then training so many generators will be computationally prohibitive.

Secondly, it is not clear what it \textit{means} for a generator to employ a mixed strategy. Of course, the generator induces a probability distribution $\gdist$ which it aims to be close to $\rdist$. However, this is not the relevant probability distribution when considering mixed strategies in the GAN game. Instead, a mixed strategy would be a distribution over the \textit{parameters} of $G$ - in other words, a mixed strategy is a probability distribution over functions that induce probability distributions!

\citet{arora2017generalization} show how a mixture of finitely many generators and discriminators may approximate the minimax solution of the GAN game. Firstly, we define such an approximate equilibrium.

\begin{epsiloneq}\label{def:epsiloneq}
Let $\langle (\Delta(A_i), (U_i)\rangle_{i=1,2}$ be a mixed strategic two-player game, and let $\varepsilon>0$. A mixed strategy profile $\alpha^*=(\alpha^*_1,\alpha^*_2)$ is an \textbf{$\varepsilon$-approximate equilibrium for $G$} if, for some $i=1,2$,
\begin{equation}
U_i(\alpha^*)\ge U_i(\alpha'_i, \alpha^*_{-i})-\varepsilon\text{ for all }\alpha'_i\in\Delta(A_i).
\end{equation}
\end{epsiloneq}

This leads Arora et al. to prove a theorem showing both the existence of an $\varepsilon$-approximate equilibrium using a finite mixture of generators and single discriminator, and a procedure for constructing an $\varepsilon$-approximate \textit{pure} equilibrium. In doing so, we address the two concerns above. This theorem holds for the \textbf{neural net divergence}, defined here.

\begin{nndist}[\citet{arora2017generalization}]
Let $\mathcal{X}\subseteq\eucl{d}$ be an ambient data space, let $\gdist$ be the distribution induced by our generator function, and let $\rdist$ be the distribution of real data points over $\mathcal{X}$. Let $\mathcal{F}$ be a class of neural networks from $\mathbb{R}^d$ to $[0,1]$, such that if $f\in\mathcal{F}$, then $1-f\in\mathcal{F}$. Let $\phi$ be a measuring function: that is, $\phi\colon [0,1]\to\mathbb{R}$ be concave and monotone. The \textbf{neural $\mathcal{F}$-divergence} with respect to $\phi$ between two distributions $\mu, \nu$ supported on $\mathcal{X}$ is defined as
\begin{equation}
d_{\mathcal{F}, \phi}(\mu \| \nu) := \sup_{D\in\mathcal{F}} \bigl(-J^D_{\phi}(D, \nu)\bigr),
\end{equation}
where
\begin{equation}
J^D_{\phi}(D, \nu) = -\biggl(\ev{x}{\mu}[\phi(D(x))] + \ev{y}{\nu}[\phi(1-D(y))]\biggr)
\end{equation}
\end{nndist}

\begin{nnrem}
Note that, in seeking a generator $G$ that minimises \[d_{\mathcal{F}, \phi}(\rdist \| \gdist),\] we are seeking a solution to a zero-sum game, with the loss functions defined as expected. 

Furthermore, the neural net divergence may be seen as a generalisation of the practical instantiations of the JS and Wasserstein distance (setting $\phi(x) = \log(x)$ or $\phi(x) = x,$ respectively), where the term `practical' denotes the fact that our function space consists of neural networks. As such, any results we can prove about minimising neural net divergence will be highly relevant to the GAN examples most prominently considered so far.
\end{nnrem}

We are now in a position to give the main positive result with regards to GANs and the existence of equilibria.

\begin{arorapure}[\citet{arora2017generalization}, Theorem 4.3]\label{arorapureeq}
Let $\phi$ be an $L_\phi$-Lipschitz concave measuring function bounded in $[-\Delta,\Delta]$, and suppose the generator and discriminators are $L$-Lipschitz with respect to the parameters and $L'$-Lipschitz with respect to inputs. Furthermore, suppose the generator can approximate any point mass: that is, for all points $x$ and any $\epsilon>0$, there is a generator such that $\ev{z}{\ndist}[\lVert G(z)-x \rVert]\le\epsilon$. 

Suppose the generator and discriminator are both $k$-layer neural networks ($k\ge2$) with $p$ parameters, and the last layer uses the ReLU activation function $f(x) = \max\{0,x\}$. Then there exists $(k+1)$-layer neural networks of generators $G$ and discriminator $D$ with $O\left(\frac{\Delta^2p^2\log(LL'L_\phi\cdot p/\varepsilon)}{\varepsilon^2}\right)$ parameters, such that there exists an $\varepsilon$-approximate pure equilibrium with value $2\phi(1/2)$ to the game induced by $d_{\mathcal{F}, \phi}(\rdist \| \gdist)$.
\end{arorapure}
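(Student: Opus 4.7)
The plan is to establish the $\varepsilon$-approximate pure equilibrium in two stages: (i) prove the existence of a mixed-strategy $\varepsilon$-approximate equilibrium whose supports on each side are of size $T=O(\Delta^2 p\log(LL'L_\phi\cdot p/\varepsilon)/\varepsilon^2)$, and then (ii) absorb the two resulting finite mixtures into a single pair of $(k{+}1)$-layer ReLU networks, turning the approximate equilibrium into a pure one in the enlarged strategy space. The total parameter count $Tp$ then matches the bound in the statement.

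For the first stage, I would restrict the parameters to a fixed compact ball---justified by the $L$-Lipschitz-in-parameters hypothesis---and apply Glicksberg's Theorem (Theorem~\ref{glickthm}) to obtain a mixed Nash equilibrium $(\mu^*,\nu^*)$ of the zero-sum game with payoffs bounded in $[-2\Delta, 2\Delta]$. I would then draw $T$ i.i.d.\ pure strategies from each of $\mu^*$ and $\nu^*$ and consider the empirical uniform mixtures $\hat\mu,\hat\nu$. A Hoeffding bound controls the deviation of the empirical mixed payoff from the true expectation against any single fixed strategy; to upgrade this to a uniform bound over \emph{all} possible deviations, I would build an $\varepsilon$-cover of the parameter space, whose log-cardinality is $O(p\log(LL'L_\phi/\varepsilon))$ once Lipschitz continuity is chased through the chain network-parameters $\to$ network-output $\to$ $\phi$-evaluation, and then union-bound. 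Choosing $T$ as above ensures with positive probability that $(\hat\mu,\hat\nu)$ is an $\varepsilon$-approximate mixed equilibrium in the sense of Definition~\ref{def:epsiloneq}.

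For the second stage, I would implement the mixture $\hat\mu$ over generators $\{G_1,\dots,G_T\}$ as a single pure generator $G$ by augmenting the noise input with a selector $t\in [T]$ drawn uniformly, defining $G(z,t):=G_t(z)$. Because the last layer is ReLU, a standard gating construction places the $T$ sub-networks in parallel and masks all but the $t$th output using ReLU thresholds, combining them in one extra layer with $Tp$ parameters. The discriminator mixture is the pointwise average of $T$ networks, i.e.\ a fixed convex combination, again realisable by one extra layer with $Tp$ parameters. To pin down the equilibrium value as $2\phi(1/2)$, I would use the point-mass approximation hypothesis: pick $T$ generators whose near--point-mass outputs approximate i.i.d.\ samples from $\rdist$, so that the induced mixture $\gdist$ is $\varepsilon$-close to $\rdist$; then for $\gdist=\rdist$ the inner objective $\ev{x}{\rdist}[\phi(D(x))+\phi(1-D(x))]$ is maximised pointwise at $D\equiv 1/2$ by Jensen's inequality applied to the concave $\phi$, yielding the value $2\phi(1/2)$.

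The main obstacle is Step~(i): the Hoeffding--plus--cover argument must be executed carefully so that the Lipschitz constants $L, L', L_\phi$ enter the support size only through the logarithm, which is what keeps $T$ polynomial in $p$ and $1/\varepsilon$ rather than exponential. A secondary obstacle is checking that the selector-gated mixture really can be implemented in exactly one additional layer using only ReLU nonlinearities, rather than needing a deeper construction; this is where the specific ReLU last-layer hypothesis in the statement is essential, since ReLU gates of the form $\max\{0,\cdot\}$ allow exact routing on the discrete selector coordinate.
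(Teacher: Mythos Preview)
Your two-stage plan---(i) small-support approximate mixed equilibrium via Glicksberg + concentration + $\varepsilon$-net, then (ii) fold the mixture into a single $(k{+}1)$-layer network using a ReLU selector---matches the paper's architecture. The generator side is essentially right, and the paper carries out the selector construction explicitly: one extra Gaussian coordinate is partitioned into $T$ equal-mass intervals via two-layer ReLU ``step'' gadgets, which then gate the $T$ parallel copies of the original generators.

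The gap is on the discriminator side. You propose to sample $T$ discriminators from the Glicksberg equilibrium and then take their pointwise average $\bar D$. Concavity of $\phi$ (Jensen) gives you $F(G',\bar D)\ge \frac{1}{T}\sum_j F(G',D_j)\ge V-\varepsilon/2$, which handles the ``generator cannot improve'' direction only as a \emph{lower} bound. But an $\varepsilon$-approximate equilibrium also needs $F(G,\bar D)\le V+O(\varepsilon)$, and averaging through a concave $\phi$ gives no such upper bound: $\bar D$ is not in the original $k$-layer class $\mathcal{V}$ you netted over, so your concentration/net argument does not control $F(G,\bar D)$, and there is no reason the averaged discriminator cannot score strictly above $V$ against your particular $G$. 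Redoing the net over the enlarged $(k{+}1)$-layer class would make the support size depend on itself and become circular.

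The paper sidesteps all of this by noting that the point-mass hypothesis together with concavity of $\phi$ forces the game value to be exactly $2\phi(1/2)$, and that the \emph{constant} discriminator $D\equiv 1/2$ attains $F(G',D)=2\phi(1/2)$ for \emph{every} generator $G'$. Hence $D\equiv 1/2$ is already a pure strategy at the equilibrium value, the generator's non-improvement condition holds with slack zero, and no sampling, averaging, or extra layer is needed on the discriminator side at all. You identified this fact when pinning down the value, but you used it only as a side calculation rather than as the choice of $D$ itself; promoting it to the actual pure discriminator is what makes the proof close.
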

\begin{proof}
The proof can be found in Appendix B, Section~\ref{sec:arorapf}.
\end{proof}

We have now seen how, when we consider GANs \textit{qua} strategic zero-sum two-player games, we are able to establish the existence of approximate solutions to a broad variety of GAN games even when taking into account the limitations of the generator and discriminator functions.

Of course, non-zero-sum games such as the NS-GAN are not covered by this result, which only applies when the generator and discriminator are optimising for the same value function. However, as shown previously, Glicksberg's Theorem guarantees the existence of mixed strategy Nash equilibria for such a game, given the plausible assumption that the parameter spaces of the neural network are compact. It is an open problem whether the procedure for producing a single neural network that simulates a `mixed strategy' can be extended to guarantee the existence of pure equilibria in non-zero-sum games, or whether we could get a realistic bound for the support size of such a mixture in exchange for settling for an $\varepsilon$-approximate equilibrium.

\section{Frontiers of Research}

In this final section, I will briefly review some of the ongoing research in game theory that is applicable to better understanding the dynamics and possible solutions of GAN games.

\subsection{Existence does not Guarantee Convergence}
As \citet[p. 15]{arora2017generalization} observe, in the practical cases in which $V(\cdot, \cdot)$ is not convex-concave, the mere existence of an equilibrium does not guarantee that a simple algorithm like gradient descent will converge to it. The following is a pathological example.\footnote{For further examples, refer to \citet[Appendix C]{arora2017generalization} Moreover, \citet{mertikopoulos2018cycles} proves the existence of cycling behaviour for two players adopting another approach for finding the solution to a zero-sum game.} for a non-GAN zero-sum game that fails to converge to its equilibrium using gradient descent.

\begin{cycles}[\citet{goodfellow2016nips}, Sections 7.2 and 8.2]\label{ex:cycles}
Consider a zero-sum game with two players that each control a single scalar value. The minimising player controls scalar $x$, the maximising player controls scalar $y$, and the value function for the game is 
\begin{equation}
V(x,y) = xy.
\end{equation}
By solving $\partial_x V(x,y)=0$ and $\partial_y V(x,y)=0$, we can establish that there is a saddle point at $x=y=0$. Moreover, this saddle point is a Nash equilibrium: if the minimising player fixes $x=0$, the maximising player cannot attain a better score than at $y=0$, and vice versa.

Now, suppose the players were to learn this equilibrium via SimGD. To simplify the problem, we imagine that gradient descent is a continuous time process with an infinitesimal learning rate, so that the SimGD is described by the following system of partial differential equations:
\begin{align}
\frac{\partial x}{\partial t} = -&\frac{\partial}{\partial x} V(x(t),y(t)), \\
\frac{\partial y}{\partial t} = &\frac{\partial}{\partial y} V(x(t),y(t)).
\end{align}
Clearly, these evaluate to 
\begin{align}
\frac{\partial x}{\partial t} &= -y(t), \\
\frac{\partial y}{\partial t} &= x(t).
\end{align}
Differentiating (4.4.5) yields
\begin{equation}
\frac{\partial^2y}{\partial t^2} = \frac{\partial x}{\partial t} = -y(t).
\end{equation}
This differential equation has the solution
\begin{align}
x(t) &= x(0)\cos(t) -y(0)\sin(t) \\
y(t) &= x(0)\sin(t) +y(0)\cos(t).
\end{align}

These dynamics form a circular orbit, so that SimGD with an infinitesimal learning rate will orbit around the equilibrium forever, at the same radius that it was initialised. Moreover, a larger learning rate can overshoot, causing SimGD to spiral outward forever. Hence, unless the players find themselves at the equilibrium \textit{upon initialisation}, they will never approach the equilibrium using SimGD.
\end{cycles}

\subsection{Convergence to Other Types of Equilibria}

Some recent work has established the existence of other types of equilibria that a GAN will be guaranteed to converge to, or other types of GAN that are guaranteed to converge to an equilibrium.

As an example of the former, \citet{hazan2017efficient} defines a natural notion of \textit{regret} for the players of a GAN game, and gives gradient-based methods that guarantee convergence to a newly-defined notion of local, regret-based equilibrium. Though the paper specifically refers to MM-GAN games, the result can in fact be generalised to any GAN game whose loss functions are bounded, Lipschitz, and have Lipschitz gradients.

It remains to be seen, however, whether a generator-discriminator pair that attains such an equilibrium produces samples that synthesise $\rdist$ well. Could there be a relation that holds between $\rdist$ and $\gdist$ when an equilibrium is reached? Recall, in the case of the IMM-GAN, the minimax solution coincided with the two distributions in fact being equal.

As an example of the latter, \citet{unterthiner2017coulomb} introduces a new version of the GAN problem which treats the data samples as charged particles on a potential field. This GAN model possesses only one Nash equilibrium, which is optimal in that we get $\rdist=\gdist$.

However, this approach suffers from the drawback that it requires the discriminator to learn slow enough to accurately estimate the potential function induced by the generator, and that the generator must in turn learn even more slowly.\footnote{\citet[p. 7]{unterthiner2017coulomb}.} Moreover, the samples it produces for some standard datasets are not particularly visually appealing (see Figure~\ref{fig:coulomb_samples}).

\begin{figure}
	\includegraphics[width=\linewidth]{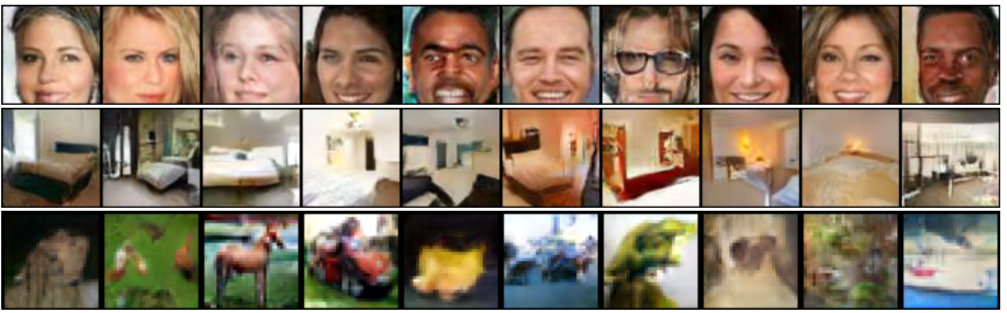}
	\caption{Images sampled from a Coulomb GAN after training on the CelebA (first row), LSUN bedrooms (second row), and CIFAR 10 (last row) datasets. Image taken from \citet{unterthiner2017coulomb}.}
	\label{fig:coulomb_samples}
\end{figure}

\chapter{Conclusion}
The literature on GANs is still in a very early stage, with the vast majority of papers developing their theory having been published within the last 18 months. As we have seen, there is still some way to go in understanding GAN training: what it means for GANs to perform well, and what guarantees there are on whether a GAN will perform well.

Nonetheless, both the topological and game-theoretic perspectives on GANs allow for us to propose versions of GANs that avoid the immediate pitfalls like mode collapse. Moreover, they allow us to make use of the theoretical insights from these areas: for instance, the Wasserstein GAN is inspired by previous research in optimal transport.

There is still plenty of work to be done on GANs: the game-theoretic approach seems particularly promising and under-explored. It seems, moreover, that any progress made with training GANs using this approach would require tools in algorithmic game theory that would have applications elsewhere in the field.

\appendix
\chapter{Omitted Proofs from Chapter 3}
\begin{proof}[Proof of Theorem~\ref{strengthcomp}, Part 1]
\hfill
\begin{itemize}
\item[($\Rightarrow$)]
Let $\mu_m$ be the mixture distribution $\mu_m = \frac{1}{2}(\mu_n+\mu)$, so $\mu_m$ depends on $n$. For any signed measure $\nu$, define $\tvnorm{\nu} := \sup_{A\subseteq\mathcal{X}}\lvert\nu(A)\rvert$ for all Borel sets $A$. Then
\begin{align*}
\dive{TV}{\mu_m}{\mu_n} &= \tvnorm{\mu_m-\mu_n} \\
&= \frac{1}{2}\tvnorm{\mu-\mu_n} \\
&= \frac{1}{2}\dive{TV}{\mu_n}{\mu}\le\dive{TV}{\mu_n}{\mu}.
\end{align*}
Hence, $\dive{TV}{\mu_m}{\mu_n}$ tends to 0 if $\dive{TV}{\mu_n}{\mu}$ tends to 0.

Now let $f_n = \frac{\mathrm{d}\mu_n}{\mathrm{d}\mu_m}$ be the Radon-Nykodim derivative between $\mu_n$ and $\mu_m$. By construction, we have for every Borel set $A$ that $\mu_n(A)\le2\mu_m(A)$. Picking $A=\{f_n>3\}$, we get \[
\mu_n(A) = \intoverofwrt{A}{f_n}{\mu_m}\ge3\mu_m(A),
\]
and so by these two inequalities, $\mu_m(A)=0$. Hence $f_n\le3$ almost everywhere, with respect to the measures $\mu_m, \mu_n$, and $\mu$.

Now fix $\varepsilon>0$, and $A_n = \{f_n>1+\varepsilon\}$. Then \[
\mu_n(A_n) = \intoverofwrt{A_n}{f_n}{\mu_m}\ge(1+\varepsilon)\mu_m(A_n).
\]

Hence
\begin{align*}
\varepsilon\mu_m(A_n) &\le \mu_n(A_n)-\mu_m(A_n) \\
&\le \lvert \mu_n(A_n)-\mu_m(A_n)\rvert \\
&\le \dive{TV}{\mu_n}{\mu_m} \\
&\le \dive{TV}{\mu_n}{\mu}.
\end{align*}

Furthermore,
\begin{align*}
\mu_n(A_n) &\le \mu_m(A_n)+\lvert\mu_n(A_n)-\mu_m(A_n)\rvert \\
&\le \frac{1}{\varepsilon}\dive{TV}{\mu_n}{\mu} +\dive{TV}{\mu_n}{\mu_m} \\
&\le \left(\frac{1}{\varepsilon}+1\right)\dive{TV}{\mu_n}{\mu}.
\end{align*}

Therefore,
\begin{align*}
\dive{KL}{\mu_n}{\mu_m} &= \intoverofwrt{\mathcal{X}}{\log(f_n)}{\mu_n} \\
&\le\log(1+\varepsilon)+\intoverofwrt{A_n}{\log(f_n)}{\mu_n} \\
&\le\log(1+\varepsilon)+\log(3)\mu_n(A_n) \\
&\le\log(1+\varepsilon)+\log(3)\left(\frac{1}{\varepsilon}+1\right)\dive{TV}{\mu_n}{\mu}.
\end{align*}

By taking $\limsup$ on both sides, we get that \[0\le\limsup\dive{KL}{\mu_n}{\mu_m}\le\log(1+\varepsilon)\] for all $\varepsilon>0$, and so $\dive{KL}{\mu_n}{\mu_m}\to0$.

Likewise, we define $g_n = \frac{\mathrm{d}\mu}{\mathrm{d}\mu_m}$ and $B_n = \{g_n>1+\varepsilon\}$, showing \textit{mutatis mutandis} that $\dive{KL}{\mu}{\mu_m}\to0$. From this, we conclude that \[
\dive{JS}{\mu_n}{\mu} = \frac{1}{2}\bigl(\dive{KL}{\mu_n}{\mu_m} + \dive{KL}{\mu}{\mu_m}\bigr) \to0.
\]

\item[($\Leftarrow$)] Using the triangle and Pinsker's inequalities, we get
\begin{align*}
\dive{TV}{\mu_n}{\mu} &\le \dive{TV}{\mu_n}{\mu_m}+\dive{TV}{\mu}{\mu_m} \\
&\le \sqrt{\frac{1}{2}\dive{KL}{\mu_n}{\mu_m}} + \sqrt{\frac{1}{2}\dive{KL}{\mu}{\mu_m}} \\
&\le 2\sqrt{\frac{1}{2}\dive{JS}{\mu_n}{\mu}} \to0.
\end{align*}

\end{itemize}

\end{proof}

\begin{proof}[Proof of Theorem~\ref{thm:princproc}]
Define
\begin{align*}
V(\tilde{f},\theta) &:= \ev{x}{\rdist}[\tilde{f}(x)]-\ev{x}{g_\theta}[\tilde{f}(x)] \\
&= \ev{x}{\rdist}[\tilde{f}(x)]-\ev{z}{\ndist}[\tilde{f}(g_\theta(x))],
\end{align*}
where $\tilde{f}\in\mathcal{F} = \text{{\normalfont Lip}}_1(\mathcal{X})$ and $\theta\in\mathbb{R}^d$.

Since $\mathcal{X}$ is compact, we know by the Kantorovich-Rubinstein duality (Theorem~\ref{krdualthm}) that there is an $f\in\mathcal{F}$ such that
\begin{equation}\label{eq:princproc}
\dive{W}{\rdist}{g_\theta} = \sup_{\tilde{f}\in\mathcal{F}} V(\tilde{f}, \theta) = V(f,\theta).
\end{equation}

Now define $X^*(\theta)$ to be the set of $f$ such that \eqref{eq:princproc} holds. The duality tells us that this set is non-empty. Moreover, by an envelope theorem \citep[Theorem 1]{milgrom2002envelope} we have
\begin{equation}
\nabla_\theta \dive{W}{\rdist}{g_\theta} = \nabla_\theta V(f,\theta)
\end{equation}
for any $f\in X^*(\theta)$, when both terms are well-defined.

Now take an $f\in X^*(\theta)$. Then, when the first two terms are well-defined,
\begin{align*}
\nabla_\theta \dive{W}{\rdist}{g_\theta} &= \nabla_\theta V(f,\theta) \\
&= \nabla_\theta [\ev{x}{\rdist}[f(x)]-\ev{z}{\ndist}[f(g_\theta(x))]] \\
&= -\nabla_\theta \ev{z}{\ndist}[f(g_\theta(x))].
\end{align*}
The technical remainder of this proof is to show that 
\begin{equation}\label{eq:princproctech}
-\nabla_\theta \ev{z}{\ndist}[f(g_\theta(x))] = -\ev{z}{\ndist}[\nabla_\theta f(g_\theta(x))].
\end{equation}

Since $f\in\mathcal{F}$, $f$ is 1-Lipschitz. Moreover, $f(g_\theta(z))$ is locally Lipschitz on $(\theta, z)$ with constants $L(\theta, z)$ given by the assumption on $g$. Hence, by Rademacher's Theorem \citep[Theorem 3.1.6]{federer2014geometric}, $f(g_\theta(z))$ is differentiable almost everywhere for $(\theta, z)$ jointly. In other words, the set $A = \{(\theta, z)\mid f\circ g\text{ is not differentiable}\}$ has measure $0$.

By Fubini's Theorem, this implies that for almost every $\theta$, the section $A_\theta = \{z\mid (\theta, z)\in A\}$ has measure $0$. Fix some $\theta_0$ such that the measure of $A_{\theta_0}$ is null, and the RHS of equation~\eqref{eq:princproctech} is well-defined. For this $\theta_0$, we have that $\nabla_\theta f(g_\theta(z))|_{\theta_0}$ is well-defined for almost any $z$, and $\ndist$-almost everywhere.

By our Lipschitz assumption, we know that 
\begin{equation}
\ev{z}{\ndist}[\lVert\nabla_\theta f(g_\theta(z))|_{\theta_0}\rVert]\le\ev{z}{\ndist}[L(\theta_0, z)]<+\infty,
\end{equation}
so $\ev{z}{\ndist}[\nabla_\theta f(g_\theta(z))|_{\theta_0}]$ is well-defined for almost every $\theta_0$. Therefore,
\begin{multline}\label{eq:princproc2}
\frac{\ev{z}{\ndist}[f(g_\theta(z))]-\ev{z}{\ndist}[f(g_{\theta_0}(z))]-\bigl\langle(\theta-\theta_0,\ev{z}{\ndist}[\nabla_\theta f(g_\theta(z))|_{\theta_0}]\bigr\rangle}{\lVert\theta-\theta_0\rVert} \\
= \ev{z}{\ndist}\left[\frac{f(g_\theta(z))-f(g_{\theta_0}(z))-\bigl\langle(\theta-\theta_0,\nabla_\theta f(g_\theta(z))|_{\theta_0}\bigr\rangle}{\lVert\theta-\theta_0\rVert}\right].
\end{multline}

By the differentiability of $f\circ g$, the term inside the integral converges $\ndist$-almost everywhere to 0 as $\theta\to\theta_0$. Moreover,
\begin{multline}
\biggl\lVert\frac{f(g_\theta(z))-f(g_{\theta_0}(z))-\bigl\langle(\theta-\theta_0,\nabla_\theta f(g_\theta(z))|_{\theta_0}\bigr\rangle}{\lVert\theta-\theta_0\rVert}\biggr\rVert \\
\le \frac{\lVert\theta-\theta_0\rVert L(\theta_0, z)+\lVert\theta-\theta_0\rVert \cdot \lVert\nabla_\theta f(g_\theta(z))|_{\theta_0}\rVert}{\lVert\theta-\theta_0\rVert} \\
\le 2L(\theta_0,z).
\end{multline}
Furthermore, since $\ev{z}{\ndist}[2L(\theta_0,z)]<+\infty$ by the Lipschitz  assumption, by dominated convergence equation~\ref{eq:princproc2} tends to 0 as $\theta\to\theta_0$, so that
\begin{equation}
\nabla_\theta \ev{z}{\ndist}[f(g_\theta(x))] = \ev{z}{\ndist}[\nabla_\theta f(g_\theta(x))]
\end{equation}
for almost every $\theta$, and in particular when the RHS is well-defined. The LHS is also proven to exist simultaneously.
\end{proof}

\chapter{Omitted Proofs from Chapter 4}
\section{Proof of Theorem~\ref{glickthm} (Glicksberg's Theorem)}\label{sec:glickpf}

This proof is as in \citet[Lecture 6]{ozdaglar2010game}. We first give a generalised definition of a continuous game.

\begin{ctsgame}
A \textbf{continuous game} is a game $\langle\mathcal{I}, (S_i),(u_i)\rangle$ where $\mathcal{I}$ is a finite set, the $S_i$ are non-empty compact metric spaces, and the $u_i\colon S\to\mathbb{R}$ are continuous functions valued on $S = \times_{i\in\mathcal{I}} S_i$.
\end{ctsgame}

Let $u = (u_1,...,u_I)$ and $\tilde{u} = (\tilde{u}_1,...,\tilde{u}_I)$ be two profiles of utility functions ($\lvert\mathcal{I}\rvert=I$) defined on $S$ such that, for each $i\in\mathcal{I}$, the functions $u_i, \tilde{u}_i$ are bounded and measurable. We can define the distance between the utility function profiles $u$ and $\tilde{u}$ as
\begin{equation}
\max_{i\in\mathcal{I}}\sup_{s\in S}\lvert u_i(s)-\tilde{u}_i(s)\rvert.
\end{equation}

Clearly, this distance is symmetric, obeys the triangle inequality, and is positive definite. Let $G = \langle\mathcal{I}, (S_i),(u_i)\rangle$ and $\tilde{G} = \langle\mathcal{I}, (S_i),(\tilde{u}_i)\rangle $ be two games, and let $\sigma$ be an equilibrium of $G$. We can show that, if $u$ and $\tilde{u}$ are close with respect to the distance given above, $\sigma$ is an $\varepsilon$-equilibrium of $\tilde{G}$. Here, the definition of a $\varepsilon$-equilibrium for $G$ generalises the definition given for a two-player game (Definition~\ref{def:epsiloneq}).

\begin{glickprop1}\label{prop:glick1}
Let $G$ be a continuous game. Assume that $\sigma^k\to\sigma$ and $\varepsilon^k\to\varepsilon$ as $k\to\infty$, where for each $k$, $\sigma^k$ is an $\varepsilon^k$-equilibrium of $G$. Then $\sigma$ is an $\varepsilon$-equilibrium of $G$.
\end{glickprop1}
\begin{proof}
We have by definition
\begin{equation}\label{eq:prop1}
u_i(s_i,\sigma^k_{-i})\le u_i(\sigma^k) +\varepsilon^k \quad \forall i\in\mathcal{I}, \ \forall s_i\in S_i.
\end{equation}
Taking the limit as $k\to\infty$ in \eqref{eq:prop1}, and using the continuity of the utility functions together with the convergence of the mixture distributions under the weak topology, we obtain
\begin{equation}
u_i(s_i,\sigma_{-i})\le u_i(\sigma)+\varepsilon \quad \forall i\in\mathcal{I}, \ \forall s_i\in S_i.
\end{equation}
Hence $\sigma$ is an $\varepsilon$-equilibrium of $G$.
\end{proof}

The next step is to define a notion of closeness of two strategic games, given the distance of utility function.

\begin{glickdef1}
Let $G = \langle\mathcal{I}, (S_i),(u_i)\rangle$ and $G' = \langle\mathcal{I}, (S_i),(u'_i)\rangle$ be two strategic games. We say that $G'$ is an $\alpha$-\textbf{approximation to} $G$ if for all $i\in\mathcal{I}$ and $s\in S$, we have
\begin{equation}
\lvert u_i(s) - u'_i(s)\rvert\le\alpha.
\end{equation}
\end{glickdef1}

\begin{glickprop2}\label{prop:glick2}
If $G'$ is an $\alpha$-approximation to $G$ and $\sigma$ is an $\varepsilon$-equilibrium of $G'$, then $\sigma$ is an $(\varepsilon+2\alpha)$-equilibrium of $G$.
\end{glickprop2}
\begin{proof}
For all $i\in\mathcal{I}$ and $s_i\in S_i$, we have
\begin{align*}
u_i(s_i,\sigma_{-i})-u_i(\sigma) &= \bigl(u_i(s_i,\sigma_{-i})-u'_i(s_i,\sigma_{-i})\bigr) \\
&\quad +\bigl(u'_i(s_i,\sigma_{-i}) -u'_i(\sigma)\bigr) \\
&\quad +\bigl(u'_i(\sigma)-u_i(\sigma)\bigr) \\
&\le  \alpha+\varepsilon+\alpha = \varepsilon+2\alpha.
\end{align*}
\end{proof}

The next proposition gives us the `discretisation' necessary to approximate our continuous game to an arbitrary degree of accuracy.

\begin{glickprop3}\label{prop:glick3}
For any continuous game $G$ and any $\alpha>0$, there exists an `essentially finite' game which is an $\alpha$-approximation to $G$.
\end{glickprop3}
\begin{proof}
Since $S$ is a compact metric space with metric $d$, the utility functions $u_i$ are uniformly continuous. That is, for all $\alpha>0$, there exists some $\varepsilon>0$ such that whenever $d(s,t)\le\varepsilon$,
\begin{equation}
\lvert u_i(s)-u_i(t)\rvert\le\alpha.
\end{equation}
Moreover, since $S_i$ is compact, it can be covered with finitely many open balls $U_i^j$, each with radius less than $\varepsilon$. We assume without loss of generality that these balls are disjoint and non-empty.

Now pick some $s_i^j\in U_i^j$ for each $i, j$. Then we define our `essentially finite' game $G'$ with utility functions $u'_i$ to be given by
\begin{equation}
u'_i(s) = u_i(s_1^j,...,s_I^j), \quad \forall s\in U^j = \times_{k=1}^I U_k^j.
\end{equation}
This game is `essentially finite', in that while the utility functions are valued on all of $S$, they nonetheless attain only finitely many values.

Then, for all $s\in S$ and all $i\in\mathcal{I}$, we have by uniform continuity that
\begin{equation}
\lvert u'_i(s)-u_i(s)\rvert\le\alpha,
\end{equation}
since $d(s, s^j)\le\varepsilon$ for all $j$. This gives us the desired result.
\end{proof}

We can now prove Glicksberg's Theorem.

\begin{proof}[Proof of Theorem~\ref{glickthm}]
Let $(\alpha^k)$ be a sequence such that $\alpha^k\to0$ as $k\to\infty$. By Lemma~\ref{prop:glick3}, there exists for each $\alpha^k$ an $\alpha^k$-approximation $G^k$ of $G$. 

Since each $G^k$ is `essentially finite' for each $k$, we can use Nash's existence theorem to guarantee the existence of a Nash equilibrium, equivalently a 0-approximate Nash equilibrium. Denote this by $\sigma^k$. Then, by Lemma~\ref{prop:glick2}, $\sigma^k$ is a $2\alpha^k$-equilibrium of $G$. 

Since $S$ is compact, the space of mixed distributions $\Delta(S)$ is compact, so $\{\sigma^k\}$ has a convergent subsequence. Without loss of generality, assume that $\sigma^k\to\sigma$. Since $2\alpha^k\to0$ and $\sigma^k\to\sigma$ as $k\to\infty$, it follows by Lemma~\ref{prop:glick1} that $\sigma$ is a 0-approximate equilibrium, hence a Nash equilibrium, of $G$.
\end{proof}

\section{Proof of Theorem~\ref{arorapureeq}}\label{sec:arorapf}

We must first show that there is a finite mixture of generators and discriminators that can approximate the equilibrium that exists for infinite mixtures. 

Suppose $\phi$ is an $L_{\phi}$-Lipschitz concave measuring function bounded in $[-\Delta,\Delta]$. Let $\mathcal{U}$, $\mathcal{V}\subseteq\mathbb{R}^p$ be the (compact) parameter spaces of the generator and discriminator, respectively. Suppose the generator and discriminators are $L$-Lipschitz with respect to the parameters and $L'$-Lipschitz with respect to inputs. Suppose further the value function $F$ for the minimax game is given by the neural $\mathcal{F}$-divergence\footnote{Here, $\mathcal{F} = \{D_v\mid v\in\mathcal{V}\}$.} with respect to $\phi$:
\begin{equation}
F(u, v) = \ev{x}{\rdist}[\phi(D_v(x))]+\ev{z}{\ndist}[\phi(1-D_v(G_u(z))].
\end{equation}

Furthermore, we suppose the generator can approximate any point mass: that is, for all points $x$ and any $\varepsilon>0$, there is a generator such that $\ev{z}{\ndist}[\lVert G(z)-x \rVert]\le\varepsilon$.

Note: all of the proofs within this section are as in \citet[Appendix B]{arora2017generalization}.

\begin{aroralemma1}[\citet{arora2017generalization}, Theorem 4.2]\label{al1}
In the above setting, there exists a universal constant $C>0$ such that for any $\varepsilon>0$, there exists \[ T = \frac{C\Delta^2 p\log(L L' L_\phi p/\varepsilon)}{\varepsilon^2} \] generators $G_{u_1}, ..., G_{u_T}$ such that, if $\mathcal{S}_u$ is the uniform distribution on $u_i$, and $D$ is a discriminator that outputs only 1/2, then $(\mathcal{S}_u, D)$ is an $\varepsilon$-approximate Nash equilibrium.
\end{aroralemma1}
\begin{proof}
Let $V$ be the value of the associated zero-sum game. One strategy of the discriminator is to just output 1/2. Since this strategy has payoff $2\phi(1/2)$ no matter what the generator does, it follows that $V\ge2\phi(1/2)$.

By assumption, for any point $x$ and any $\varepsilon>0$, there is a generator $G_{x,\varepsilon}$ such that $\ev{z}{\ndist}[\lVert G_{x,\varepsilon}(z)-x \rVert]\le\varepsilon$. For any $\zeta>0$, we can take a sample $x\sim\rdist$, and use the generator $G_{x,\zeta}$. Let $p_{\zeta}$ be the distribution generated by this mixture of generators. By the point mass approximation property of the generators, $\dive{W}{\rdist}{p_{\zeta}}\le\zeta$. Combining this with the fact that the discriminator is $L'$-Lipschitz and that $\phi$ is $L_\phi$-Lipschitz, it holds for any discriminator $D_v$ that 
\begin{equation}
\lvert \ev{x}{\rdist}[\phi(1-D(x))]-\ev{x}{p_{\zeta}}[\phi(1-D(x))]\rvert \le O(L_\phi L'\zeta).
\end{equation}
Therefore,
\begin{align*}
&\max_{v\in\mathcal{V}}\biggl( \ev{x}{\rdist}[\phi(D(x))]+\ev{x}{p_{\zeta}}[\phi(1-D(x))] \biggr) \\
&\le \max_{v\in\mathcal{V}}\biggl(\ev{x}{\rdist}[\phi(D(x))+\phi(1-D(x))]\biggr) + O(L_\phi L'\zeta) \\
&\le 2\phi(1/2) + O(L_\phi L'\zeta).
\end{align*}
Here, the last step uses the assumption that $\phi$ is concave. Hence $V\le2\phi(1/2)+O(L_\phi L'\zeta)$ for any $\zeta$. Letting $\zeta\to 0$, we get that $V = 2\phi(1/2)$.

Now let $(\mathcal{S}'_u, \mathcal{S}'_v)$ be the pair of optimal mixed strategies as given by Glicksberg's Theorem (Theorem~\ref{glickthm}), and let $V$ be the optimal value. We will show that, by randomly sampling $T$ generators from $\mathcal{S}'_u$, we get the desired mixture with high probability.

First, we construct $(\frac{\varepsilon}{4LL'L_\phi})$-nets for the discriminator parameter space $\mathcal{V}$. Let $A$ be the set of centres for such nets. Since $\mathcal{V}$ is compact, $A$ is finite. Moreover, by a standard construction of such nets, we have that \[ \log(\lvert A\rvert)\le C'n\log(LL'L_\phi \cdot p/\varepsilon) \] for some constant $C'$. Let $u_1, ..., u_T$ be independent samples from $\mathcal{S}'_u$. By the Chernoff bound, for any net centre $a\in A$,
\begin{equation}
\mathbb{P}\biggl(\mathbb{E}_{i\in[T]}[F(u_i,a)] \ge \mathbb{E}_{u\in\mathcal{U}}[F(u,a)]+\varepsilon/2 \biggr) \le \exp\left(-\frac{\varepsilon^2T}{2\Delta^2}\right).
\end{equation}

Therefore, when $T=\frac{C\Delta^2 p\log(L L' L_\phi p/\varepsilon)}{\varepsilon^2}$ and the constant $C$ is greater than $2C'$, then with high probability this inequality is true for all $a\in A$. For any $v\in\mathcal{V}$, let $a'$ be the closest point in the net, so by construction $\lVert v-a'\rVert\le \frac{\varepsilon}{4LL'L_\phi}$. Using this, one can derive straightforwardly that $F(u,v)$ is $(2LL'L_\phi)$-Lipschitz in both $u$ and $v$, so that
\begin{equation}
\mathbb{E}_{i\in[T]}[F(u_i, a')]\le\mathbb{E}_{i\in[T]}[F(u_i, v)]+\varepsilon/2.
\end{equation}
We then get for any $v'\in\mathcal{V}$ that
\begin{equation}
\mathbb{E}_{i\in[T]}[F(u_i, v')]\le2\phi(1/2)+\varepsilon.
\end{equation}
Therefore, this mixture of generators can win against any discriminator, and by a probabilistic argument, there must exist such generators. Since the discriminator given by constant 1/2 can achieve the value $V$ regardless of the generator value, we get an approximate equilibrium.
\end{proof}

Given the existence of the approximate equilibrium, the next step to prove the existence of an approximate \textit{pure} equilibrium is to construct a single generator that can approximately generate the mixture distribution of generators. To do so, we pass our noise input $h\sim\ndist$ through all the generators $G_{u_1},...,G_{u_T}$ and implement a `multi-way selector' to select a uniformly random output from $G_{u_i}(h)$, where $i\in[T]$.

First, we show how to compute a step function using a two-layer neural network.

\begin{aroralemma2}[\citet{arora2017generalization}, Lemma 3]\label{al2}
Let $q$ be a positive integer and $z_1<...<z_q$. For any $0<\delta<\min_{i\in[q-1]}(z_{i+1}-z_i)$, there is a two-layer neural network with single input $h\in\mathbb{R}$ that outputs $q+1$ numbers $x_1,...,x_{q+1}$ such that
\begin{enumerate}
\item $\sum_{i=1}^{q+1}x_i=1$ for all $h$;
\item when $h\in[z_{i-1}+\delta/2, z_i-\delta/2]$, we have that $x_i=1$ and all the other $x_j$ are $0$. When $h\le z_1-\delta/2$ only $x_1$ is 1, and when $h\ge z_q+\delta/2$ only $x_{q+1}=1$.
\end{enumerate}
\end{aroralemma2}
\begin{proof}
Using a two-layer neural network with ReLU activation functions, we can compute the function 
\begin{equation}
f_i(h) = \max\left\{\frac{h-z_i-\delta/2}{\delta}, 0\right\} - \max\left\{\frac{h-z_i+\delta/2}{\delta}, 0\right\}.
\end{equation}
This function evaluates to 0 for all $h<z_i-\delta/2$, and 1 for all $h\ge z_i+\delta/2$, and changes linearly in-between. Writing
\begin{align*}
x_1 &= 1-f_1(h) \\
x_{q+1} &= f_q(h) \\
x_q &= f_i(h)-f_{i-1}(h) \quad \forall i\in\{2,3,...,q\},
\end{align*}
we see that these functions satisfy (1)-(2).
\end{proof}

Using these step functions, we can design the multi-way selector.

\begin{aroralemma3}[\citet{arora2017generalization}, Lemma 4]\label{al3}
In the setting above, suppose the generator and discriminator are both $k$-layer neural networks (where $k\ge2$), and the last layer uses the ReLU activation function. Then there is a $(k+1)$-layer neural network with $O\left(\frac{\Delta^2p^2\log(LL'L_\phi\cdot p/\varepsilon)}{\varepsilon^2}\right)$ parameters that can generate a distribution within $\delta$ TV distance of the mixture of $G_{u_1}, ..., G_{u_T}$.
\end{aroralemma3}
\begin{proof}[Proof Idea]
Since we have implemented step functions from Lemma~\ref{al2}, we can pass the input through all the generators $G_{u_1},...,G_{u_T}$. At the last layer of each $G_{u_i}$, we add a large multiple of $-(1-x_i)$, where $x_i$ is the $i$-th outputof the network in Lemma~\ref{al2}. Then, if $x_i=0$, this will effectively `de-activate' the network by bringing it below the threshold of the ReLU function. However, if $x_i=1$, this will have no effect. By Lemma~\ref{al2}, we know that most of the time only one of the $x_i$'s will be 1, so that only one generator is selected.
\end{proof}
\begin{proof}
Suppose the input for the generator is $(h_0, h)\sim\mathcal{N}(0,1)\times\ndist$, where samples are drawn independently. We will pass the input $h$ through the generators and get outputs $G_{u_i}(h)$, and then use $h_0$ to select one of these outputs as the `true' output.

Let $z_1, ..., z_{T-1}$ be real numbers that divide the probability density of a Gaussian into $T$ equal parts. Choose $\delta' = \delta/(100T)$ in Lemma~\ref{al2} to get a 2-layer neural net that computes step functions $x_1,...,x_T$. Then the probability that $(x_1,...,x_T)$ has more than one non-zero entry is smaller than $\delta$. Now, for the output of $G_{u_i}(h)$, in each output ReLU gate, add a multiple of $-(1-x_i)$ that is larger than than the maximum possible output. Therefore, when $x_i=0$, the result before the ReLU will be negative and so the output will be `disabled', and when $x_i=1$ the output will be preserved. Call this modified network $\hat{G}_{u_i}$. Then $\hat{G}_{u_i}=G_{u_i}$ when $x_i=1$ and $\hat{G}_{u_i}=0$ when $x_i=0$. Now add a layer that outputs the sum of $\hat{G}_{u_i}$.

By construction, when $(x_1,...,x_T)$ has only one non-zero entry, the network correctly outputs the corresponding $G_{u_i}(x_i)$. The probability that this happens is at least $1-\delta$, and so the TV distance with the mixture is bounded by $\delta$.
\end{proof}

\begin{arorathmrestated}[Theorem~\ref{arorapureeq} restated]
In the setting above, there exists $(k+1)$-layer neural networks of generators $G$ and discriminator $D$ with $O\left(\frac{\Delta^2p^2\log(LL'L_\phi\cdot p/\varepsilon)}{\varepsilon^2}\right)$ parameters, such that there exists an $\varepsilon$-approximate pure equilibrium with value $2\phi(1/2)$ to the game induced by $d_{\mathcal{F}, \phi}(\rdist \| \gdist)$.
\end{arorathmrestated}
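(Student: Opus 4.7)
The plan is to combine the three preceding lemmas, with Lemmas~\ref{al1} and~\ref{al3} carrying essentially all of the content. Lemma~\ref{al1} produces a uniform mixture $\mathcal{S}_u$ over $T = O(\Delta^2 p \log(LL'L_\phi p/\varepsilon)/\varepsilon^2)$ carefully chosen generators $G_{u_1}, \ldots, G_{u_T}$ such that $(\mathcal{S}_u, D_{1/2})$ --- where $D_{1/2}$ is the constant-$1/2$ discriminator --- is an approximate mixed Nash equilibrium of value $2\phi(1/2)$. Lemma~\ref{al3} shows how to simulate such a uniform mixture of identical-architecture generators by a single $(k+1)$-layer neural network. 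The task is then to assemble these ingredients and verify that the resulting single-network generator, paired with a neural network implementing the constant-$1/2$ function, is a \emph{pure} $\varepsilon$-approximate equilibrium with the claimed parameter count.

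Concretely, I would first invoke Lemma~\ref{al1} at tolerance $\varepsilon/2$, obtaining generators $G_{u_1}, \ldots, G_{u_T}$ of the original architecture with $T = O(\Delta^2 p \log(LL'L_\phi p/\varepsilon)/\varepsilon^2)$ and the guarantee $F(\mathcal{S}_u, D_v) \le 2\phi(1/2) + \varepsilon/2$ for every discriminator $D_v$ in the class. I would then apply Lemma~\ref{al3} with TV parameter $\delta := \varepsilon/(4\Delta)$ to construct the single-network generator $G^*$, whose output distribution $p_{G^*}$ lies within $\delta$ TV distance of the mixture distribution $p_{\mathcal{S}_u}$. The total parameter count is dominated by running $T$ parallel copies of the base architecture, yielding $O(pT) = O(\Delta^2 p^2 \log(LL'L_\phi p/\varepsilon)/\varepsilon^2)$; the two-layer selector of Lemma~\ref{al2} and the extra ReLU de-activation layer contribute only $O(T)$ further parameters.

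Finally, I would verify the two pure-equilibrium conditions. The generator side is immediate: against $D_{1/2}$, the payoff equals $\ev{x}{\rdist}[\phi(1/2)] + \ev{z}{\ndist}[\phi(1/2)] = 2\phi(1/2)$ regardless of the generator, so no alternative generator can strictly improve. For the discriminator side, the payoff depends on the generator only through the term $\ev{y}{p_G}[\phi(1-D_v(y))]$, whose integrand is bounded in absolute value by $\Delta$, so the standard coupling bound gives
\[ |F(G^*, D_v) - F(\mathcal{S}_u, D_v)| \le 2\Delta \cdot \dive{TV}{p_{G^*}}{p_{\mathcal{S}_u}} \le 2\Delta\delta = \varepsilon/2 \]
uniformly in $v$. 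Combining this with the mixed-equilibrium bound from Lemma~\ref{al1} yields $F(G^*, D_v) \le 2\phi(1/2) + \varepsilon$, which is the required pure-equilibrium condition. The genuinely hard step has already been absorbed into Lemma~\ref{al3} --- designing the selector gadget without blowing up the parameter count --- so what remains is essentially bookkeeping; the one subtle point requiring care is confirming that TV-closeness of the induced distributions is the correct mode of approximation, which works out because both expected-value terms of $F$ depend on $G$ only through $p_G$ against a bounded integrand.
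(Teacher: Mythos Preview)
Your proposal is correct and follows essentially the same route as the paper: invoke Lemma~\ref{al1} at tolerance $\varepsilon/2$, apply Lemma~\ref{al3} with $\delta=\varepsilon/(4\Delta)$, and combine the $\varepsilon/2$ mixed-equilibrium bound with the $2\Delta\delta$ TV perturbation bound to get the discriminator-side inequality, while the generator side is trivial because $D_{1/2}$ yields payoff $2\phi(1/2)$ against any generator. If anything, your write-up is slightly more explicit than the paper's in justifying the parameter count as $O(pT)$ and in separately checking both sides of the equilibrium.
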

\begin{proof}
Let $T$ be large enough so that there exists an $\varepsilon/2$-approximate mixed Nash equilibrium. Let the new set of generators be constructed as in Lemma~\ref{al3}, with $\delta\le\varepsilon/(4\Delta)$ and $G_{u_1},...,G_{u_T}$ as the original set of generators. Let $D$ be the discriminator that always outputs 1/2, and $G$ be the `multi-way selector' generator constructed by the $T$ generators from the approximate mixed equilibrium. Let $F^*(G, D)$ denote the value for the new two-player game.\footnote{The game is new, since the space $\mathcal{F}$ of neural nets now includes neural nets of $(k+1)$ layers instead of just $k$ layers.} For any discriminator $D_v$, we have
\begin{align*}
F^*(G, D_v) &\le \mathbb{E}_{i\in[T]}[F(u_i, v)]+\lvert F^*(G, D)-\mathbb{E}_{i\in[T]}F(u_i, v)\rvert \\
&\le V+\varepsilon/2+2\Delta\frac{\varepsilon}{4\Delta} \\
&\le V+\varepsilon.
\end{align*}
Here, the bound for the first term comes from Lemma~\ref{al1}, and the fact that the expectation is smaller than the maximum of expected values. The bound for the second term comes from the fact that changing a $\delta$ amount of probability mass can change the payoff $F$ by at most $2\Delta\delta$ (recalling that $\phi$ is bounded in $[-\Delta,\Delta]$). Therefore, the generator will still fool all discriminators, and we therefore get the required pure equilibrium.
\end{proof}


\bibliographystyle{abbrvnat}
\bibliography{dissbib}

\end{document}